\documentclass[final,runningheads,envcountsect,envcountsame]{llncs}

\newcommand\camerareadyversion[2]{#2}

\usepackage[T1]{fontenc}
\usepackage{graphicx} 
\usepackage{amsmath}
\usepackage{amssymb}
\usepackage{hyperref}
\usepackage{bm}
\usepackage{nicefrac}

\usepackage{xspace}
\usepackage{float}
\usepackage{enumitem}
\newcommand{\myparagraph}[1]{\smallskip\noindent \textit{#1}}
\usepackage{ marvosym }
\usepackage{ amssymb }
\usepackage{stmaryrd}
\usepackage{adjustbox}
\usepackage{booktabs}
\usepackage{pifont}
\usepackage{subcaption}
\usepackage{multirow}
\usepackage{microtype}
\usepackage[table]{xcolor}
\newcommand{\C}{\ensuremath{\mathcal{C}}\xspace}
\newcommand{\Sys}{\ensuremath{\mathcal{M}}\xspace}
\newcommand{\N}{\ensuremath{\mathcal{N}}\xspace}
\newcommand{\K}{\ensuremath{\mathcal{L}}\xspace}
\newcommand{\R}{\ensuremath{\mathcal{R}}\xspace}

\newcommand{\Obs}{\mathcal{T}}
\newcommand{\basis}{\ensuremath{B}}
\newcommand{\frontier}{\ensuremath{F}}
\newcommand{\Hyp}{\mathcal{H}}
\newcommand{\bigO}{\mathcal{O}}
\newcommand{\lsharp}{L^{\#}}
\newcommand{\alsharp}{AL^{\#}}
\newcommand{\soundlsharp}{L^{\#}_{\er,\mathtt{S}}}
\newcommand{\sclsharp}{L^{\#}_{\er,\mathtt{SC}}}
\newcommand{\lstar}{L^{\ast}}

\newcommand{\converges}{\ensuremath{\mathord{\downarrow}}}

\newcommand{\er}{\ensuremath{\mathsf{e}}\xspace}
\newcommand{\access}{\ensuremath{\mathsf{access}}}
\newcommand{\firsterror}[2]{\mathsf{first}_\mathsf{e}(#1, #2)}

\newcommand{\partialto}{\ensuremath{\rightharpoonup}}
\newcommand{\apart}{\ensuremath{\mathrel{\#}}}

\newcommand{\erlsharp}{\ensuremath{L^{\#}_{\er}}}
\newcommand{\erbasis}{\ensuremath{B}}
\newcommand{\erfrontier}{\ensuremath{F}}
\newcommand{\last}{\textsf{last}}
\newcommand{\sound}{\mathtt{S}}
\newcommand{\complete}{\mathtt{C}}
\newcommand{\moe}{\ensuremath{\mathsf{MoE}}}
\newcommand{\etal}{\emph{et al}.\xspace}

\newcommand{\accessT}{\ensuremath{\mathsf{access}^{\Obs}\hspace*{-0.05cm}}}
\newcommand{\matches}{\scalebox{0.9}{\ensuremath{{}\overset{\er}{=}{}}}}
\newcommand{\eralsharp}{\ensuremath{AL^{\#}_{\er}}}
\newcommand{\notmatches}{\scalebox{0.9}{\ensuremath{{}\overset{\er}{\neq}{}}}}

\usepackage{hyperref}
\usepackage{cleveref}
\Crefformat{figure}{#2Fig.~#1#3}
\Crefmultiformat{figure}{Figs.~#2#1#3}{ and~#2#1#3}{, #2#1#3}{ and~#2#1#3}

\usepackage{tikz}
\usepackage{pgfplots}
\pgfplotsset{width=10cm,compat=1.9}

\everymath=\expandafter{\the\everymath\displaystyle}

\usetikzlibrary{shapes.geometric, arrows}
\usetikzlibrary{automata,
                arrows.meta,
                chains,
                positioning,
                quotes,
                shapes.geometric,
                patterns,
                backgrounds}
\tikzset{
initial text=$ $,
treenode/.style = {align=center, inner sep=0pt, text centered},
knowledge/.style = {
    pattern=north east lines,
    pattern color=blue!70!black,
  },
hypothesis/.style = {
    pattern=north east lines,
    pattern color=orange!70!black,
  },
system/.style={
    pattern=north east lines,
    pattern color=black!40!white,
  },
basis/.style={
    pattern=north east lines,
    pattern color=magenta!70!black,
  },
frontier/.style={
    pattern=north east lines,
    pattern color=yellow!70!black,
  },
}
\usepackage[outline]{contour}
\contourlength{1.2pt}
\newcommand{\treeNodeLabel}[1]{\contour{white}{#1}}

\usepackage{algorithm}
\usepackage[noend]{algpseudocode}

\usepackage{mdframed}
\newenvironment{problemstatement}[1]
  {\mdfsetup{
    frametitle={\colorbox{white}{\space#1\space}},
    innertopmargin=-2pt,
    frametitleaboveskip=-\ht\strutbox,
    frametitlealignment=\center
    }
  \begin{mdframed}%
  }
  {\end{mdframed}}
  
\tikzset{
  do guard/.style={
    inner xsep=0pt,
  },
  guard line offset/.style={
    xshift=2mm,
  },
  bigtalloblong/.style={
    draw=black,
    minimum width=3pt,
    minimum height=1em,
    inner xsep=0pt,
  },
}
\newcommand{\connectDoGuards}[2]{%
  \draw[overlay,draw=black!40!white] ([guard line offset]#1) -- ([guard line offset]#2);
}

\algblockdefx{While}{EndWhile}[1]{
  \textbf{while}~\text{#1}
}{
  \textbf{end while}
}

\algblockdefx{DoIf}{EndDoIf}
[1]{%
  \begin{tikzpicture}[remember picture,baseline=(do.base)]
    \node[do guard] (do) {\textbf{do either}};
    \coordinate (guard line x) at ([guard line offset]do.west);
    \coordinate (last do) at (guard line x |- do.south);
  \end{tikzpicture}\\
  \begin{tikzpicture}[remember picture,baseline=(newdo.base)]
    \node[do guard] (newdo) {\phantom{\bfseries d}};
    \draw[draw=none] ([yshift=3pt]newdo.north);
    \node[bigtalloblong,anchor=center] (firstoblong) at (guard line x |- newdo.center) {};
    \coordinate (this guard top) at (guard line x |- newdo.north); 
    \connectDoGuards{last do}{this guard top} 
    \coordinate (last do) at (guard line x |- newdo.south);
  \end{tikzpicture}%
  ~#1 $\rightarrow$%
}
{%
  \begin{tikzpicture}[remember picture,baseline=(od.base)]
    \node[do guard] (od) {\textbf{end do}};
    \coordinate (end guard top) at (guard line x |- od.north); \connectDoGuards{last do}{end guard top}
  \end{tikzpicture}%
}

\algcblockdefx[DoIf]{DoIf}{ElsDoIf}{EndDoIf}
[1]{%
  \begin{tikzpicture}[remember picture,baseline=(newdo.base)]
    \node[do guard] (newdo) {\phantom{\bfseries d}};
    \draw[draw=none] ([yshift=3pt]newdo.north);
    \node[bigtalloblong,anchor=center] at (guard line x |- newdo.center) {}; 
    \coordinate (this guard top) at (guard line x |- newdo.north); 
    \connectDoGuards{last do}{this guard top} 
    \coordinate (last do) at (guard line x |- newdo.south);
  \end{tikzpicture}%
  ~#1 $\rightarrow$%
}
{%
  \begin{tikzpicture}[remember picture,baseline=(od.base)]
    \node[do guard] (od) {\textbf{end do}};
    \coordinate (end guard top) at (guard line x |- od.north); \connectDoGuards{last do}{end guard top}
  \end{tikzpicture}%
}

\newcommand{\StateIf}[1]{\State\textbf{if}~#1~\textbf{then:}}

\algloopdefx{IfLoop}[1]{\textbf{If} #1 \textbf{then}}

\newcommand{\defineApiFunction}[2][]{%
  \ifthenelse{\equal{#1}{}}{
    \expandafter\newcommand\csname #2\endcsname{\text{\upshape\textsc{%
          #2%
    }}\xspace}%
  }{
    \expandafter\newcommand\csname #2\endcsname{\text{\upshape\textsc{%
          #1%
        }}\xspace}%
  }%
}
 
\defineApiFunction{CheckConsistency}
\defineApiFunction{CheckSoundness}
\defineApiFunction{BuildHypothesis}
\defineApiFunction[EquivQuery]{EquivalenceQuery}
\defineApiFunction[ProcCounterEx]{ProcessCounterexample}
\defineApiFunction{OutputQuery}
\defineApiFunction{Ads}

\usepackage{todonotes}

\usepackage{bibnames}

\begin{document}

\title{Error-awareness Accelerates Active~Automata~Learning\thanks{This research is partially supported by the NWO grant No.~VI.Vidi.223.096.}}
%
\author{Loes Kruger 
\and  Sebastian Junges
\and Jurriaan Rot
  }
%
\authorrunning{L. Kruger et al.}
%
\institute{
 Institute for Computing and Information Sciences, \\
 Radboud University, Nijmegen, the Netherlands\\
 \email{\{loes.kruger,sebastian.junges,jurriaan.rot\}@ru.nl}
}

\maketitle

\camerareadyversion{}{\vspace{-0.6cm}}

\begin{abstract}
  Active automata learning (AAL) algorithms can learn a behavioral model of a system from interacting with it. 
  The primary challenge remains scaling to larger models, in particular in the presence of many possible inputs to the system. 
  Modern AAL algorithms fail to scale even if, in every state, most inputs lead to errors.
  In various challenging problems from the literature, these errors are observable, i.e., they emit a known error output.
  Motivated by these problems, we study learning these systems more efficiently.
  Further, we consider various degrees of knowledge about which inputs are non-error producing at which state. For each level of knowledge, we provide a matching adaptation of the state-of-the-art AAL algorithm $\lsharp$ to make the most of this domain knowledge.  
  Our empirical evaluation demonstrates that the methods accelerate learning by orders of magnitude with strong but realistic domain knowledge to a single order of magnitude with limited domain knowledge.
\end{abstract}

\section{Introduction} \label{sec:intro}
Active Automata Learning (AAL) can be used to infer models by analyzing the input-output behavior of a given System Under Learning (SUL). 
AAL has been used to learn a wide variety of systems including network protocols~\cite{DBLP:conf/uss/RuiterP15,DBLP:conf/cav/Fiterau-Brostean16,DBLP:conf/spin/Fiterau-Brostean17}, smart cards~\cite{DBLP:conf/woot/ChaluparPPR14}, legacy systems~\cite{DBLP:conf/ifm/SchutsHV16}, neoVim~\cite{DBLP:conf/spin/Ganty24} and Git version control~\cite{DBLP:conf/icst/MuskardinBTA24}. 
Libraries such as AALpy~\cite{DBLP:journals/isse/MuskardinAPPT22} and Learnlib~\cite{DBLP:conf/fmics/RaffeltSB05} implement common algorithms including $\lstar$~\cite{DBLP:journals/iandc/Angluin87}, TTT~\cite{DBLP:conf/rv/IsbernerHS14} and $\lsharp$~\cite{DBLP:conf/tacas/VaandragerGRW22} and support various types of automata.

Despite the significant progress in AAL, learning a complete model of a large real-world system is often not feasible within a day~\cite{DBLP:conf/icfem/SmeenkMVJ15,DBLP:conf/wcre/YangASLHCS19}. Systems with large input alphabets are particularly challenging~\cite{DBLP:journals/cacm/Vaandrager17,DBLP:conf/tacas/MalerM14,DBLP:conf/tacas/KrugerJR24}.
Traditionally, learning algorithms study the system response to \emph{every input} at \emph{every state}, yet in many states, most inputs are invalid or practically impossible. Often, such inputs immediately yield error outputs, which indicate that the inputs were invalid and the system is now stuck. 
For example, network protocols often contain a handshake phase. 
During this phase, only handshake-related inputs are allowed; all other inputs lead to an error. 
We have observed that considering all these state-input pairs causes significant overhead.
Therefore, we advocate using readily available domain knowledge about errors and error-producing inputs to reduce the overhead.
Specifically, we show how to incorporate various degrees of knowledge about error-producing inputs in both the hypothesis learning and the hypothesis testing steps, accelerating the state of the art in AAL.

We consider \emph{error-persistent} systems~\cite{DBLP:journals/corr/Yaacov}, i.e., systems where \emph{known} error outputs are followed only by error outputs.
Error persistence is common:
Once a connection is closed in a network protocol, further messages yield a response indicating this. Error persistence also applies to fatal errors, shutdowns, and out-of-resource messages.
Error persistence is also common when learning POMDP controllers~\cite{DBLP:conf/tacas/BorkCGKM24} or runtime monitors~\cite{DBLP:conf/atva/MaasJ25}, where some input sequences and their extensions are impossible. Intuitively, there is no need to learn the monitor response to teleporting cars, as long as cars cannot physically teleport.

Beyond error persistence, it is often reasonable to assume that some input words are known to lead to an error, cf.\ the teleporting cars. 
However, the \emph{complete} set of error-inducing input words is not always known. For instance, network protocol specifications can underspecify certain input words, leaving it unclear whether they produce errors.
We express domain knowledge as a regular language $\K$
describing the words that do \emph{not} lead to errors. 
We call $\K$ the \emph{reference} and distinguish three levels of knowledge, formalised as relations between $\K$ and the SUL $\Sys$: 
(1)~\emph{sound and complete}: a word is not in $\K$ if and only if it produces an error in $\Sys$,  
(2)~\emph{sound}: if a word is not in $\K$ then it produces an error in $\Sys$, and
(3)~\emph{arbitrary}: no formal relation.

As standard in AAL, we assume the learning algorithm can ask two types of queries about the SUL~\cite{DBLP:journals/iandc/Angluin87}:
Output queries (OQs, `What is the output of this word?') and equivalence queries (EQs, `Is the hypothesis equivalent to the SUL?').
Practically, we use EQs that are approximated using conformance testing~\cite{DBLP:journals/cacm/Vaandrager17,DBLP:conf/dagstuhl/HowarS16}. 
Conformance testing techniques construct test suites from a hypothesis. 
Executing these tests requires the majority of the interactions with the SUL~\cite{DBLP:journals/cacm/Vaandrager17}.
Given an error-persistent SUL, a reference, and an assumption about its relation to the SUL, our goal is to learn the model with few interactions. 

Towards this goal, we adapt conformance testing and learning based on refined notions of equivalence which, intuitively, are parameterized by the reference~$\K$. Specifically, in Sec.~\ref{sec:testing}, we truncate test sequences to either before or after an error occurs, depending on the availability of a sound reference. We prove completeness of these new test suites, relative to a suitable fault domain. Further, in Sec.~\ref{sec:ealearning}, we adapt the learning algorithm $\lsharp$~\cite{DBLP:conf/tacas/VaandragerGRW22}.
First, we present a version tailored to learning error-persistent systems, without assuming any additional reference. 
Then, we provide new algorithms for learning with a reference, one for each assumption about this reference.   
In particular, if $\K$ is sound, certain states can be distinguished without queries, if $\K$ is additionally complete, the number of necessary EQs  drops. For arbitrary references, we tailor adaptive AAL~\cite{DBLP:conf/fm/KrugerJR24}.

\begin{table}[htbp]
    \centering
    \caption{Overview of novel testing and learning algorithms.}
    \scalebox{.75}{
    \begin{tabular}{|p{1.8cm}p{0.6cm}|p{4.8cm}||p{1cm}p{0.6cm}|p{5.7cm}|}
        \hline
        Test suite & Sec. & Explanation & Alg. & Sec. & Explanation \\ \hline
         $T_\er$ & \ref{sec:test_perm}  & Test suite w persistent errors & $L_{\er}^{\#}$  & \ref{sec:lsharp_perm} & $\lsharp$ (\cite{DBLP:conf/tacas/VaandragerGRW22}) w\ persistent errors \\         $T_\sound$ & \ref{sec:sound_test} & Sound test suite &   $\eralsharp$  & \ref{sec:adap_lsharp_perm} & $\erlsharp$ + $\alsharp$ (\cite{DBLP:conf/fm/KrugerJR24}) w arbitrary reference \\
         $\mathsf{MoE}(T_\er, T_\sound)$ & \ref{sec:mab_test} & Distribution over test suites (\cite{DBLP:conf/tacas/KrugerJR24}) &  $\soundlsharp$ & \ref{sec:sound_alg} & $\erlsharp$ w\ sound reference     \\         
       & & &  $\sclsharp$ & \ref{sec:sound_compl_alg}  & $\erlsharp$ w\ sound and complete reference \\ \hline
    \end{tabular}
    }
    \label{tab:outline}
\end{table}

In summary, this paper contributes new test suites and variants of $\lsharp$, summarised in \Cref{tab:outline}. 
Each variant incorporates different forms of error-aware domain knowledge to accelerate learning and testing of error-persistent systems. Our experimental evaluation shows that such domain knowledge is realistically obtainable and significantly reduces the interactions with the SUL. \camerareadyversion{The full algorithms, proofs, benchmark details and results can be found in the appendix of the extended version of this paper~\cite{Kruger2026ErrorAwarenessAcceleratesArxiv}.}{The full algorithms, proofs, benchmark details and results can be found in the appendix.}
\section{Problem Statement} \label{sec:prelim}
In this section, we introduce the problem statement and necessary preliminaries.

\subsection{Automata}
We refer to the $i$-th element of a word $\sigma$ by $\sigma_i$ and use $\sigma_{:i}$ to indicate $\sigma_0 \cdot \sigma_1 \cdots \sigma_{i}$. 
Function $\last \colon \Sigma^* \to \Sigma$ returns the last symbol of a word. 
Concatenation of languages $A, B \subseteq \Sigma^*$ is given by $A \cdot B = \{ \sigma \cdot \mu\ |\ \sigma \in A, \mu \in B\}$. 
We write $f\colon X \partialto Y$ for a partial map, $f(x){\downarrow}$ if $f$ is defined for $x$, and $f(x){\uparrow}$ otherwise. 

\begin{definition}
    A \emph{Mealy machine} is a tuple $\Sys = (Q, q_0, I, O, \delta, \lambda)$ with finite sets $Q$ of \emph{states}, $I$ of inputs, $O$ of outputs, an \emph{initial state} $q_0 \in Q$, a \emph{transition function} $\delta\colon Q \times I \partialto Q$ and an \emph{output function} $\lambda\colon Q \times I \partialto O$.
\end{definition}
We lift the transition and output functions to words: for $i \in I$ and $\sigma \in I^*$,  $\delta(q, i\cdot \sigma) = \delta(\delta(q,i),\sigma)$ and $\lambda(q, i\cdot \sigma) = \lambda(q,i) \cdot \lambda(\delta(q,i),\sigma)$. We define $|\Sys|$ to be the number of states in $\Sys$ and use superscript $\Sys$ to show to which Mealy machine we refer, e.g., $Q^\Sys$. We abbreviate $\delta(q_0,\sigma)$ by $\delta(\sigma)$. We assume that Mealy machines are complete and minimal, unless specified otherwise. 

\begin{definition} \label{def:ap}
    Given a language $\K \subseteq I^*$ and Mealy machines $\Sys$ and $\N$, states $p \in Q^\Sys$ and $q \in Q^\N$ are $\K$\emph{-equivalent}, written as $p \sim_\K q$, if $\lambda^\Sys(p,\sigma)=\lambda^\N(q,\sigma)$ for all $\sigma \in \K$. 
    States $p, q$ are \emph{equivalent}, written $p \sim q$, if they are $I^*$-equivalent. States $p, q$ are \emph{apart}, written $p \apart q$, if there is some $\sigma\in I^*$ such that $\lambda(q,\sigma){\downarrow}$, $\lambda(p,\sigma){\downarrow}$ and $\lambda(q,\sigma)\neq\lambda(p,\sigma)$. 
\end{definition} 
Mealy machines $\Sys$ and $\N$ are $\K$-equivalent if $q_0^{\Sys} \sim_\K q_0^{\N}$. 
A prefix-closed language $P \subseteq I^*$ is a \emph{state cover} for a Mealy machine $\Sys$ if, for every state $q \in Q^\Sys$, there exists $\sigma \in P$ s.t.\ $\delta(q_0,\sigma)=q$. A state cover $P$ is \emph{minimal} if $|P| = |Q^\Sys|$.
A language $W_q \subseteq I^*$ is a \emph{state identifier} for $q \in Q^\Sys$ if  for all $p \in Q^\Sys$ with $p \apart q$ there exists $\sigma \in W_q$ with $\sigma \vdash p \apart q$ . A \emph{separating family} is a collection of state identifiers $\{ W_q\}_{q \in Q^\Sys}$ s.t.\ for all 
$p \apart q$, there exists $\sigma \in W_p \cap W_q$ with $\sigma \vdash p \apart q$.
We fix $\mathsf{sep}(W,p,q)$ to uniquely represent one such~$\sigma$.

\subsection{Learning and Testing}
We assume our learning algorithm can perform output queries (OQs) on the System Under Learning (SUL) $\Sys$. This means that when the algorithm poses an \textsc{OQ} with $\sigma \in I^*$ on $\Sys$, $\Sys$ returns $\lambda^{\Sys}(q_0,\sigma)$.
Traditionally, a learning algorithm can pose an equivalence query (EQ) to a teacher: the teacher replies \texttt{yes} if the provided hypothesis Mealy machine $\Hyp$ is equivalent to the SUL, and \texttt{no} with a counterexample witnessing inequivalence otherwise.
When no teacher is available, conformance testing can be used to approximate the EQ. 
Conformance testing constructs a test suite from a hypothesis Mealy machine $\Hyp$~\cite{DBLP:journals/cacm/Vaandrager17,DBLP:conf/dagstuhl/HowarS16}, and each sequence in the suite is posed as an OQ. Ideally, test suites contain a failing test for every alternative model of the SUL.

\begin{definition} \label{def:compl}
    A \emph{fault domain} is a set of Mealy machines $\C$. A test suite $T \subseteq I^*$ is complete for $\Hyp$ w.r.t. $\C$ if for all $\Sys \in \C,~\Hyp \sim_T \Sys$ implies $\Hyp \sim \Sys$. 
\end{definition}
Since equivalence is checked on output sequences, prefixes of test sequences add no additional information. Accordingly, all our examples use test suites with only maximal elements.
For any finite test suite $T$ a Mealy machine can be constructed that is inequivalent to $\Hyp$ but produces the same outputs as $\Hyp$ on every sequence in $T$. Therefore, test suites are constructed such that they provide guarantees such as $k$-completeness~\cite{MooreGedanken1956} or $k$-$A$-completeness~\cite{DBLP:conf/concur/VaandragerM25}. For example, a test suite $T$ is $k$-complete for $\Hyp$ if it is complete w.r.t.\ all models with at most $|\Hyp|+k$ states.

\subsection{Problem Statement}
In this paper, we focus on Mealy machines where after executing an input that produces an error, all following outputs are also errors.

\begin{definition}
    A Mealy machine $\Sys$ is $\er$-\emph{persistent} if there exists a special output $\er \in O$ such that for all $\sigma \in I^*$ and $i \in I$, $\last(\lambda^\Sys(\sigma)) = \er$ implies $\last(\lambda^\Sys(\sigma\cdot i)) = \er$. A language $\K \subseteq I^*$ is $\er$-persistent if for all $\sigma \in I^*$ and $i \in I$, $\sigma \notin \K$ implies $\sigma\cdot i \notin \K$.
\end{definition}
Instead of a single output, \er may be an alias for several outputs that are linked to erroneous behavior. We assume that the set of outputs representing $\er$ is given. 
Moreover, we may have domain knowledge in the form of an $\er$-persistent DFA that rejects some or all words that produce errors and call such a reference DFA \emph{sound}, respectively, \emph{sound and complete}.

\begin{definition}
    Let $\Sys$ be a Mealy machine and $\K \subseteq I^*$ a regular language. $\K$ is \emph{sound} for \Sys if for all $\sigma \in I^*$, $\sigma \notin \K$ implies $\last(\lambda^{\Sys}(\sigma))=\er$. $\K$ is \emph{complete} for \Sys if the implication holds the other way around.
\end{definition}
\begin{figure*}[t]
    \centering
    \begin{subfigure}[t]{0.42\textwidth}
        \scalebox{0.9}{\begin{tikzpicture}[->,>=stealth',shorten >=1pt,auto,node distance=1.2cm,main node/.style={circle,draw,font=\sffamily\large\bfseries},
  ]
  \def\xoffset{8mm}
  \node[initial,state,system] (0) {\treeNodeLabel{$q_0$}};
  \node[state,system] (1) [below of=0, yshift=-0.5cm] {\treeNodeLabel{$q_1$}};
  \node[state,system] (2) [right of=1, xshift=0.5cm] {\treeNodeLabel{$q_2$}};
  \node[state,system] (3) [above of=2, yshift=0.5cm] {\treeNodeLabel{$q_{\er}$}};
  \node[state,system] (4) [right of=3, yshift=-1cm] {\treeNodeLabel{$q_3$}};
  
  \path[every node/.style={font=\sffamily\scriptsize}]
  (0) edge[] node[above, align=center] {$+/\er$} (3)
      edge[] node[left] {$h/\checkmark$} (1)
  (1) edge[] node[above, align=center,xshift=-0.2cm] {$+/\er$} (3)
      edge[] node[below] {$k/\checkmark$} (2)
  (2) edge[] node[left, align=center] {$+/\er$} (3)
      edge[] node[below, align=center, xshift=0.3cm] {$d/\checkmark$} (4)
  (3) edge[loop right] node[right] {$+/\er$} (3)
  (4) edge[] node[below, align=left, xshift=-0.2cm] {$+/\er$} (3)
      edge[loop right] node[right, align=center] {$d/$\Letter} (4)
      ;
  \end{tikzpicture}}
        \caption{Toy TLS system $\Sys$.}
        \label{fig:1a}
    \end{subfigure}
    \hfill
    \begin{subfigure}[t]{0.27\textwidth}
        \scalebox{0.9}{\begin{tikzpicture}[->,>=stealth',shorten >=1pt,auto,node distance=1.2cm,main node/.style={circle,draw,font=\sffamily\large\bfseries},
  ]
  \def\xoffset{8mm}
  \node[initial,state,knowledge,accepting] (0) {\treeNodeLabel{$r_0$}};
  \node[state,knowledge,accepting] (1) [below of=0, yshift=-0.5cm] {\treeNodeLabel{$r_1$}};
  \node[state,knowledge,accepting] (2) [right of=1, xshift=0.5cm] {\treeNodeLabel{$r_2$}};
  
  \path[every node/.style={font=\sffamily\scriptsize}]
  (0) edge[] node[left] {$h$} (1)
  (1) edge[] node[below] {$k$} (2)
      edge[loop left] node[left, align=center] {$h$} (1)
  (2) edge[loop above] node[above, align=center] {$h, d$} (3)
      ;
  \end{tikzpicture}}
        \caption{Reference $\K_0$.}
        \label{fig:1d}
    \end{subfigure}
    \hfill
    \begin{subfigure}[t]{0.27\textwidth}
        \scalebox{0.9}{\begin{tikzpicture}[->,>=stealth',shorten >=1pt,auto,node distance=1.2cm,main node/.style={circle,draw,font=\sffamily\large\bfseries},
  ]
  \def\xoffset{8mm}
  \node[initial,state,knowledge,accepting] (0) {\treeNodeLabel{$p_0$}};
  \node[state,knowledge,accepting] (1) [below of=0, yshift=-0.5cm] {\treeNodeLabel{$p_1$}};
  \node[state,knowledge,accepting] (2) [right of=1, xshift=0.5cm] {\treeNodeLabel{$p_2$}};
  
  \path[every node/.style={font=\sffamily\scriptsize}]
  (0) edge[] node[left] {$h$} (1)
  (1) edge[] node[below] {$k$} (2)
  (2) edge[loop above] node[above] {$d$} (3)
      ;
  \end{tikzpicture}}
        \caption{Reference $\K_1$.}
        \label{fig:1e}
    \end{subfigure}
    \vspace{0.2cm}
    ~\\
    \begin{subfigure}[t]{0.31\textwidth}
        \centering
        \scalebox{0.9}{\begin{tikzpicture}[->,>=stealth',shorten >=1pt,auto,node distance=1.2cm,main node/.style={circle,draw,font=\sffamily\large\bfseries},
  ]
  \def\xoffset{8mm}
  \node[initial,state,knowledge,accepting] (0) {\treeNodeLabel{$s_0$}};
  \node[state,knowledge,accepting] (1) [below of=0, yshift=-0.5cm] {\treeNodeLabel{$s_1$}};
  \node[state,knowledge,accepting] (2) [right of=1, xshift=0.5cm] {\treeNodeLabel{$s_2$}};
  \node[state,knowledge,accepting] (3) [above of=2, yshift=0.5cm] {\treeNodeLabel{$s_3$}};
  
  \path[every node/.style={font=\sffamily\scriptsize}]
  (0) edge[] node[left] {$h$} (1)
  (1) edge[] node[below] {$k$} (2)
  (2) edge[loop right] node[right, align=center] {$k$} (2)
  (2) edge[] node[left] {$d$} (3)
      ;
  \end{tikzpicture}}
        \caption{Reference $\K_2$.}
        \label{fig:1f}
    \end{subfigure}
    \hfill
    \begin{subfigure}[t]{0.22\textwidth}
        \centering
        \scalebox{0.9}{\begin{tikzpicture}[->,>=stealth',shorten >=1pt,auto,node distance=1.2cm,main node/.style={circle,draw,font=\sffamily\large\bfseries},
  ]
  \def\xoffset{8mm}
  \node[initial,state,hypothesis] (0) {\treeNodeLabel{$h_0$}};
   \node[state,hypothesis] (1) [below of=0, yshift=-0.5cm] {\treeNodeLabel{$h_\er$}};

  \path[every node/.style={font=\sffamily\scriptsize}]
  (0) edge[] node[right, align=center] {$h/\checkmark$\\$+/\er$} (1)
  (1) edge[loop right] node[right, align=center] {$+/\er$} (1)
      ;
  \end{tikzpicture}}
        \caption{Hypothesis $\Hyp$.}
        \label{fig:hyp}
    \end{subfigure}
    \begin{subfigure}[t]{0.45\textwidth}
        \centering 
        \scalebox{0.9}{\begin{tikzpicture}[->,>=stealth',shorten >=1pt,auto,node distance=1.5cm,main node/.style={circle,draw,font=\sffamily\large\bfseries},
  ]
  \def\xoffset{8mm}
  \node[initial,state,basis] (0) {\treeNodeLabel{$q_0$}};
  \node[state,basis] (1) [right of=0] {\treeNodeLabel{$q_1$}};
  \node[state,system] (2) [right of=1] {\treeNodeLabel{$q_2$}};
  \node[state,frontier] (3) [above right of=1] {\treeNodeLabel{$q_3$}};
  \node[state,system] (4) [right of=3] {\treeNodeLabel{$q_4$}};
  \node[state,system] (5) [above right of=0] {\treeNodeLabel{$q_5$}};
  
  \path[every node/.style={font=\sffamily\scriptsize}]
  (0) edge[] node[above] {$h/\checkmark$} (1)
      edge[] node[above left] {$d/\er$} (5)
  (1) edge[] node[above,xshift=-0.2cm] {$k/\checkmark$} (3)
      edge[] node[above] {$h/\er$} (2)
  (3) edge[] node[above] {$d/\checkmark$} (4)
      ;
  \end{tikzpicture}}
        \caption{Observation tree $\Obs$.}
        \label{fig:obs}
    \end{subfigure}
    \caption{Mealy machine, DFA references and an observation tree related to a toy TLS system. Double circle states represent final states. Omitted DFA transitions lead to the (non-final) sink state. The $+$-symbol on transitions represents all inputs for which a transition is not explicitly drawn.}
    \label{fig:overview}
\end{figure*}
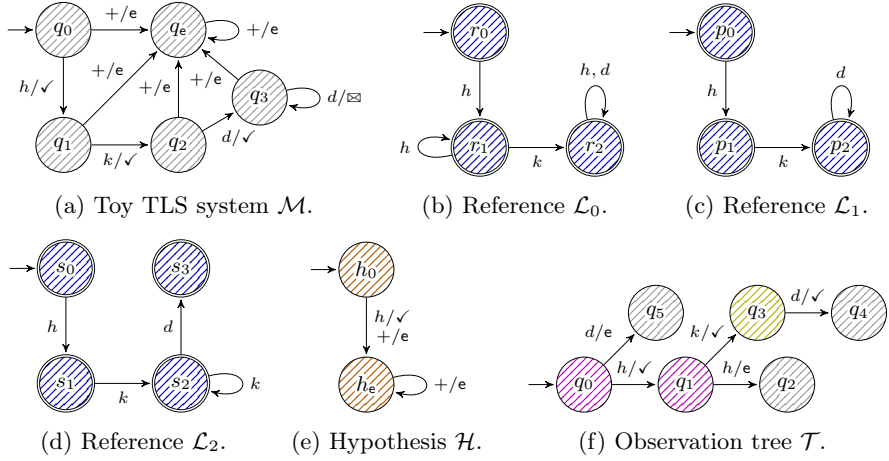
\begin{example}
    Figure~\ref{fig:1a} shows a Mealy machine representing a toy version of the TLS protocol with $O = \{\checkmark$, \Letter, $\er\}$ and $I = \{h, k, d, c\}$ representing \textit{hello}, \textit{kex}, \textit{data} and \textit{close} respectively. 
    The DFAs \Cref{fig:1d}, \Cref{fig:1e}, and \Cref{fig:1f} represent a sound, a sound and complete, and an unsound and incomplete reference, respectively.
    \Cref{fig:1d} is not complete as $hh \in \K_0$. \Cref{fig:1f} is neither sound nor complete as $hkdd \notin \K_2$ and $hkk \in \K_2$. 
    Reference $\K_2$ can still be useful because most accepted words are non-error-producing.
\end{example}
In this paper, we focus on these \emph{assumptions} and use them to learn a Mealy machine more efficiently, specifically. We solve the following problem:

\begin{problemstatement}{Error-aware AAL with a Reference} \label{problem}
    Given an $\er$-persistent SUL $\Sys$, a sound and complete/sound/arbitrary reference language $\K$, efficiently learn a Mealy machine $\Hyp$ s.t.\ $\Sys \sim \Hyp$.
\end{problemstatement}

\section{Error-aware Testing}
\label{sec:testing}
In this section, we prune test suites using $\er$-persistence and error-aware references. 

\subsection{Testing with Persistent Errors} \label{sec:test_perm}
We show that if an SUL $\Sys$ and a Mealy machine $\Hyp$ are both $\er$-persistent, test words can be truncated after the first $\er$ output without losing the power of the test suite. 
Let $\firsterror{\Hyp, \sigma}$ be the smallest $i$ where $\lambda^{\Hyp}(\sigma)_{i}\neq\er \land \lambda^{\Hyp}(\sigma)_{i+1} = \er$ and $\infty$ if no such $i$ exists. The \emph{filter} $f_\er$ returns a truncated word:
\begin{equation}\label{eq:w}
  f_\er(\Hyp, \sigma)=
  \begin{cases}
    \sigma_{:i+1} & \quad\text{if } i = \firsterror{\Hyp}{\sigma} < \infty. \\    \sigma & \quad\text{otherwise.}
  \end{cases}
\end{equation} 

\begin{definition} \label{def:ertest}
Given a test suite $T$ and a Mealy machine $\Hyp$, the $\er$-\emph{persistent test suite} is $T_\er(\Hyp) = \{ f_\er(\Hyp, \sigma)\ |\ \sigma \in T\}$.
\end{definition}

\begin{example} \label{ex:testsuite}
    For test suite $T = \{kh, dh, ch,$ $hhh, hkh, hdh, hch\}$ and hypothesis $\Hyp$ in Fig.~\ref{fig:hyp}, the $\er$-persistent test suite is $T_{\er}(\Hyp)= \{k, d, c, hh, hk, hd, hc\}$.
\end{example}
We refer to $T_\er(\Hyp)$ by $T_\er$, whenever possible. As we assume the SUL is $\er$-persistent, we consider fault domains with only $\er$-persistent Mealy machines.
\begin{definition} \label{def:kaercompl}
Let $\C$ be a fault domain. Then the $\er$-\emph{persistent fault (sub)domain} is $\C_{\er} = \C \cap \{ \Sys \mid \Sys \text{ is } \er\text{-persistent} \}$.
\end{definition}

\begin{theorem} \label{lem:error_test_suite}
Let $\Hyp$ be an $\er$-persistent Mealy machine and $\C$ a fault domain.
If test suite $T$ is complete for $\Hyp$ w.r.t.\ $\C$, then $T_\er$ is complete for $\Hyp$ w.r.t.\ $\C_{\er}$. 
\end{theorem}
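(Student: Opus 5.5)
The plan is to argue directly: fix $\Sys \in \C_\er$ with $\Hyp \sim_{T_\er} \Sys$, and show $\Hyp \sim_T \Sys$. Since $\Sys \in \C$ and $T$ is complete for $\Hyp$ w.r.t.\ $\C$, this gives $\Hyp \sim \Sys$. Hence the work reduces to a single claim. For every $\sigma \in T$, if $\lambda^\Hyp(\sigma') = \lambda^\Sys(\sigma')$ for $\sigma' = f_\er(\Hyp,\sigma)$, then $\lambda^\Hyp(\sigma) = \lambda^\Sys(\sigma)$.

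I will prove the claim by a case split on $i = \firsterror{\Hyp}{\sigma}$. If $i = \infty$, then $f_\er(\Hyp,\sigma) = \sigma$ and there is nothing to show. Otherwise $\sigma' = \sigma_{:i+1}$, and agreement of $\Hyp$ and $\Sys$ on $\sigma'$ means their output sequences coincide on the first $i+2$ positions, in particular at position $i+1$, where $\Hyp$ outputs $\er$. Thus $\last(\lambda^\Hyp(\sigma')) = \er = \last(\lambda^\Sys(\sigma'))$.

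Next I use $\er$-persistence of both machines. By induction on the length of the suffix, every extension $\sigma_{:j}$ with $j > i+1$ has last output $\er$ in $\Hyp$ and in $\Sys$. Since the output of a word is the sequence of last outputs of its prefixes, i.e.\ $\lambda(\sigma)_j = \last(\lambda(\sigma_{:j}))$, both output sequences equal $\er$ at every position $j \ge i+1$. Together with agreement on positions $0,\dots,i+1$, this gives $\lambda^\Hyp(\sigma) = \lambda^\Sys(\sigma)$.

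One edge case needs care: $\sigma$ may already start with $\er$ at position $0$. Then no $i$ satisfies $\lambda^{\Hyp}(\sigma)_i \neq \er$, so $i = \infty$, no truncation happens, and the trivial case applies. There is also a possible off-by-one issue in the indexing of $\sigma_{:i+1}$, but the definition includes the error position, so it suffices.

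The main obstacle is mostly bookkeeping: stating precisely the relationship between $\lambda(\sigma)_j$ and $\last(\lambda(\sigma_{:j}))$, and applying persistence in $\Sys$, which is only assumed through membership in $\C_\er$, rather than only in $\Hyp$. Once the claim holds, completeness follows immediately as described in the first paragraph.
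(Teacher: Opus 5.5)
Your proposal is correct and uses the same argument as the paper. The truncated word $\sigma_{:i+1}\in T_\er$ forces $\Hyp$ and $\Sys$ to agree up to and including the first $\er$ output, and $\er$-persistence of both machines then forces agreement on the rest of $\sigma$. The paper writes this as a proof by contradiction with a case split on $\lambda^{\Sys}(\sigma)_{i+1}$; your direct version covers both cases at once, since agreement on $\sigma_{:i+1}$ already forces that output to be $\er$.
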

When $\C$ is clear from the context, we may call $T_\er$ \emph{$\er$-complete}.

\subsection{Testing with Reference Models} \label{sec:sound_test}
Given a reference that is sound for both the SUL and $\Hyp$, a test word may be truncated to the longest prefix in $\K$. Unlike $f_\er$, which truncates after observing $\er$, a sound reference allows truncation to just before $\er$ is produced.
Let $\firsterror{\K}{\sigma}$ be the smallest $i$ where $\sigma_{:i} \in \K \land \sigma_{:i+1} \notin \K$ and $\infty$ if no such $i$ exists. 
If $\firsterror{\K}{\sigma} < \infty$, then the filter $f_\sound$, defined below, cuts the word accordingly. 
If the resulting word ends with multiple error in $\Hyp$, it is further cut by $f_{\er}$. 

\begin{equation}\label{eq:w2}
  f_{\sound}(\Hyp,\K,\sigma)=
  \begin{cases}
    \sigma_{:i} & \quad\text{if } i = \firsterror{\K}{\sigma} < \infty \land  i \leq \firsterror{\Hyp}{\sigma}. \\
    f_{\er}(\sigma) & \quad\text{otherwise.}
  \end{cases}
\end{equation}

\begin{definition} 
Given a test suite $T$, a Mealy machine $\Hyp$ and a reference $\K$, the \emph{sound test suite} is $T_\sound(\Hyp, \K) = \{ f_\sound(\Hyp,\K,\sigma)\ |\ \sigma \in T\}$.
\end{definition}
\begin{example}
    Consider $T$ and $\Hyp$ from Ex.~\ref{ex:testsuite}. For sound and complete reference~$\K_1$ from Fig~\ref{fig:overview}, the sound test suite is $T_{\sound}(\Hyp,\K_1) = \{hk\}$.
\end{example}
If the Mealy machine $\Sys$ and the hypothesis $\Hyp$ are both $\er$-persistent, the reference $\K$ is sound for both $\Sys$ and $\Hyp$ and $T$ is complete for some fault domain $\C$, then the sound test suite  $T_{\sound}(\Hyp,\K)$ is $\er$-complete.
\begin{theorem} \label{lem:sound_test_suite}
Let $\Hyp$ be an $\er$-persistent Mealy machine and $\C$ a fault domain.
If a test suite $T$ is complete for $\Hyp$ w.r.t.\ $\C$ and reference $\K$ is sound for all $\Sys \in \C \cup \{ \Hyp \}$, then $T_\sound(\Hyp,\K)$ is complete w.r.t.\ $\C_{\er}$. 
\end{theorem}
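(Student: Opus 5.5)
We reduce the claim to \Cref{lem:error_test_suite}. Fix $\Sys \in \C_\er$ with $\Hyp \sim_{T_\sound} \Sys$; we must show $\Hyp \sim \Sys$. Since $T$ is complete for $\Hyp$ w.r.t.\ $\C$, \Cref{lem:error_test_suite} says $T_\er$ is complete w.r.t.\ $\C_\er$. So it suffices to show $\Hyp \sim_{T_\er} \Sys$. Concretely, for each $\sigma \in T$ we show $\lambda^\Hyp(f_\er(\Hyp,\sigma)) = \lambda^\Sys(f_\er(\Hyp,\sigma))$, using only that $\lambda^\Hyp(\tau)=\lambda^\Sys(\tau)$ for $\tau = f_\sound(\Hyp,\K,\sigma)$.

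We split according to the two cases in the definition of $f_\sound$.

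In the second case, $\tau = f_\er(\Hyp,\sigma)$, and there is nothing to prove.

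In the first case, $i = \firsterror{\K}{\sigma} < \infty$ and $i \le \firsterror{\Hyp}{\sigma}$, so $\tau = \sigma_{:i}$. Soundness of $\K$ for both machines gives $\last(\lambda^\Hyp(\sigma_{:i+1})) = \er = \last(\lambda^\Sys(\sigma_{:i+1}))$, because $\sigma_{:i+1} \notin \K$. Combined with the agreement on $\sigma_{:i}$, this gives agreement on $\sigma_{:i+1}$. By $\er$-persistence of both $\Hyp$ and $\Sys$, every later output is also $\er$, so the two machines agree on every prefix of $\sigma$ of length at least that of $\sigma_{:i+1}$.

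It remains to check that $f_\er(\Hyp,\sigma)$ is one of these prefixes, so that it is not strictly shorter than $\sigma_{:i+1}$. Let $j = \firsterror{\Hyp}{\sigma}$. We know $i \le j$.
\begin{itemize}
\item If $j = \infty$, then $f_\er(\Hyp,\sigma) = \sigma$, which is long enough.
\item If $j < \infty$, then $f_\er(\Hyp,\sigma) = \sigma_{:j+1}$, whose length is at least that of $\sigma_{:i+1}$.
\end{itemize}
Either way, $\Hyp$ and $\Sys$ agree on $f_\er(\Hyp,\sigma)$. A prefix of $\sigma$ shorter than $\sigma_{:i+1}$ is a prefix of $\tau$, where agreement is given, so no length is missed.

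The main obstacle is the index bookkeeping. The paper's indexing is slightly inconsistent: $\sigma_{:i}$ contains $\sigma_i$, and the output index is compared to the prefix index. I would fix the convention that $\firsterror{\K}{\sigma}=i$ means $\sigma_{:i}$ is the longest prefix in $\K$, and check carefully that the condition $i \le \firsterror{\Hyp}{\sigma}$ guarantees $f_\er(\Hyp,\sigma)$ extends $\sigma_{:i+1}$. If an off-by-one makes $f_\er(\Hyp,\sigma)$ equal to $\sigma_{:i}$ itself, that case is trivially covered by the assumed agreement on $\tau$.

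A further edge case is when $\Hyp$ outputs $\er$ already from the first symbol. Then $\firsterror{\Hyp}{\sigma}$ as defined may be $\infty$, and $f_\er$ returns $\sigma$ unchanged. The persistence argument still applies, provided $\sigma_{:i+1}$ is well defined or $\tau = \sigma$.
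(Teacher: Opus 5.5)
Your proof is correct and uses the same ingredients as the paper's proof. You split on the two branches of $f_\sound$: in the truncated branch, soundness of $\K$ for both $\Hyp$ and $\Sys$ together with $\er$-persistence gives agreement on all of $\sigma$, and the $f_\er$ branch is handled by \Cref{lem:error_test_suite}. The only difference is that you invoke that theorem as a black box via $\Hyp \sim_{T_\er} \Sys$, whereas the paper argues by contradiction and re-runs its case distinction. Since the first branch already gives agreement on every prefix of $\sigma$, your length bookkeeping and the side condition $i \le \firsterror{\Hyp}{\sigma}$ are not needed.
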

 \label{sec:mab_test}
If the reference $\K$ is \emph{not sound} for the SUL, performing only the tests in $T_\sound$ does not guarantee $\er$-completeness. However, the test suite $T_\sound$ probably still contains counterexamples and is smaller than $T_\er$. Therefore, we combine $T_\sound$ and $T_\er$ using the Mixture of Experts ($\moe$) approach~\cite{DBLP:conf/tacas/KrugerJR24}. 
The $\moe$ approach takes a set of test suites and maintains a confidence value for each test suite based on how often that test suite produced a counterexample during previous testing rounds. 
During testing, $\moe$ selects the next test suite either at random or based on the confidence score, then generates the next test word from the chosen test suite. 
The algorithm terminates when all test suites have been fully executed or a counterexample has been found. We use $\moe(T_\er(\Hyp),$ $T_\sound(\Hyp,\K))$ to refer to the $\moe$ approach with the $T_\sound$ and $T_\er$ test suites.

\begin{lemma} \label{lem:mostly_test_suite}
Let $\Hyp$ be an $\er$-persistent Mealy machine, $\C$ a fault domain, and $\K$ a reference. If test suite $T$ is complete for $\Hyp$ w.r.t.\ $\C$, then $\moe(T_\er(\Hyp),$ $T_\sound(\Hyp,\K))$ is complete w.r.t.\ $\C_\er$.
\end{lemma}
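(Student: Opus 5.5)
The approach is to reduce the claim to Theorem~\ref{lem:error_test_suite}. The key observation is that the $\moe$ procedure only terminates with a positive verdict after every constituent test suite has been fully executed, unless a counterexample is found first. So, viewed as a set of test words, the test suite that $\moe(T_\er(\Hyp), T_\sound(\Hyp,\K))$ executes whenever it does not report a counterexample is exactly $T_\er(\Hyp) \cup T_\sound(\Hyp,\K)$. The order and probabilistic choices affect only \emph{when} a counterexample is found, never \emph{whether} one is found. I would first make this precise: I would formalize completeness of $\moe$ as completeness of this union, and note that if some test word in the union reveals a difference, $\moe$ eventually executes it, because termination without a counterexample requires full execution.

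Second, I would use monotonicity of completeness under supersets. If $T' \subseteq T''$ and $T'$ is complete for $\Hyp$ w.r.t.\ $\C_\er$, then $T''$ is too. The reason is that $\Hyp \sim_{T''} \Sys$ implies $\Hyp \sim_{T'} \Sys$, which by completeness of $T'$ implies $\Hyp \sim \Sys$. This is immediate from Definition~\ref{def:compl}.

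Third, I would invoke Theorem~\ref{lem:error_test_suite}. Since $\Hyp$ is $\er$-persistent and $T$ is complete for $\Hyp$ w.r.t.\ $\C$, the suite $T_\er(\Hyp)$ is complete for $\Hyp$ w.r.t.\ $\C_\er$. Taking $T' = T_\er(\Hyp)$ and $T'' = T_\er(\Hyp) \cup T_\sound(\Hyp,\K)$ finishes the argument. No soundness assumption on $\K$ is needed, because the $T_\sound$ component is only extra tests. These extra tests are harmless because any test word that distinguishes the machines is a genuine counterexample; truncation by $\K$ merely yields prefixes, on which outputs are still compared correctly.

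The main obstacle is the first step: pinning down what ``complete'' means for a randomized, adaptive procedure rather than a fixed set. I would handle it by relying explicitly on the stated termination condition, namely that $\moe$ stops only when all suites are exhausted or a counterexample is found. Under that condition, the set of executed words on a run with no counterexample is the full union, independent of the random choices.
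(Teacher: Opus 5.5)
Your proposal is correct and follows the paper's route: use Theorem~\ref{lem:error_test_suite} to get completeness of $T_\er(\Hyp)$ w.r.t.\ $\C_\er$, then use that \moe{} stops without a counterexample only after executing every suite, so the tests it executes include $T_\er(\Hyp)$. Your explicit superset-monotonicity step, $\Hyp \sim_{T_\er(\Hyp)\cup T_\sound(\Hyp,\K)} \Sys \implies \Hyp \sim_{T_\er(\Hyp)} \Sys \implies \Hyp \sim \Sys$, states the implication in the direction completeness actually requires, whereas the paper's displayed implication is written in the converse direction.
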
 

\section{Error-aware Learning} \label{sec:learning}
\label{sec:ealearning}
Below, we introduce four variations of the state-of-the-art AAL algorithm $\lsharp$~\cite{DBLP:conf/tacas/VaandragerGRW22}. Each variant matches an assumption about a reference or its absence (cf.~\Cref{tab:outline}).

\subsection{$\lsharp$ with Persistent Errors} \label{sec:lsharp_perm}
 We present \erlsharp, a novel adaptation of $\lsharp$ that assumes an $\er$-persistent  SUL. 
For readers familiar with $\lsharp$, the main changes are: (1)~the frontier excludes states with errors and (2)~OQs are posed per symbol and stop after an error.

\begin{definition} \label{def:obs_tree}
    A Mealy machine $\Obs$ is a \emph{tree} if there is a unique word $\sigma \in I^*$ for each $q \in Q^{\Obs}$ such that $\delta^{\Obs}(q_0,\sigma) = q$. A tree $\Obs$ is an \emph{observation tree} for a Mealy machine $\Sys$ if there exists mapping $m \colon Q^\Obs \to Q^\Sys$ such that $m(q_0^{\Obs}) = q_0^\Sys$ and if there is a transition $\delta^\Obs(q,i)=q'$, then there is a transition $\delta^\Sys(m(q),i)=m(q')$ and $\lambda^\Obs(q,i)=\lambda^\Sys(m(q),i)$. 
\end{definition}

\noindent $\erlsharp$ uses an observation tree to store the observed traces.
 In an observation tree $\Obs$, the basis $\erbasis$ is a subtree of $\Obs$ consisting of distinct states of the hidden Mealy machine $\Sys$. All states in $\basis$ are pairwise apart, i.e., for all $q \neq q' \in \basis$ it holds that $q \apart q'$. 
Initially, $\basis$ contains only the root state, i.e., $\basis = \{q_0\}$. 
For a fixed basis $\erbasis$, the frontier $\erfrontier \subset Q^{\Obs}$ consists of all immediate successors of basis states which are not basis states themselves and are not reached by $\er$-producing inputs, i.e., $\erfrontier = \{ r \mid \delta^{\Obs}(q, i) = r \land \lambda^{\Obs}(q, i) \neq \er \text{ for some } q \in \basis, r \notin \basis \text{ and } i \in I\}$.

\begin{example}
    The tree $\Obs$ in \Cref{fig:obs} is an observation tree for $\Sys$ in Figure~\ref{fig:1a}. Its basis $\erbasis$ is $\{q_0, q_1\}$ as $h \vdash q_0 \apart q_1$ and its frontier $\erfrontier$ is  $\{q_3\}$. State $q_2$ is not in the frontier, as it is reached with an $\er$-producing transition.
\end{example}

\noindent A frontier state $r \in \erfrontier$ is \emph{isolated} if it is apart from all $q \in \erbasis$ and \emph{identified} with a basis state $q \in \erbasis$ if it is apart from all $q' \in \erbasis$ except $q$. 
The basis is \emph{adequate} if $\delta(q,i){\downarrow}$ for all $q \in B, i \in I$, and all states $r \in \erfrontier$ are identified. When the basis is adequate, the observation tree may be folded into an $\er$-persistent Mealy machine hypothesis using $\textsc{BuildHypothesis}_{\er}$.
This construction folds back frontier states to their identified basis state and directs transitions from basis states that produce $\er$ to a special sink state $q_\er$ if such transitions exist. 

\camerareadyversion{}{\clearpage}

\begin{definition} \label{build_hyp}
    Given an observation tree $\Obs$ with basis $\erbasis$ and frontier $\erfrontier$, $\textsc{BuildHypothesis}_{\er}$ constructs a hypothesis Mealy machine $\Hyp =$ $(\erbasis \,\cup\, {q_\er}, q_0,$ $I, O,$ $\delta', \lambda')$\footnote{If there exists some $b \in \erbasis$ such that $\delta(b,i)=b$ and $\lambda(b,i)=\er$ for all $i \in I$, we redirect all transitions from $q_\er$ to $b$ and remove the afterwards unreachable $q_\er$.}. Functions $\delta'$ and $\lambda'$ are constructed as follows.
    \begin{equation*}
        \delta'(q,i), \lambda'(q,i) = \begin{cases}
            q_\er, \er & \text{if } \lambda^{\Obs}(q,i)=\er \lor \delta^\Obs(q,i){\uparrow} \lor q = q_\er\\
            b, \lambda^{\Obs}(q,i) & \text{if } \delta^{\Obs}(q,i)=b \in \erbasis\\
            b, \lambda^{\Obs}(q,i) & \text{if } \delta^{\Obs}(q,i)=r \in \erfrontier \land \exists b \in B \text{ s.t. } \neg(b \apart r)\\
        \end{cases}
    \end{equation*}
\end{definition}

Moreover, we use $\textsc{OQ}_{\er}$ to pose inputs one by one and stop after an error $\er$ is observed. We write $\access(q)$ to indicate the word leading to $q$ in $\Obs$.
The $\erlsharp$ algorithm repeatedly applies the following rules until the correct hypothesis is found:\looseness=-1
\begin{itemize}[noitemsep,topsep=0pt]
    \item The \emph{promotion rule} adds $q \in \erfrontier$ to the basis $\basis$ if $q$ is isolated.
    \item The \emph{extension rule} poses $\textsc{OQ}_{\er}(\access(q) i)$ for $q \in \basis$, $i \in I$ if $\delta(q,i){\uparrow}$.
    \item The \emph{separation rule} takes a state $r \in \erfrontier$ that is not apart from $q, q' \in B$ and poses $\textsc{OQ}_{\er}(\access(r) \sigma)$ with $\sigma \vdash q \apart q'$, resulting in $q \apart r$ or $q \apart r'$.
    \item The \emph{equivalence rule} builds the hypothesis $\Hyp$ if $\erbasis$ is adequate. The procedure checks whether $\Obs$ and $\Hyp$ agree on traces and then poses $\textsc{EQ}(\mathcal{H})$, handling a counterexample when necessary and terminating the algorithm if $\Hyp$ is equivalent to $\Sys$. Only performing an EQ when $\erbasis$ is adequate ensures each counterexample leads to a new basis state.
\end{itemize}
The equivalence rule ensures that the algorithm terminates only once the correct model is learned. A SUL $\Sys$ is considered \emph{learned} if $\Hyp \sim \Sys$, where $\Hyp$ is the hypothesis produced by the algorithm. Correctness thus reduces to proving termination.
Each rule application increases a norm, analogous to $\lsharp$~\cite{DBLP:conf/tacas/VaandragerGRW22}. 
\begin{lemma} \label{lem:error_learning}
    Let $\Sys$ be an $\er$-persistent SUL with $n$ states and $k$ inputs. $\erlsharp$ learns $\Sys$ within $\bigO(kn^2 + n \log m)$ OQs and at most $n - 1$ EQs, where $m$ is the length of the longest counterexample returned by the EQs.
\end{lemma}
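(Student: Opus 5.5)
We follow the termination argument of $\lsharp$~\cite{DBLP:conf/tacas/VaandragerGRW22}: we define a norm on observation trees that every rule application strictly increases, bound the norm by a polynomial in $n$ and $k$, and count the OQs each rule spends. The main change is an invariant about the $\er$-persistent setting. Every basis state of $\Obs$ corresponds to a distinct state of $\Sys$, because basis states are pairwise apart and $\Obs$ is an observation tree, so $|\erbasis| \le n$.

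The norm is the triple
\[
N(\Obs) = \tfrac{|\erbasis|(|\erbasis|+1)}{2} + |\{(q,i) \in \erbasis\times I \mid \delta^{\Obs}(q,i){\downarrow}\}| + |\{(r,q) \in \erfrontier\times\erbasis \mid r \apart q\}|.
\]
Frontier states all arise as non-$\er$ successors of basis states, so $|\erfrontier| \le kn$. Hence $N(\Obs) \le \bigO(n^2) + kn + kn\cdot n = \bigO(kn^2)$.

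We then check each rule against this norm.
\begin{itemize}
\item Promotion raises the first term by $|\erbasis|+1$. It may delete pairs from the third term, but at most $|\erbasis|$ of them (those of the promoted state), so the net change is positive; this is the same computation as in $\lsharp$.
\item Extension raises the second term and costs one OQ.
\item Separation raises the third term and costs one OQ.
\end{itemize}
Defining $\erfrontier$ without $\er$-successors only shrinks the frontier. Posing queries through $\textsc{OQ}_{\er}$ loses no information, because of $\er$-persistence of $\Sys$: the suffix after the first $\er$ is determined. Consequently the apartness conclusions of the separation rule still hold. Extension and separation together therefore cost $\bigO(kn^2)$ OQs.

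For the equivalence rule we must show that a counterexample leads to a new basis state, so that at most $n-1$ EQs are posed (the initial basis has one state, and there are at most $n$).

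\begin{enumerate}
\item Given a counterexample $\sigma$, first truncate it with $f_\er$. This is justified by the argument behind Theorem~\ref{lem:error_test_suite}: since $\Hyp$ built by $\textsc{BuildHypothesis}_{\er}$ and $\Sys$ are both $\er$-persistent, the first disagreement occurs no later than the first $\er$ of $\Hyp$ (or of $\Sys$).
\item Then run the $\lsharp$ binary-search counterexample processing, which costs $\bigO(\log m)$ OQs and yields a frontier state that is apart from its identified basis state, or a transition to $q_\er$ that now conflicts with observations.
\end{enumerate}

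The main obstacle is the sink $q_\er$, which is not a basis state of the tree, including the footnote merging it with a basis state. There are two cases.
\begin{itemize}
\item A counterexample may expose a transition directed to $q_\er$ because $\delta^\Obs(q,i){\uparrow}$. Adequacy requires all transitions of basis states to be defined, so this case cannot occur.
\item Otherwise the transition went to $q_\er$ because the tree shows $\er$. By $\er$-persistence all continuations of that transition output $\er$ in $\Sys$, so $\Hyp$ agrees with $\Sys$ there and the disagreement must lie in the basis or frontier part.
\end{itemize}
Thus the binary search lands on a frontier state in $\erfrontier$ reached by a non-$\er$ transition. That state is isolated or re-identified, which forces a promotion or separation before the next EQ, so $|\erbasis|$ grows between EQs.

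Since the norm increases at most $\bigO(kn^2)$ times and at most $n-1$ counterexamples are processed at $\bigO(\log m)$ OQs each, the total is $\bigO(kn^2 + n\log m)$ OQs and at most $n-1$ EQs.
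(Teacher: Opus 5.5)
Your proposal is correct and follows essentially the paper's own proof: it uses the identical norm $N_\er(\Obs)$, the same promotion/extension/separation analysis, the same $\bigO(kn^2)$ bound, and the same equivalence-rule argument (a counterexample cut at the first $\er$ still exhibits a conflict, counterexample processing then isolates a frontier state, so each failed EQ forces a new basis state), which the paper splits into auxiliary lemmas on $\er$-persistence of $\Hyp$, on $\textsc{OQ}_\er(\rho)$, and on $\textsc{ProcCounterEx}_\er$. The paper spells out two points you leave implicit: a binary-search query $\textsc{OQ}_\er(\access(q')\sigma_2\eta)$ that stops early still gives $q'\apart r'$ via the prefix ending in $\er$, because the tree never extends past an $\er$ output; and the processed frontier state is genuinely isolated, since apartness is monotone, so ``re-identified'' cannot occur and a promotion is forced.
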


\subsection{Learning with a Reference} \label{sec:adap_lsharp_perm}
Next, we introduce $\eralsharp$, a so-called adaptive variant of $\erlsharp$. Traditionally, adaptive AAL methods incorporate domain knowledge in the form of a reference \emph{Mealy machine} via \emph{rebuilding}~\cite{DBLP:conf/ifm/DamascenoMS19,DBLP:conf/birthday/FerreiraHS22} and \emph{state matching}~\cite{DBLP:conf/fm/KrugerJR24}. 
Instead, we assume a minimal \emph{DFA} representation $\R$ for the reference language $\K$, thereby explicitly relying on the regularity of $\K$. We borrow the notation from Mealy machines for DFAs.
Compared to $AL^{\#}$~\cite{DBLP:conf/fm/KrugerJR24}, we use tailored notions of rebuilding and matching and builds upon \erlsharp. The main differences are clarified below. 

\subsubsection*{Rebuilding.}
Let $P$ be a state cover and $W$ a separating family for $\R$. We introduce two rules that extend \erlsharp from Sec.~\ref{sec:lsharp_perm}. The \emph{rebuilding rule} aims to show apartness between every $q \in \erbasis$ and some $r \in \erfrontier$ to find a state in $\R$ that is not contained in $\erbasis$ yet.
The prioritized promotion rule ensures that $\access(q) \in P$ for every $q \in \erbasis$, which allows applying the rebuilding rule more often. 
\begin{example}
    Suppose we learn $\Sys$ using $\K_2$ (from \Cref{fig:overview}). Initially, $Q^\Obs = \{q_0\}$. Let $q = q' = q_0$, $i = h$ and $k \vdash s_0 \apart s_1$. Because $\varepsilon, h \in P$ and $\delta^{\Obs}(t_0,h){\uparrow}$, the rebuilding rule is applied and poses $\textsc{OQ}_{\er}(hk)$ and $\textsc{OQ}_{\er}(k)$ to isolate $\delta^{\Obs}(q_0,h)$.
\end{example}
We formally define the rebuilding rules as follows:
\begin{itemize}[noitemsep,topsep=0pt]
    \item Consider states $q, q' \in \erbasis$ and $i \in I$. If ($\delta(q,i){\uparrow}$ or $\delta(q,i)\in \erfrontier$) and $\access(q')$, $\access(q)i \in P$, then the \emph{rebuilding rule} poses queries $\textsc{OQ}_{\er}(\access(q) i \sigma)$ with $\sigma = \mathsf{sep}(W,\delta^{\R}(\access(q')),\delta^{\R}(\access(q)i))$. We only apply this rule if $\sigma$ \emph{guarantees progress}. Some $\sigma \in I^*$ guarantees progress if there exists maximal prefixes $\rho$ of $i\sigma$ and $\rho'$ of $\sigma$ such that $\delta^{\Obs}(q,\rho){\downarrow}$, $\delta^{\Obs}(q',\rho'){\downarrow}$ and either $(\delta^\Obs(q,i\sigma){\uparrow} \land \delta^{\Obs}(q,\rho)\neq\er)$ or $(\delta^\Obs(q',\sigma){\uparrow} \land \delta^{\Obs}(q',\rho')\neq\er)$.    
    \item If $q \in \frontier$ is isolated and $\access(q) \in P$,   \emph{prioritized promotion} adds $q$ to~$\erbasis$.
\end{itemize}

\subsubsection{Matching.}
Rebuilding heavily relies on the chosen $P$ to discover  states in $\R$. Thus, if $\R$ diverges from the SUL (only) in the initial state, rebuilding may fail to discover states.
State matching, on the other hand, looks at the residual language of states in $\R$ and aims to match basis states with these languages.
The matching relation $\mathsf{mdeg} \colon Q^{\Obs} \times Q^{\R} \to \mathbb{R}$ presented in~\cite{DBLP:conf/fm/KrugerJR24} matches states based on the outputs of the words defined for the basis state. 
In contrast, our matching relation measures how often the basis state and $\K$ agree on whether a word yields an error.
We define $\mathsf{WI}(q) = \{(w,i) \in I^* \times I \mid \delta^\Obs(q,wi){\downarrow}\}$ as the defined words from state $q \in Q^\Obs$. We define
\[
\mathsf{mdeg}(b,q) = 
    \displaystyle \frac{|\{ (w,i) \in \mathsf{WI}(b) \mid \lambda^{\Obs}(\delta^\Obs(b,w),i) = \er \leftrightarrow  \delta^{\R}(q,wi) \notin Q_F^{\R} \}|}{|\mathsf{WI}(b)|}, 
\]
using $\nicefrac{0}{0} = 0$. States $b \in \basis$ and $q \in Q^{\R}$ \emph{match}, written $b \matches q$, if for all $q' \in Q^{\R}$ with $q' \neq q$, $\mathsf{mdeg}(b,q) \geq \mathsf{mdeg}(b,q')$. We use this matching relation in the match separation and match refinement rules.
\footnote{Prioritized separation, as used in $AL^{\#}$, is omitted here because it prioritizes witnesses in $W$ which only concern $\er$-production, $\Obs$ likely contains shorter witnesses.}
\begin{example}
    Consider state $q_1$ in $\Obs$ (\Cref{fig:obs}) and reference $\K_1$ (\Cref{fig:1e}). State $q_1$ matches $p_1$ as $\mathsf{mdeg}(q_1,p_1)=\nicefrac{|\{k,h\}|}{|\{k,h\}|}=1$, $\mathsf{mdeg}(q_1,p_0)=\nicefrac{|\{h\}|}{|\{k,h\}|}=\nicefrac{1}{2}$ and $\mathsf{mdeg}(q_1,p_2)=\nicefrac{|\emptyset|}{|\{k,h\}|}=0$. 
    Because $q_1 \matches p_1$, $\delta^\Obs(q_1,k) \in \erfrontier$ and  $\delta^{\R}(p_1,k)$ does not match any $b \in \erbasis$, we apply match separation to try to show apartness between $q_3$ and $q_1$ with $\sigma = k$ as $\lambda^\Obs(q_1,k)\neq \er \land \delta^{\R}(hk,k) \notin Q_F^{\R}$. We pose $\textsc{OQ}_\er(hkk)$ which leads to isolation of $q_3$.
\end{example}
We formally define the matching rules as follows:
\begin{itemize}[noitemsep,topsep=0pt]
    \item Consider basis states $q, q'$, reference state $p, p'$ and $i \in I$. If there exists $\delta^{\Obs}(q,i) = r \in \erfrontier$, $\delta^{\R}(p,i) = p'$ such that $\neg(q' \apart r)$, $p$ matches $q$ and $p'$ has no matches, then the \emph{match separation rule} poses $\textsc{OQ}_{\er}(\access(q)i\sigma)$ where $\sigma$ is chosen such that $\last(\lambda^\Obs(q',\sigma)) = \er \leftrightarrow \delta^{\R}(p',\sigma) \in Q_F^{\R}$.     
    \item If frontier state $q$ matches $p, p' \in Q^\R$, the \emph{match refinement rule} poses $\textsc{OQ}_{\er}(\access(q)i\sigma)$ with $\sigma = \mathsf{sep}(W,p,p')$.
\end{itemize}
The $\eralsharp$ algorithm (listed in \Cref{alg:persistent_errors_alsharp}) consists of $\erlsharp$ and the above-mentioned rules.
Checking for undiscovered states in $\R$ increases complexity compared to $\erlsharp$, but often reduces the number of EQs in practice.

\begin{lemma} \label{lem:mostly_learning}
    Let $\Sys$ be an $\er$-persistent SUL with $n$ states and $k$ inputs. Let $\K$ an $\er$-persistent reference, canonically represented by a DFA with $o$ states. $\eralsharp$ learns $\Sys$ within $\bigO(kn^2 + no^2 + n \log m)$ OQs and at most $n - 1$ EQs, where $m$ is the length of the longest counterexample returned by any EQ.

\end{lemma}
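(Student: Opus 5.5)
We follow the norm-based termination argument of $\lsharp$~\cite{DBLP:conf/tacas/VaandragerGRW22}, as was done for Lemma~\ref{lem:error_learning}, and account separately for the queries added by the rebuilding and matching rules. Correctness is inherited from $\erlsharp$: the equivalence rule is unchanged, so the algorithm only terminates with $\Hyp \sim \Sys$. It therefore suffices to bound the number of rule applications and the queries each one poses.

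\emph{Rules inherited from $\erlsharp$.} Promotion, extension, separation and equivalence behave exactly as in $\erlsharp$, and prioritized promotion is a special case of promotion. We reuse the norm from Lemma~\ref{lem:error_learning}. Its components are $|\erbasis|$, the number of defined basis transitions, and the number of pairs of a frontier state and a basis state that are apart. The basis always consists of pairwise apart states of $\Sys$, so $|\erbasis|\le n$; the frontier has at most $kn$ states. These rules thus contribute $\bigO(kn^2 + n\log m)$ OQs and at most $n-1$ EQs, as before.

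\emph{New rules.} We show that each application of a new rule either strictly increases the same norm or extends $\Obs$ with at least one new transition in a position that the norm can charge.

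\begin{itemize}[noitemsep,topsep=0pt]
\item \textbf{Rebuilding.} The ``guarantees progress'' side condition says that the query $\access(q)i\sigma$ (respectively $\access(q')\sigma$) extends $\Obs$ beyond a non-$\er$ state. By $\er$-persistence and $\textsc{OQ}_\er$, the query adds new observations. The pairs $(\access(q'), \access(q)i)$ with both words in $P$ correspond to pairs of states of $\R$, so there are at most $o^2$ triples $(q,q',i)$ of interest. Each application is attributed to a basis state it helps discover or to a triple it exhausts. Since $\sigma$ comes from a separating family of $\R$, each query costs a single OQ. Summed over at most $n$ basis states, rebuilding contributes $\bigO(no^2)$.
\item \textbf{Match separation and match refinement.} Here each application pairs a basis (or frontier) state with a pair of reference states. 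Refinement is charged to pairs $(p,p')$ of reference states for each of at most $n$ basis-relevant states, giving $\bigO(no^2)$. Match separation either creates apartness, which increases the norm, or changes $\mathsf{mdeg}$ values. These changes are charged in the same way.
\end{itemize}

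The EQ bound is unchanged. An EQ is posed only when $\erbasis$ is adequate, and each counterexample yields a new basis state, so at most $n-1$ EQs occur.

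The main obstacle is showing that the new rules cannot fire unboundedly without progress. This is especially delicate for the matching rules, because $\mathsf{mdeg}$ is a ratio that may oscillate as $\Obs$ grows. I would first try to prove that every application of a new rule adds at least one transition to $\Obs$ at a non-$\er$ state within a bounded region: the basis, the frontier, or their suffixes along the $W$-words of $\R$. The size of this region is $\bigO(n o^2)$ up to the inherited terms. If that fails, I would instead restrict each rule to fire at most once per (basis state, reference-state pair) combination, which suffices for the stated bound.
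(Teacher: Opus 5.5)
There is a genuine gap, and it is exactly the ``main obstacle'' you name: you never give a progress measure for match separation. The claim that a firing ``either creates apartness or changes $\mathsf{mdeg}$ values, charged in the same way'' bounds nothing, because $\mathsf{mdeg}$ is a non-monotone ratio. Your bounded-region idea (``suffixes along the $W$-words of $\R$'') does not cover this rule either. Its suffix $\sigma$ is an arbitrary disagreement witness for $(q',p')$ read off from $\Obs$, not a word of $W$. Nor does a firing necessarily extend $\Obs$. One extra disagreeing word need not change which reference state maximises $\mathsf{mdeg}(q,\cdot)$. So $q \matches p$, ``$p'$ has no match'' and $\neg(q'\apart r)$ can all survive, and the rule can re-pose the same, already answered, query $\access(q)i\sigma$ forever. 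Your fallback of letting each rule fire once per combination changes the algorithm; it does not prove the lemma.

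The missing idea is a monotone quantity for matching. The paper adds to the $\erlsharp$ norm the set of pairs $(q,p)$, with $q$ a basis state and $p$ a reference state, for which $\Obs$ already contains some $\rho$ with $\last(\lambda^{\Obs}(q,\rho))=\er \leftrightarrow \delta^{\R}(p,\rho)\in Q_F^{\R}$, i.e.\ an observed disagreement. This set only grows with $\Obs$ and has at most $no$ elements. Match separation is guarded by the absence of such a witness for $(q,p)$; this guard appears in the full listing, Alg.~\ref{alg:persistent_errors_alsharp}. A case split on whether $\last(\lambda^{\Obs}(q',\sigma))=\er$ shows that the query either yields $r\apart q'$ or makes $i\sigma$ a disagreement witness for $(q,p)$. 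This uses $\delta^{\R}(p,i\sigma)=\delta^{\R}(p',\sigma)$ and the fact that $\textsc{OQ}_\er$ traces contain at most one $\er$, at the end. Analogous monotone components handle the other new rules:
\begin{itemize}
\item rebuilding: pairs in $\erbasis\times\erfrontier$ whose $W$-separator has been run from both sides to its end or to an $\er$;
\item refinement: triples $(q,p,p')$ with $\mathsf{sep}(W,p,p')$ explored in the same sense from $q$, at most $no^2$ of them.
\end{itemize}
Together these give a single norm of size $\bigO(kn^2+no^2)$ that every rule increases, with at most two OQs per step. Your separate count for rebuilding can be rescued more simply, but you must state why. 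Since $\access(q)i,\access(q')\in P$, the triple $(q,q',i)$ fixes $\sigma$, and after one firing the progress condition fails for that triple permanently. Hence there are at most $o^2$ firings.
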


\subsection{Learning with a Sound Reference} \label{sec:sound_alg}
Previously, we did not make any assumption about the quality of the reference. We now assume the reference is sound. In a nutshell, this assumption allows a stronger notion of apartness, relying on witnesses whose existence is implied by the reference instead of only relying on witnesses in the observation tree.
\begin{definition} \label{def:s_ap}
    Let $\Obs$ be an observation tree. 
    States $q, p \in Q^\Obs$ are \emph{$\sound$-apart} w.r.t.\ a sound reference $\K$, written $p \apart_{\sound,\K} q$, if $\sigma\in I^*$ exists such that $\sigma \vdash p \apart q$ or $\last(\lambda^{\Obs}(p,\sigma)) \neq \er$ and $\access^{\Obs}(q)\sigma \notin \K$ (or symmetrically with $p$ and $q$ swapped).
\end{definition}
The definitions of isolated and identified are lifted to $\apart_{\sound,\K}$. The basis is $\sound$-adequate if all $r \in \erfrontier$ are identified and $\delta(q,i){\downarrow}$ for all $q \in B, i \in I$ with $\access^\Obs(q)i \in \K$.

\begin{example}
    Consider reference $\K_1$ from Fig.~\ref{fig:1d} and $\Obs$ from \Cref{fig:obs}. States $q_0, q_4$ are $\sound$-apart with witness $h$, as $\lambda^{\Obs}(q_0,h)\neq \er$, while $\access^\Obs(q_4) \cdot h= hkdh \notin \K_1$. 
\end{example}
We adapt $\erlsharp$ in four main ways to take advantage of a sound reference:
\begin{enumerate}[noitemsep,topsep=0pt]
    \item We use $\textsc{OQ}_{\sound}$ which poses inputs one by one and stops when the following input leads to a rejected state in $\K$ or after an error $\er$ is observed.
    \item We use $\sound$-apartness in the $\erlsharp$ rules.
    \item We adopt the rebuilding procedure of $\eralsharp$ to rebuild the sound reference. 
    \item \Cref{lem:sound_test_suite} and the correctness of the algorithm rely on $\K$ being sound for $\Hyp$. We therefore check this assumption explicitly before posing an EQ restricted to language $\K$. If this check fails, then there must exist some counterexample $\sigma \in I^*$ such that $\sigma \notin \K$ and $\last(\lambda^\Hyp(q_0,\sigma)) \neq \er$. We perform $\textsc{OQ}_{\sound}(\sigma,\K)$ to add the counterexample to $\Obs$ and process it using \textsc{ProcCounterEx}$_\sound$.
\end{enumerate} 
$\soundlsharp$, listed in~\Cref{alg:sound_lsharp}, has the same OQ and EQ complexity as $\lsharp$ and learns a Mealy machine that is correct w.r.t. $\K$.

\begin{theorem} \label{thm:sound_learning}
    Let $\Sys$ be an $\er$-persistent SUL and $\K$ an $\er$-persistent reference. 
    Let $n, k$ and $m$ be as in Lemma~\ref{lem:error_learning}.
     $\soundlsharp$ requires $\bigO(kn^2 + n\ log\ m)$ OQs and at most $n - 1$ EQs to learn the smallest Mealy machine $\Hyp$ such that $\Hyp \sim_{\K} \Sys$ and $\lambda^{\Sys}(\sigma) = q_{\er}$ for all $\sigma \notin \K$.
\end{theorem}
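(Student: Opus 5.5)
The proof reduces the statement to the proof of Lemma~\ref{lem:error_learning}, which in turn follows the norm-based termination argument of $\lsharp$~\cite{DBLP:conf/tacas/VaandragerGRW22}. The key observation is that $\apart_{\sound,\K}$ is \emph{sound} with respect to the target. Fix the Mealy machine $\Sys_\K$ obtained from $\Sys$ by redirecting every input $i$ with $\access(q)i\notin\K$ to an error sink. Because $\K$ is $\er$-persistent and sound for $\Sys$, $\Sys_\K$ is well defined, $\Sys_\K\sim_\K\Sys$, and every $\sigma\notin\K$ yields $\er$. Let $\Sys_\K^{\min}$ be its minimisation. This is the smallest machine with the two required properties: any $\Hyp$ with $\Hyp\sim_\K\Sys$ and errors outside $\K$ is equivalent to $\Sys_\K$. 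The first step is to show that $\Obs$ remains an observation tree for $\Sys_\K^{\min}$, and that $p\apart_{\sound,\K} q$ implies $m(p)\neq m(q)$ in $\Sys_\K^{\min}$. For the non-classical witness this works because $\last(\lambda^\Obs(p,\sigma))\neq\er$, whereas $\access(q)\sigma\notin\K$ forces $\er$ from $m(q)$ in $\Sys_\K$. $\textsc{OQ}_\sound$ never queries beyond $\K$, so all stored outputs agree with $\Sys$, and hence with $\Sys_\K$.

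With this in place, the basis consists of pairwise distinct states of $\Sys_\K^{\min}$, so $|\basis|\le n'\le n$ with $n'=|\Sys_\K^{\min}|$. I then reuse the norm from Lemma~\ref{lem:error_learning}: the number of basis states, plus the number of defined basis transitions (only for inputs with $\access(q)i\in\K$), plus the number of pairs $(r,b)\in\frontier\times\basis$ that are apart. I check rule by rule that each of promotion, extension, separation, rebuilding and equivalence strictly increases this norm. Since the extended apartness only adds apart pairs and never removes one, monotonicity holds as in $\lsharp$. The norm is bounded by $\bigO(kn^2)$. Extension and separation each cost one $\textsc{OQ}_\sound$, and rebuilding costs $\bigO(1)$ queries per progress step. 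Counterexample processing uses binary search, costing $\bigO(\log m)$ per EQ. Together these give $\bigO(kn^2+n\log m)$ OQs.

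For the EQ bound, I argue as in $\lsharp$ that the equivalence rule is only applied to an $\sound$-adequate basis, so every counterexample yields a new basis state. There are two kinds of counterexample. The first kind comes from the explicit soundness check of $\K$ for $\Hyp$: a word $\sigma\notin\K$ with $\last(\lambda^\Hyp(\sigma))\neq\er$. Here I must show that $\textsc{ProcCounterEx}_\sound$ still produces a frontier state that is $\sound$-apart from its folded basis state. The plan is to locate the shortest prefix of $\sigma$ leaving $\K$; by definition of $\apart_{\sound,\K}$, the state reached just before that prefix is apart from the basis state $\Hyp$ maps it to. The second kind comes from a test suite restricted to $\K$, and Theorem~\ref{lem:sound_test_suite} guarantees that such a test suite is complete once $\K$ is sound for both $\Hyp$ and $\Sys$. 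Each EQ increases $|\basis|$, so there are at most $n'-1\le n-1$ EQs. At termination, $\Hyp\sim_\K\Sys$ and $\Hyp$ errs outside $\K$. Minimality follows because the states of $\Hyp$ are pairwise apart in $\Sys_\K^{\min}$ together with the sink.

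I expect the main obstacle to be the counterexample processing for the soundness-check counterexamples. The difficulty is proving that the binary search still terminates with a new isolated frontier state when the discrepancy is not an output mismatch but a disagreement about membership in $\K$, given that $\textsc{OQ}_\sound$ truncates the query before the error is ever observed. I would first try to turn the membership disagreement into an $\sound$-apartness witness ending exactly at the $\K$-boundary. If that fails, I would adjust the norm so that identifying this boundary also counts as progress.
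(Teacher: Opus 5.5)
\textbf{Main gap: the rebuilding rule does not increase your norm.} You reuse the norm of Lemma~\ref{lem:error_learning} (basis size, defined basis transitions, apart basis--frontier pairs) and claim that rebuilding strictly increases it. It need not.

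Let $r=\delta^\Obs(q,i)\in\erfrontier$ and let $\sigma$ be the rebuilding word, say with $\access(q')\sigma\in\K$ and $\access(r)\sigma\notin\K$. Because $\K$ may be incomplete, $\Sys$ can output $\er$ along $\sigma$ from $q'$ exactly at the position where $\access(r)\sigma$ leaves $\K$. Then $\textsc{OQ}_\sound$ adds new nodes below both $q'$ and $r$, yet nothing in your norm moves:
\begin{itemize}
\item there is no output mismatch on common words;
\item there is no $\sound$-apartness witness, since every newly observed non-$\er$ word from either side keeps the other side's access word inside $\K$;
\item there is no new basis transition and nothing to promote.
\end{itemize}
The ``guarantees progress'' side condition only ensures that the tree grows, not that your norm grows.

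The paper handles this in two ways. It adds a fourth component $N_{(\erbasis\times\erfrontier)\downarrow}$, counting pairs $(q',r)$ for which the tree is already maximally defined along the rebuilding word, so rebuilding cannot fire on them again. It also takes $N_Q=|\erbasis|(|\erbasis|+1)$ without the factor $\tfrac12$, so a promotion still pays for losing up to $|\erbasis|$ pairs in two components. The new component is bounded by $|\erbasis|\cdot|\erfrontier|$, so $\bigO(kn^2+n\log m)$ survives. Without it, your count of rebuilding steps is unjustified.

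\textbf{Second issue: you add a hypothesis the theorem does not have.} You assume $\K$ is sound for $\Sys$. You use this both to make ``redirect when $\access(q)i\notin\K$'' a function of the state of $\Sys$ and to obtain $n'\le n$. The theorem does not assume soundness; that is why its conclusion is phrased with $\sim_\K$, and soundness is the hypothesis of the corollary that follows. Under soundness, your $\Sys_\K$ is simply equivalent to $\Sys$.

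For general $\K$, take $\Sys_\K$ to be the product of $\Sys$ with $\R$, sent to the sink on leaving $\K$. Your argument that $\apart_{\sound,\K}$ implies inequivalence in $\Sys_\K^{\min}$ then goes through verbatim, but $n'$ can exceed $n$. For example, take a one-state error-free $\Sys$ over $\{a,b\}$ and let $\K$ be the words of length at most $L$. The basis must reach $L+1$ pairwise $\sound$-apart states, and $\sound$-adequacy alone forces at least $L$ output queries, while $n=1$. So in general your argument yields the bounds with $n'$ in place of $n$; with $n=|\Sys|$ you need soundness, and you should say so explicitly. The paper's proof does not address this either; it calls the bound ``evident from Lemma~\ref{lem:mostly_learning}''.

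\textbf{Where your proposal improves on the paper.} Your explicit target machine is a genuine improvement over that hand-wave. So is your treatment of \textsc{CheckSoundness} counterexamples, which the paper's proof skips, and that case closes as you hoped. Let $\alpha$ be the longest prefix in $\K$ of such a counterexample, with $\alpha i\notin\K$, and let $q=\delta^\Hyp(\alpha)$, a basis state. Since $\lambda^\Hyp(q,i)\neq\er$, the construction of $\textsc{BuildHypothesis}_\er$ forces $\lambda^\Obs(q,i)$ to be defined and different from $\er$. Hence $i$ itself witnesses $q\apart_{\sound,\K}\delta^\Obs(\alpha)$. If $\Sys$ errs earlier along $\alpha$, there is already an ordinary output mismatch.
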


When a reference is sound and the learned Mealy machine is consistent with the SUL on all non-error producing words, then $\soundlsharp$ learns the correct Mealy machine as all other words give output $\er$. 

\begin{corollary}
    If $\K$ is a sound reference for $\Sys$, then $\soundlsharp$ learns a Mealy machine $\Hyp$ such that $\Hyp \sim \Sys$.
\end{corollary}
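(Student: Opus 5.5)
The corollary follows from \Cref{thm:sound_learning} together with the soundness of $\K$ for $\Sys$. The strategy is to show that the learned $\Hyp$ agrees with $\Sys$ on every word by splitting $I^*$ into $\K$ and its complement. \Cref{thm:sound_learning} gives a Mealy machine $\Hyp$ with $\Hyp \sim_{\K} \Sys$. By construction of $\soundlsharp$ (the explicit check before each EQ), $\K$ is also sound for $\Hyp$, i.e.\ $\last(\lambda^{\Hyp}(\sigma)) = \er$ for every $\sigma \notin \K$. Hence it suffices to prove $\lambda^{\Hyp}(\sigma) = \lambda^{\Sys}(\sigma)$ for all $\sigma \notin \K$.

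Fix $\sigma \notin \K$. If $\varepsilon \notin \K$, then by $\er$-persistence of $\K$ no word lies in $\K$. Soundness for both machines then makes every output of $\Hyp$ and of $\Sys$ equal to $\er$, so $\Hyp \sim \Sys$ immediately.

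Otherwise take $j = \firsterror{\K}{\sigma}$, so that $\sigma_{:j} \in \K$ and, by $\er$-persistence of $\K$, every prefix $\sigma_{:l}$ with $l > j$ lies outside $\K$. I am reading the paper's indexing so that $\sigma_{:j}$ is the longest prefix of $\sigma$ in $\K$. Since $\sigma_{:j} \in \K$ and $\Hyp \sim_\K \Sys$, the outputs of $\Hyp$ and $\Sys$ agree on all positions up to $j$. For each later position $l > j$, the prefix $\sigma_{:l}$ is not in $\K$. Soundness of $\K$ for $\Sys$ then gives $\last(\lambda^{\Sys}(\sigma_{:l})) = \er$, and soundness for $\Hyp$ gives the same for $\Hyp$, so both outputs at position $l$ equal $\er$. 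Thus $\lambda^{\Hyp}(\sigma) = \lambda^{\Sys}(\sigma)$ position by position.

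The main obstacle is justifying that $\K$ is sound for the final $\Hyp$, not only for $\Sys$. The statement of \Cref{thm:sound_learning} phrases this property loosely, as ``$\lambda^{\Sys}(\sigma) = q_\er$ for all $\sigma\notin\K$''. I would read it as the guarantee, enforced by item~4 of the algorithm description, that the final hypothesis passes the soundness check, so that no $\sigma \notin \K$ has a non-$\er$ last output in $\Hyp$. If that reading fails, a fallback is to appeal to $\textsc{BuildHypothesis}_\er$. Under that construction, undefined and $\er$-producing transitions go to $q_\er$, and $\textsc{OQ}_\sound$ never explores beyond $\K$, so transitions leaving $\K$ are directed to $q_\er$. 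The remaining indexing details of $\firsterror{\K}{\cdot}$ are routine.
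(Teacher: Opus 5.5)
Your proposal is correct and follows the paper's own (very brief) justification. Theorem~\ref{thm:sound_learning} gives $\Hyp \sim_{\K} \Sys$, and the soundness check before every EQ makes $\K$ sound for the returned $\Hyp$; the paper's proof of part~(4) of that theorem states this property for $\Hyp$, so your primary reading is the intended one. Soundness of $\K$ for $\Sys$ then forces both machines to output $\er$ at every position whose prefix lies outside $\K$, and your position-by-position argument makes this precise. You should drop the fallback via $\textsc{BuildHypothesis}_{\er}$, because it is false: folding an identified frontier state back into a basis state can give $\Hyp$ non-$\er$ outputs on words outside $\K$, and this is exactly why $\soundlsharp$ needs the explicit soundness check.
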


\camerareadyversion{}{\clearpage}
\begin{algorithm}[!htbp]
	\begin{algorithmic}
		\Procedure{L$^{\#}_{\er,\sound}$}{$\K$}
      \State $\R_\K \gets$ minimal DFA representation of $\K$
      \State $P \gets$ minimal state cover of $\R_\K$ 
      \State $W \gets$ separating family of $\R_\K$

      \DoIf{$(\delta^{\Obs}(q,i) \in \erfrontier \lor \delta^{\Obs}(q,i){\uparrow}), \neg(q' \apart_{\sound,\K} \delta^{\Obs}(q,i)),$ $\access^{\Obs}(q)i, \access^{\Obs}(q') \in P$,\\
      \qquad\qquad$\sigma = \mathsf{sep}(W,\delta^{\R_\K}(\access^{\Obs}(q)i),\delta^{\R_\K}(\access^{\Obs}(q')))$,\\ \qquad\qquad $\sigma$ guarantees progress w.r.t. soundness\footnotemark\\
      \qquad\qquad for some $q, q' \in \erbasis, i \in I$}
      \Comment{rebuilding}
      \State $\textsc{OQ}_{\sound}(\access^{\Obs}(q)i\sigma,\K)$
      \State $\textsc{OQ}_{\sound}(\access^{\Obs}(q')\sigma,\K)$
      \ElsDoIf{$q$ isolated, for some $q \in \erfrontier$ with $\access^{\Obs}(q) \in P$}
      \Comment{prioritized promotion}
      \State $\basis \gets \basis \cup \{ q \}$
      \EndDoIf
		\DoIf{$q$ isolated, for some $q \in \erfrontier$}
      \Comment{promotion}
      \State $\basis \gets \basis \cup \{ q \}$
		\ElsDoIf{$\delta^{\Obs}(q,i){\uparrow}, \access^{\Obs}(q)i\in \K$, for some $q \in \basis, i \in I$}
      \Comment{extension}
      \State $\textsc{OQ}_{\sound}(\mathsf{access}(q) \; i, \K)$
		\ElsDoIf{$\neg(q\apart_{\sound,\K} r)$, $\neg(q\apart_{\sound,\K} r')$, $r\neq r'$\\
      \qquad\qquad for some $q \in \erfrontier$ and $r,r'\in \erbasis$}
      \Comment{separation}
      \State $\sigma \gets \text{witness of \(r\apart_{\sound,\K} r'\)}$
      \State $\textsc{OQ}_{\sound}(\mathsf{access}(q) \; \sigma,\K)$

		\ElsDoIf{$\erbasis$ is $\sound$-adequate}
      \Comment{equivalence}
		  \State $\Hyp \gets \BuildHypothesis_{\er}$ 
		  \State $(b, \sigma) \gets \CheckConsistency(\Hyp)$	
        \If{$b = \texttt{yes}$}
         \State $(b, \rho) \gets \CheckSoundness(\K,\Hyp)$
            \If{$b = \texttt{yes}$}
            \State $(b, \rho) \gets \EquivalenceQuery(\Hyp)$ \Comment{equivalence on $\K$}
            \If{$b = \texttt{yes}$} 
               \State \Return $\Hyp$
            \EndIf
            \EndIf
            \State $\textsc{OQ}_{\sound}(\rho,\K)$
            \State $\sigma \gets$ shortest prefix of $\rho$ such that
            $\delta^{\Hyp}(q_0^\Hyp, \sigma) \apart_{\sound,\K} \delta^{\Obs}(q_0^\Obs, \sigma)$
         (in $\Obs$)
         \EndIf
		  \State $\textsc{ProcCounterEx}_\sound(\Hyp, \sigma)$ \Comment{using $\apart_{\sound,\K}$}
    \EndDoIf
		\EndProcedure
	\end{algorithmic}
	\caption{$\erlsharp$ for a sound reference. }
	\label{alg:sound_lsharp}
\end{algorithm}
\footnotetext{Some $\sigma \in I^*$ guarantees progress w.r.t. soundness if there exists maximal prefixes $\rho$ of $i\sigma$ and $\rho'$ of $\sigma$ such that $\delta^{\Obs}(q,\rho){\downarrow}$, $\delta^{\Obs}(q',\rho'){\downarrow}$ and either $(\access^{\Obs}(q)i\sigma \in \K \land \delta^\Obs(q,i\sigma){\uparrow} \land \delta^{\Obs}(q,\rho)\neq\er)$ or $(\access^{\Obs}(q')\sigma \in \K \land \delta^\Obs(q',\sigma){\uparrow} \land \delta^{\Obs}(q',\rho')\neq\er)$.}

\subsection{Learning with a Sound and Complete Reference} \label{sec:sound_compl_alg}
Next, we discuss optimising $\erlsharp$ for utilising sound and complete references.
\begin{definition} \label{def:sc_ap}
    Let $\Sys$ be an $\er$-persistent Mealy machine. 
    States $q, p \in Q^\Obs$ are \emph{$\complete$-apart} w.r.t.\ a complete reference $\K$, written $p \apart_{\complete,\K} q$, if $\sigma\in I^*$ exists such that $\sigma \vdash p \apart q$ or $\last(\lambda^{\Obs}(p,\sigma)) = \er$ and $\access^{\Obs}(q)\sigma \in \K$ (or symmetrically with $p$ and $q$ swapped).
\end{definition}
Two states are $\sound\complete$-apart if they are $\sound$-apart or $\complete$-apart. We write $p \apart_{\sound\complete,\K} q$ to indicate that states $p$ and $q$ are $\sound\complete$ apart w.r.t. reference $\K$.
We lift the definition of isolated and identified states to $\sound\complete$-apartness.

Contrary to $\soundlsharp$, we do not use the rebuilding procedure of $\eralsharp$. Because we assume references are minimized, the SUL must have at least as many distinct states as the sound and complete reference. 
Therefore, we construct the test suite $T_\K$ for $\R$ with $P$ a minimal \emph{accepting} state cover and $W$ a separating family for $\R$. Here, an accepting state cover of a DFA is a state cover that contains a word $\sigma$ such that $\delta(q_0,\sigma) = q$ for every reachable state $q \in Q_F$.
\begin{equation}\label{eq:t}
    T_\K = \{ piw \mid p \in P, i \in I^{\leq 1}, w \in W_{\delta^{\R}(pi)} \}
\end{equation} 
A test suite such as $T_\K$ is naturally contained in a $k$-complete test suite with $k \geq 0$ generated by the HSI-method~\cite{LuoTesting1994,PetrenkoTesting1993}. After executing $T_\K$, we set the basis to $\erbasis = \{ \delta^{\Obs}(p) \mid p \in P \}$ as all states reached by $P$ are pairwise apart. 
\begin{lemma} \label{lem:rebuilding}
    Let $\Sys$ be a Mealy machine, $\K$ a sound and complete reference for $\Sys$, $P$ a minimal accepting state cover and $T_\K$ as in \Cref{eq:t}.
    Consider the observation tree $\Obs$ that arises when executing $T_\K$ using $\textsc{OQ}_{\sound}$, then $P$ forms a basis, i.e., all states reached by $p \in P$ are pairwise $\sound\complete$-apart.
\end{lemma}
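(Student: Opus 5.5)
Fix distinct $p, p' \in P$ and write $r = \delta^{\R}(p)$, $r' = \delta^{\R}(p')$. Since $P$ is a minimal accepting state cover, $r \neq r'$ and both are accepting. Because $\R$ is minimal, $r$ and $r'$ are apart in $\R$, so by the separating family property there is $w = \mathsf{sep}(W,r,r') \in W_r \cap W_{r'}$ that distinguishes them. Taking $i = \varepsilon$ in \Cref{eq:t}, both $pw$ and $p'w$ lie in $T_\K$. Exactly one of $pw, p'w$ is in $\K$; without loss of generality $pw \in \K$ and $p'w \notin \K$. The plan is to turn this language-level difference into either $\sound$-apartness or $\complete$-apartness of $q = \delta^{\Obs}(p)$ and $q' = \delta^{\Obs}(p')$, using a prefix of $w$ as witness.

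First I will determine what is actually stored in $\Obs$, since $\textsc{OQ}_{\sound}$ truncates. Because $\K$ is $\er$-persistent and $pw \in \K$, every prefix of $pw$ is in $\K$. So $\textsc{OQ}_{\sound}$ never stops early on $pw$ because of $\K$. By completeness of $\K$, no prefix of $pw$ yields $\er$ in $\Sys$, so the query never stops early because of an error either. Hence $\delta^{\Obs}(q,w){\downarrow}$ and $\last(\lambda^{\Obs}(q,w)) \neq \er$. Here I use that $p \in \K$ (since $r$ is accepting), so all of $p$ is also executed.

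For $p'w$, let $w'$ be the longest prefix of $w$ with $p'w' \in \K$; this is well defined because $p' \in \K$. Then $w' \neq w$. Since $w'$ is a prefix of $w$, it is also defined from $q$ with a non-$\er$ last output, provided $w' \neq \varepsilon$.

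The main case is $w' = \varepsilon$ or $w' \neq \varepsilon$, and in both I use the witness $\sigma = w'a$, where $a$ is the next letter of $w$. Then $\sigma$ is a prefix of $w$, so $\last(\lambda^{\Obs}(q,\sigma)) \neq \er$ by the first step, while $\access^{\Obs}(q')\sigma = p'w'a \notin \K$. This is exactly \Cref{def:s_ap}, so $q \apart_{\sound,\K} q'$.

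The delicate point, and what I expect to be the main obstacle, is checking that the required transitions really exist in $\Obs$ despite the truncation. For the $\sound$ witness I only need $\lambda^{\Obs}(q,\sigma)$ defined, and $pw$ was fully executed, so this holds. If the definition is read as also requiring $\lambda^{\Obs}(q',\cdot)$ to be defined, I fall back on $\complete$-apartness. In that case I argue that, by soundness, $p'w'a$ yields $\er$, but $\textsc{OQ}_{\sound}$ stops before executing $a$, so that transition is absent. I would therefore rely on the $\sound$ form, which references only $\K$ on the $q'$ side, and handle the symmetric case $p'w \in \K$ by swapping roles.
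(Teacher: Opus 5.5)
Your proof is correct and follows essentially the same route as the paper: take the separating word $w \in W_r \cap W_{r'}$ and note that $pw, p'w \in T_\K$ with exactly one of them in $\K$. You then use prefix-closure and completeness to see that $pw$ is fully recorded in $\Obs$ with a non-$\er$ last output, and conclude $\sound$-apartness from $p'w \notin \K$. Your refinement to the prefix $w'a$ and the detour through $\complete$-apartness are unnecessary: the full $w$ is already a witness, and in this setting $\textsc{OQ}_{\sound}$ with a complete $\K$ never records an $\er$ in $\Obs$, so $\complete$-apartness could never be witnessed. Your explicit check that $pw$ is executed to the end does fill a step the paper leaves implicit.
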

We adapt $\erlsharp$ in three main ways to exploit a sound and complete reference:
\begin{enumerate}[noitemsep,topsep=0pt]
    \item Similar to $\soundlsharp$, we use $\textsc{OQ}_{\sound}$ instead of $\textsc{OQ}_\er$.
    \item We execute $T_\K$ and set $\erbasis$ to the states reached by $P$ before any of the rules.
    \item We use $\sound\complete$-apartness in the $\erlsharp$ rules.
\end{enumerate}
The $\sclsharp$ algorithm (listed in \Cref{alg:sound_complete_lsharp}) has the same OQ complexity as $\lsharp$ but requires fewer EQs whenever reference \K has more than two states. 

\begin{theorem} \label{thm:soundcomplete_learning}
 Let $\Sys$ be an $\er$-persistent SUL and $\K$ an $\er$-persistent reference. 
    Let $n, k$ and $m$ be as in Lemma~\ref{lem:error_learning}.
    If $\K$ is complete w.r.t.\ $\Sys$, then $\sclsharp$ requires $\bigO(kn^2 + n\ log\ m)$ OQs and at most $n - 1$ EQs to learn the smallest Mealy machine $\Hyp$ such that $\Hyp \sim_{\K} \Sys$ and $\lambda^{\Hyp}(\sigma) = q_{\er}$ for all $\sigma \notin \K$. 
\end{theorem}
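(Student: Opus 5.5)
The plan is to reduce the statement to the sound-and-complete situation, but with respect to a \emph{modified} target rather than $\Sys$ itself. Define $\Sys_\K$ as the minimal Mealy machine with $\lambda^{\Sys_\K}(\sigma)=\lambda^{\Sys}(\sigma)$ for $\sigma\in\K$ and $\last(\lambda^{\Sys_\K}(\sigma))=\er$ for $\sigma\notin\K$. Up to equivalence, $\Sys_\K$ is exactly the machine the theorem asks for: the smallest $\Hyp$ with $\Hyp\sim_\K\Sys$ that produces errors outside $\K$. It is regular, for example as a quotient of the product of $\Sys$ with the DFA $\R$.

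The first step establishes three facts about $\Sys_\K$.
\begin{enumerate}[noitemsep,topsep=0pt]
\item $\Sys_\K$ is $\er$-persistent. Outside $\K$ this follows from $\er$-persistence of $\K$. Inside $\K$ it follows from completeness: any $\sigma\in\K$ has a non-$\er$ last output in $\Sys$, since completeness says an $\er$ output forces $\sigma\notin\K$.
\item $\K$ is sound for $\Sys_\K$, by construction.
\item $\K$ is complete for $\Sys_\K$, because on $\K$ the two machines agree and $\Sys$ never errs there.
\end{enumerate}
Hence $\K$ is sound and complete for $\Sys_\K$, even though we only assumed completeness for $\Sys$.

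The second step shows that the run of $\sclsharp$ on $\Sys$ is indistinguishable from a run on $\Sys_\K$. The observation tree is observation-equivalent to one for $\Sys_\K$, because every query goes through $\textsc{OQ}_{\sound}$, which halts before leaving $\K$ or just after an $\er$. On the queried words the outputs of $\Sys$ and $\Sys_\K$ coincide. Likewise, the equivalence query is restricted to $\K$, so any counterexample it returns lies in $\K$ and is therefore also a counterexample for $\Sys_\K$. Consequently $\apart_{\sound\complete,\K}$ is a valid apartness for $\Sys_\K$: $\sound$-apartness uses soundness and $\complete$-apartness uses completeness, and both hold for $\Sys_\K$. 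Also, \Cref{lem:rebuilding} applies with $\Sys_\K$ in place of $\Sys$, so the initial basis consists of the $|P|\geq 1$ states reached by $P$.

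The final step reruns the norm-based termination and complexity argument of \Cref{lem:error_learning} for target $\Sys_\K$. Three points need checking:
\begin{itemize}[noitemsep,topsep=0pt]
\item each rule application strictly increases the norm;
\item the basis never exceeds $|\Sys_\K|$, since basis states are pairwise apart in the refined sense;
\item each failed equivalence query adds a basis state.
\end{itemize}
This yields $\bigO(kn^2+n\log m)$ OQs and at most $n-|P|\le n-1$ EQs. On termination the hypothesis is equivalent to $\Sys_\K$, which is the claimed machine.

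The main obstacle is the state count. Here $n$ must be read as the size of the actual target $\Sys_\K$, and a priori this could exceed $|\Sys|$. I would first try to bound $|\Sys_\K|\le|\Sys|$ by mapping each non-error state of $\Sys_\K$ (reached by some $\sigma\in\K$) to $\delta^\Sys(\sigma)$ and showing this map is injective on distinct residuals. If injectivity fails in general, I would state the bounds with $n=|\Sys_\K|$, which is consistent with the theorem's phrasing of learning ``the smallest'' such $\Hyp$.
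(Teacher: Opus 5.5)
Your reduction is correct and takes a different route from the paper. The paper never leaves $\Sys$. It defines the norm $N_Q+N_{\downarrow}+N_{\apart_{\sound\complete,\K}}$, checks rule by rule that it increases, and takes the bound on the norm from \Cref{thm:sound_learning}. Only at the end does it get $\Hyp\sim_{\K}\Sys$ from the EQ restricted to $\K$, and the $\er$-outputs outside $\K$ from \textsc{CheckSoundness}. Moving to $\Sys_{\K}$, for which $\K$ is sound and complete, supplies something that argument lacks: a reason why $\apart_{\sound\complete,\K}$ is a genuine apartness, so that basis states are distinct states of one fixed machine.

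Your worry about the state count is justified, and the injectivity attempt cannot succeed. Take $I=\{a,b\}$, let $\Sys$ be the one-state machine that never outputs $\er$, and let $\K=\{\sigma\mid|\sigma|\le L\}$. Then:
\begin{itemize}
\item $\K$ is $\er$-persistent and vacuously complete, and $|\Sys_{\K}|=L+1$.
\item The basis obtained from $T_{\K}$ (\Cref{lem:rebuilding} applied to $\Sys_{\K}$) already has $L+1$ states.
\item The executed test words of $T_{\K}$ include $L$ pairwise prefix-incomparable words inside $\K$, one branching off the chain $P$ at each depth $j<L$. So $T_{\K}$ alone costs at least $L$ OQs.
\end{itemize}
Hence with $n=|\Sys|$, as the statement literally says, no bound $\bigO(kn^2+n\log m)$ can hold when $\K$ is merely complete. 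The paper gets $|\Sys|$ by reusing $|\erbasis|\le|\Sys|$ from \Cref{lem:mostly_learning}. That is justified only if $\K$ is also sound, in which case $\Sys_{\K}\sim\Sys$. Your fallback $n=|\Sys_{\K}|$ is therefore necessary, not optional. With it, the cost of executing $T_{\K}$ (which neither you nor the paper counts) is $\bigO(ko^2)$. This fits into $\bigO(kn^2)$, because $\K$ is exactly the non-error language of $\Sys_{\K}$, so $o\le|\Sys_{\K}|+1$.

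What is still missing is the step you call a rerun, which is where the paper's proof actually does its work. $\sclsharp$ is not $\erlsharp$: it queries with $\textsc{OQ}_{\sound}$, extends only along words in $\K$, waits for $\sound$-adequacy, and uses $\apart_{\sound\complete,\K}$ in separation and counterexample processing. So \Cref{lem:error_learning} cannot be invoked as is. The rules do go through, essentially because every query stays inside $\K$ and, by completeness, the tree never contains an $\er$ output. This also shows the completeness-violation rule never fires.
\begin{itemize}
\item For separation, suppose $\access(q)\sigma\notin\K$ for the witness $\sigma$ of $r\apart_{\sound\complete,\K}r'$. Then $q$ would already be $\sound$-apart from a basis state on which $\sigma$ is observed. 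So the query is never truncated, and it creates a new apartness pair.
\item For equivalence you need the $\sound$-version of counterexample processing (\Cref{lem:proc_counter_sound}).
\end{itemize}
Finally, your conclusion that the hypothesis is equivalent to $\Sys_{\K}$ needs \textsc{CheckSoundness}, which you never mention. A positive EQ restricted to $\K$ only gives $\Hyp\sim_{\K}\Sys_{\K}$. It is the soundness check that makes $\Hyp$ output $\er$ outside $\K$, so the check and the restricted EQ together act as a full EQ for $\Sys_{\K}$. Failed soundness checks also yield counterexamples that must be processed, but they cost no EQ.
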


\begin{lemma} \label{lem:nmino}
    If $\K$ is a sound and complete reference for $\Sys$, then $\sclsharp$ learns a Mealy machine $\Hyp$ such that $\Hyp \sim \Sys$ with at most $n - o$ EQs, with $o$ as defined in \Cref{lem:mostly_learning}.
\end{lemma}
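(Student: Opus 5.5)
The proof combines Lemma~\ref{lem:rebuilding} with the EQ-counting argument behind Lemma~\ref{lem:error_learning}. The key quantity is the size of the basis immediately before the first EQ. After $T_\K$ has been executed and $\erbasis$ is set to $\{\delta^{\Obs}(p) \mid p \in P\}$, Lemma~\ref{lem:rebuilding} says these states are pairwise $\sound\complete$-apart. The first step is to check that $\sound\complete$-apartness is \emph{sound} for the SUL. If $p \apart_{\sound\complete,\K} q$ in $\Obs$, then the $\Sys$-states reached by $\access(p)$ and $\access(q)$ are inequivalent. For $\sound$-apartness this uses soundness of $\K$: the witness $\sigma$ gives $\access(q)\sigma \notin \K$, so the last output from $q$ is $\er$, while the observed last output from $p$ is not $\er$. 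For $\complete$-apartness it uses completeness: $\access(q)\sigma \in \K$ forces a non-$\er$ last output from $q$, while the observed last output from $p$ is $\er$. Hence the basis always injects into the states of $\Sys$, so $|\erbasis| \le n$ throughout. Right after initialisation, $|\erbasis| = |P|$.

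The second step relates $|P|$ to $o$. $P$ is a minimal accepting state cover, so it reaches every reachable accepting state of the minimal DFA $\R$. Since $\K$ is $\er$-persistent, the non-accepting states of $\R$ collapse into a single sink. So $o$ is the number of accepting states plus possibly one sink, i.e.\ $|P| \ge o-1$. In $\Sys$ the words outside $\K$ lead to error-producing behaviour, which corresponds to the error sink $q_\er$. This sink is added by $\BuildHypothesis_{\er}$ and is not among the basis states reached by $P$. I would therefore count the states of $\Hyp$ as $|\erbasis| + 1$ (for $q_\er$) against the $n$ states of $\Sys$, which include the error state whenever $\K \neq I^*$. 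This is the step where off-by-one issues must be handled carefully, including the case with no sink, where $|P| = o$. The bound $n-o$ should come out exactly once the sink in $\R$ and $q_\er$ in $\Sys$ are identified.

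The third step is the standard $\lsharp$ progress argument, already used for Theorem~\ref{thm:soundcomplete_learning} and Lemma~\ref{lem:error_learning}. EQs are posed only when the basis is $\sound$-adequate. Each counterexample, processed with $\textsc{ProcCounterEx}$ under $\sound\complete$-apartness, yields a frontier state that becomes isolated and is eventually promoted. So every EQ except the last strictly increases $|\erbasis|$. The basis starts with $|P|$ states (plus the implicit sink) and is bounded by $n$. Hence at most $n - o$ failing EQs occur before the hypothesis has $n$ states. At that point it is correct by Theorem~\ref{thm:soundcomplete_learning} and its corollary-style argument: soundness gives $\Hyp \sim \Sys$ from $\Hyp \sim_\K \Sys$.

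I expect the main obstacle to be the exact accounting in the second step. One must justify that the initial basis together with $q_\er$ already accounts for $o$ states of $\Sys$. The delicate cases are when $\K = I^*$ (no sink) and when an accepting DFA state corresponds to an $\Sys$-state that equals the error sink. The latter cannot happen, because completeness forces accepted words to produce non-$\er$ output. The counting of EQs as "final successful one plus basis increments" then needs to be stated so that it gives $n-o$ rather than $n-o+1$.
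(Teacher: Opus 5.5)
Your route is the paper's. Lemma~\ref{lem:rebuilding} gives an initial basis of $|P|$ states: $o$ if $\K=I^*$, otherwise $o-1$, with the rejecting sink of $\R$ matched by the $q_\er$ that $\textsc{BuildHypothesis}_{\er}$ adds. Each failing EQ then adds a basis state, and the basis is bounded by the states of $\Sys$. Your first step uses soundness and completeness to check that $\apart_{\sound\complete,\K}$ implies inequivalence in $\Sys$. That makes explicit what the paper uses silently when it bounds the basis by $n$.

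There is a genuine gap, exactly where you expected the off-by-one trouble. Your claim that the error sink of $\Sys$ ``is not among the basis states reached by $P$'' is false. Completeness only gives $\last(\lambda^{\Sys}(p)) \neq \er$ for $p \in \K$. It says nothing about the state $\delta^{\Sys}(p)$, which can be the sink. Think of a close input answered with $\checkmark$, after which every input yields $\er$.

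This happens precisely when $\R$ has an accepting state with residual language $\{\varepsilon\}$. That state and the rejecting sink of $\R$ then correspond to the same state of $\Sys$. So the initial basis together with the added $q_\er$ covers only $o-1$ distinct states of $\Sys$, not $o$. The paper's count $n-1-(o-1)$ makes the same implicit assumption.

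In this case the bound itself fails. Take $I=\{c,x\}$ and $\Sys$ given by $q_0 \xrightarrow{c/\checkmark} q_1$, $q_0 \xrightarrow{x/\er} q_1$, and $q_1$ looping on both inputs with output $\er$. Soundness, completeness and prefix-closure force $\K=\{\varepsilon,c\}$. Its minimal DFA has $o=3>2=n$, so $n-o<0$.

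Your argument, like the paper's, therefore needs an extra hypothesis: no word of $\K$ leads to the error sink of $\Sys$. Equivalently, no accepting state of $\R$ has residual $\{\varepsilon\}$. Under that hypothesis your counting gives exactly $n-o$. Without it, the same counting only gives $n-o+1$, because a minimal $\R$ has at most one such state.
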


\section{Experimental Evaluation} \label{sec:experiments}
In this section, we empirically investigate the performance of the algorithms $\erlsharp$, $\eralsharp$, $\soundlsharp$ and $\sclsharp$
proposed in Sec.~\ref{sec:ealearning} and compare it to the baseline $\lsharp$~\cite{DBLP:conf/tacas/VaandragerGRW22}. 
The source code and benchmarks are available online~\cite{ErrorAwareZenodo}.\footnote{\url{https://gitlab.science.ru.nl/lkruger/errorawarelearningaalpy}}
We investigate the following questions in three experiments:
\begin{description}
    \item[RQ1] 
    Do $\erlsharp$ and $\sclsharp$ with a sound and complete reference improve over $\lsharp$?
    \item[RQ2] Do $\eralsharp$ and $\soundlsharp$ with (sound) references improve over $\erlsharp$?
    \item[RQ3] What happens when $\soundlsharp$ and $\sclsharp$ use unsound or incomplete references?
\end{description}
Additionally, we discuss whether the used references can be obtained realistically.

\medskip\noindent\textbf{Benchmarks}
We consider benchmarks from the literature representing network protocol implementations, lithograph systems or monitors:

\myparagraph{TLS.} 
Transport Layer Security (TLS) is a common security
protocol of which several implementations have been learned by de Ruiter~\cite{DBLP:conf/uss/RuiterP15}.
We use 18 models of this benchmark with between 6-16 states and 8-11 inputs. These models should adhere to the TLS1.2 specification~\cite{dierks2008transport}. 

\myparagraph{ASML.} 
The ASML benchmark contains 33 Mealy machines representing components in lithography systems for the semiconductor industry first released for the 2019 RERS challenge~\cite{DBLP:conf/tacas/JasperMMSHSSHSK19}. These models are hard to learn due to their size~\cite{DBLP:conf/tacas/KrugerJR24,DBLP:conf/wcre/YangASLHCS19}. We limit ourselves to 11 models used in~\cite{DBLP:conf/tacas/KrugerJR24} with under 10000 transitions. 

\myparagraph{Monitors.}
We consider Mealy machine representations of runtime monitors for Hidden Markov Models (HMMs) used by~\cite{DBLP:conf/atva/MaasJ25}. These monitors flag potentially dangerous states when moving randomly through a partially observable state space. 
We restrict our analysis to the HMMs from~\cite{DBLP:conf/atva/MaasJ25} with the smallest environment parameters and select the two smallest or only horizon values, resulting in 14 models with between 21-75 states and 18-325 inputs. 

\myparagraph{Realistic references.}
For the TLS and ASML models, sound and complete references are not realistically obtainable as it is unlikely that precise enough specifications exist. However, for TLS or other non-proprietary network protocols, it is possible to create sound, but quite cautious, references by inspecting the open-source specifications. For the monitor benchmark, sound and complete references \emph{are realistic} as they can be extracted from the underlying HMM. 

\medskip\noindent\textbf{Setup}
We adapted the algorithms for $\lsharp$ and $AL^{\#}$ implemented in learning library AALpy~\cite{DBLP:journals/isse/MuskardinAPPT22}. 
To test a hypothesis $\Hyp$, we apply an adapted version of state-of-the-art randomized Wp-method~\cite{DBLP:conf/icfem/SmeenkMVJ15,DBLP:conf/models/GarhewalD23} that truncates test words as in $T_\er(\Hyp)$ and $T_\sound(\Hyp,\K)$. We run all experiments with 30 seeds. We measure the performance in the number of symbols (the sum of the length $+ 1$ of all OQs, including OQs asked in testing). We enforce a total symbol budget of $10^6$. We omit the final EQ when the hypothesis and target are equivalent as standard. Finally, we cache OQs during learning and testing. \camerareadyversion{Appendix C of the extended version of this paper~\cite{Kruger2026ErrorAwarenessAcceleratesArxiv} contains details on algorithm parameters, the references, and full benchmark results.}{\Cref{app:models} contains details on algorithm parameters, the references, and full benchmark results.}

\subsection{RQ1: Sound and Complete vs Error-aware vs Baselines}
We evaluate the performance of $\erlsharp$ and $\sclsharp$ with a sound and complete against the baseline. We extract a sound and complete reference $\K$ for each (known) SUL. 

\begin{figure}[t]
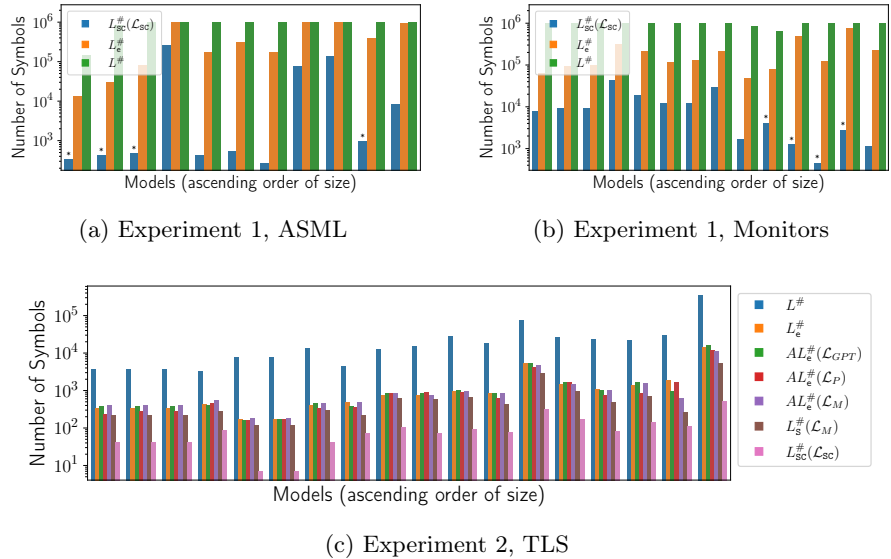

    \centering
    \begin{subfigure}[b]{0.49\textwidth}  
    \centering
    \resizebox{0.95\textwidth}{!}{
        \input{images/Experiment1_asml.pgf}}
    \subcaption{Experiment 1, ASML}
    \label{fig:exp1_asml}
    \end{subfigure}
    \hfill
    \begin{subfigure}[b]{0.49\textwidth}  
        \centering
        \resizebox{0.95\textwidth}{!}{
        \input{images/Experiment1_monitor.pgf}}
        \subcaption{Experiment 1, Monitors}
        \label{fig:exp1_mon}
    \end{subfigure}
    \vskip\baselineskip
    \begin{subfigure}[b]{0.99\textwidth}  
        \centering
        \resizebox{0.95\textwidth}{!}{
        \input{images/Experiment2.pgf}}
        \subcaption{Experiment 2, TLS}
        \label{fig:exp2} 
    \end{subfigure}
    \caption{Number of symbols (lower is better) for Experiment~1 and 2.}
    \label{fig:exp1}
\end{figure}

\myparagraph{Results.} \Cref{fig:exp1_asml,,fig:exp1_mon} show the number of symbols (log scale) for learning and testing on the y-axis for the ASML and monitor benchmark, respectively. The results for TLS are included in \Cref{fig:exp2} as the blue, orange and pink bars. The x-axis shows the models in ascending order of size $|\Sys| \times |I|$. The colors indicate the algorithms. Bars marked with * indicate the first hypothesis was correct.

\myparagraph{Discussion.} In every benchmark where $\erlsharp$ remains within the budget, it achieves better performance than $\lsharp$. Additionally, we observe that algorithm $\sclsharp$ significantly outperforms both $\lsharp$ and $\erlsharp$. 
For TLS (\Cref{fig:exp2}), the sound and complete references are sufficiently informative to learn all states without EQs (as indicated by the *). For the ASML and monitor benchmark, even when EQs are required, $\sclsharp$ requires far fewer symbols. For example, to the 115 states of the largest ASML model, $\sclsharp$ requires 8368 symbols on average, while $\erlsharp$ and $\lsharp$ exceed the budget of $10^6$ after learning only 73 and 27 states, respectively.

\subsection{RQ2: Using easy to obtain references}
Sound and complete reference models may not always be available. We investigate the performance on sound and arbitrary references on the TLS benchmark. 
We generate three types of references:
\begin{enumerate}[noitemsep,topsep=0pt]
    \item For each benchmark instance, we obtain a passively learned Mealy machine using the \emph{generalized state merging} algorithm with a pre-generated log file containing 10000 randomly generated traces. We then transform the Mealy machine into a minimal DFA with reference $\K_{P}$.
    \item We prompt chatGPT-5 to use RFC 5246, the open-source specification of TLS1.2, to construct four DFAs with reference $\K_{GPT}$ using the alphabets from~\cite{DBLP:conf/uss/RuiterP15}: server-side with a full alphabet or reduced alphabet, client-side with a full alphabet or reduced alphabet.
    \item We construct DFAs with reference $\K_M$ for the four specification cases by manually analyzing RFC 5246, leading to sound expert-based references.
\end{enumerate}
We compare $\eralsharp(\K)$ for $\K \in \{\K_P, \K_{GPT}, \K_M\}$, $\soundlsharp(\K_M)$ and $\erlsharp$.

\myparagraph{Results.} \Cref{fig:exp2} shows the total number of symbols (log scale) as the height of the bar and the TLS models in ascending order of size on the x-axis.

\myparagraph{Discussion.} First, we observe that $\soundlsharp$ always outperforms the $\erlsharp$ and $\eralsharp$, thus  performance improves whenever sound references are available. When comparing $\erlsharp$ and $\eralsharp$, we notice that the performance is usually quite similar. 
This may be explained by the fact that all models are relatively easy to learn or the references are too generic or not informative enough. Similar patterns were observed on limited tests with passively generated references for the ASML benchmark.

\subsection{RQ3: Violated reference assumptions}
We use the TLS benchmark to investigate whether the correct Mealy machine can be learned  when the references violates the soundness or completeness assumption.
We use the references $\K_{P}$ and $\K_{M}$ from RQ2, $\K_{\sound\complete}$ from RQ1. The reference $\K_{MUT}$ is a mutation on $\K_{\sound\complete}$ that redirects four transitions in the minimal DFA representation. For each run, we measure whether (1)~the algorithm terminated with the correct model, (2)~the algorithm terminated with an incorrect model, or (3)~the algorithm detected that the assumptions were violated.

\begin{table}[t]
    \caption{Results RQ3, values represent percentages over all seeds.}
    \label{tab:exp3}
    \centering
    \scalebox{0.8}{
        \begin{tabular}{|l|ccc|ccc|ccc|}
            \toprule
            Algorithm $\to$ & $\eralsharp(\K)$ & $\soundlsharp(\K)$ & $\sclsharp(\K)$ \\
            Reference $\downarrow$ & Incorrect(Violations) & Incorrect(Violations) & Incorrect(Violations) \\
            \midrule
            $\K_{MUT}$ & \cellcolor{green!25}0 & \cellcolor{red!25}55.6(0) & \cellcolor{red!25}100(100) \\
            $\K_{P}$   & \cellcolor{green!25}0 & \cellcolor{red!25}87.1(0) & \cellcolor{red!25}95.1(85.3) \\
            $\K_{M}$ & \cellcolor{green!25}0 & \cellcolor{green!25}0 & \cellcolor{red!25}100(100) \\
            $\K_{\sound\complete}$ & \cellcolor{green!25}0 & \cellcolor{green!25}0 & \cellcolor{green!25}0 \\
            \bottomrule
            \end{tabular}
    }
    \end{table}

\myparagraph{Results.} \Cref{tab:exp3} indicates per algorithm and reference combination how often a model was correctly learned, incorrectly learned or the completeness assumption was violated. The value is the percentage over all seeds and all TLS models.

\myparagraph{Discussion.} As expected, when a reference satisfies the assumption, the correct model is always learned. If a reference is incomplete, this is typically detected during the $\sclsharp$ algorithm, which reports an assumption violation. However, from $\K_{MUT}$ and $\K_{P}$ we observe that when references fail to meet the soundness assumption, the learned model is usually incorrect. This demonstrates that $\soundlsharp$ and $\sclsharp$ should only be applied when the assumption truly holds.

\section{Related Work} \label{sec:related}
Error-persistence and knowledge about inputs that lead to errors can be broadly categorized into using domain knowledge in AAL.

\myparagraph{Domain knowledge in learning.} 
Closest to our work is \emph{adaptive AAL}~\cite{DBLP:conf/ifm/DamascenoMS19,DBLP:conf/birthday/FerreiraHS22,DBLP:journals/igpl/GrocePY06,DBLP:conf/fm/KrugerJR24}. 
Adaptive AAL tries to reduce the number of interactions required to learn the SUL by using a similar reference model. 
It assumes that the reference model is of the same type as the SUL. In contrast, this paper uses a reference language consisting of words that do not produce errors. We thus rely on less information as our references provides no details about non-error outputs. We believe that our references are easier to obtain from, e.g., specifications.
Further, contrary to~\cite{DBLP:conf/ifm/DamascenoMS19,DBLP:conf/birthday/FerreiraHS22,DBLP:journals/igpl/GrocePY06,DBLP:conf/fm/KrugerJR24}, we support additional assumptions on the relation between the reference and SUL. These assumptions, while realistic, prove to be very powerful.

Maler and Mens~\cite{DBLP:conf/tacas/MalerM14} tackle AAL with large alphabets by learning symbolic automata whose transitions are defined by predicates that group individual inputs.
The correctness of their approach depends on the ability to find adequate predicates and finding such predicates is an open challenge. In contrast, we rely on knowledge about errors and prove correctness for varying levels of knowledge.

Wallner \etal~\cite{DBLP:conf/icst/WallnerALT25} learn timed automata using an untimed skeleton of the target model, which may be incomplete or incorrect. Similar to our approach, they assume that domain knowledge in the form of a skeleton is often easy to obtain. Instead of embedding their skeleton within the MAT framework, they use the skeleton within a genetic programming approach to add timing information.

Dengler \etal~\cite{DBLP:journals/corr/abs-2508-16384} learn Mealy machines with stochastic delays by first deriving a behavioral Mealy machine and then extending it with distributions that represent probabilistic delay behavior. Their contribution lies in extending Mealy machines to capture stochastic delay behavior, whereas our approach uses error-aware domain knowledge to accelerate the learning of the (standard) Mealy machine.

Henry \etal~\cite{DBLP:conf/concur/Henry0NS25} address the problem of large alphabets by assuming a concurrent component-based system in which certain inputs are disabled for specific components and can therefore be omitted. In contrast, we make no assumptions about component structure and operate at a different granularity, using the fact that certain input words (i.e., transitions) are not enabled.

\myparagraph{Domain knowledge in testing.} Yang \etal\cite{DBLP:conf/wcre/YangASLHCS19} test hypotheses on log traces before conducting the actual EQ when learning the ASML benchmark. 
To learn a model of embedded control software of a printer, related inputs were grouped into subalphabets to generate more effective test words by Smeenk \etal\cite{DBLP:conf/icfem/SmeenkMVJ15}. 
These two types of domain knowledge can be transformed into a reference language that our approach can use to avoid posing test queries that likely produce errors. 

Yaacov \etal~\cite{DBLP:journals/corr/Yaacov} use error persistence to adapt the EQ to learn informative bug descriptions. Their approach focuses on learning small DFAs that capture bugs, whereas we aim to learn the overall behavior of the SUL.
Ruiter \etal~\cite{DBLP:conf/uss/RuiterP15} cuts off tests when \emph{ConnectionClosed} is observed.
Our adjustments to test suites build on these two methods but are more general, applying to any error-related outputs and enabling test truncation before an error-producing input is reached through domain knowledge.

Rather than relying on domain knowledge, Kruger \etal\cite{DBLP:conf/tacas/KrugerJR24} infer assumptions about the SUL from the current hypothesis and use them to prioritize tests. 
Since tests are only prioritized but never discarded, the correct model is always learned. In contrast, when our assumptions hold, we can avoid posing certain queries altogether, leading to significantly fewer interactions.

Finally, Vaandrager and Melse~\cite{DBLP:conf/concur/VaandragerM25} introduce $k$-$A$-complete fault domains where $A$ typically represents the state cover of the specification of the SUL. If $A$ is a state cover of some sound and complete reference $\K$, then $\Sys$ is guaranteed to be in the $k$-$A$-complete fault domain, if $k$ is sufficiently large (see \Cref{lem:k-A-proof}).

\myparagraph{Special outputs.} Several AAL algorithms consider teachers that answer \emph{don't care} to an input when parts of the SUL are underspecified, unobservable, or irrelevant~\cite{DBLP:conf/ecoop/MoellerWSKF023,DBLP:conf/isola/LeuckerN12,DBLP:conf/tacas/BorkCGKM24,DBLP:journals/corr/Yaacov}. In some of these algorithms, a \emph{don't care} may be replaced with any output that is consistent with the data. This approach leads to smaller models compared to using \er when inputs are unobservable or irrelevant but the decision problem underlying finding a minimal model in the presence of don't-cares is NP-hard~\cite{DBLP:conf/ecoop/MoellerWSKF023,DBLP:conf/isola/LeuckerN12}.

\section{Conclusion}
This paper presents learning and conformance testing techniques that effectively incorporate knowledge about error sources in systems. Our algorithms improve over standard AAL with several orders of magnitude by assuming the system is error persistent. Using knowledge about \emph{which} inputs yield errors substantially lowers the cost of learning complex systems even further.

\myparagraph{Future work.}
This paper goes beyond the standard MAT framework by using a more fine-grained notion of equivalence that incorporates domain knowledge. This affects both the theory of testing and learning and required significant technical work. We therefore suggest to investigate a fundamental MAT framework for flexible notions of equivalence, thereby accommodating, e.g., more general domain knowledge.
Further, we aim to investigate how we can broaden our notion of errors to include transient errors that quickly resolve themselves or recoverable errors that allow escape from an error state after acknowledging, and to examine whether a more fine-grained definition of errors can account for multiple sink states.
On the practical side, we aim to investigate more applications and sources for error-aware references, such as sound references extracted from logs. 
Finally, we plan to compare our results, in terms of query complexity and runtime, with recent work using `don't cares' instead of $\er$-outputs~\cite{DBLP:conf/ecoop/MoellerWSKF023,DBLP:conf/isola/LeuckerN12,DBLP:conf/tacas/BorkCGKM24}.

\clearpage
\bibliographystyle{splncs04}
\bibliography{references.bib}
\camerareadyversion{}{
\appendix
\section{Algorithms} \label{app:algorithms}
We detail the full $\erlsharp$, $\eralsharp$ and $\sclsharp$ algorithms in this section. 
Additionally, we specify how $\textsc{OQ}_\er$ and $\textsc{OQ}_{\sound,\er}$ are defined procedurally. For this, we assume there are functions $\textsc{Step}$ and $\textsc{Reset}$ that execute an input on the SUL or reset it.
\begin{algorithm}[h]
	\begin{algorithmic}
		\Procedure{$OQ_{\er}$}{$\sigma \in I^*$}
		\State outputs $\gets \varepsilon$
      \State $\textsc{Reset}()$
      \For{$i \in \sigma$}
         \State outputs $\gets$ outputs $\cdot~ \textsc{Step}(i)$
         \StateIf{$\mathsf{last}($outputs$) = \er$}{ \textbf{break}}
      \EndFor
      \State \Return outputs
		\EndProcedure
	\end{algorithmic}
	\caption{Output queries with persistent $\er$}
	\label{alg:oq_er}
\end{algorithm}

\begin{algorithm}[h]
	\begin{algorithmic}
		\Procedure{$OQ_{\er,\sound}$}{$\sigma \in I^*,\K$}
		\State outputs $\gets \varepsilon$
      \State $r \gets$ initial state of the minimal DFA representing $\K$
      \State $\textsc{Reset}()$
      \For{$i \in \sigma$}
         \StateIf{$\delta^{\K}(r,i) \notin Q^{\K}_F$}{ \textbf{break}}
         \State outputs $\gets$ outputs $\cdot~ \textsc{Step}(i)$
         \StateIf{$\mathsf{last}($outputs$) = \er$}{ \textbf{break}}
      \EndFor
      \State \Return outputs
		\EndProcedure
	\end{algorithmic}
	\caption{Output queries with persistent $\er$ and sound reference $\K$}
	\label{alg:oq_er_sound}
\end{algorithm}

Next, we list \textsc{ProcCounterEx} originated in \cite{DBLP:conf/tacas/VaandragerGRW22} but re-iterated in in \Cref{alg:counterexamplebs} for completeness. Note that we use variants where \emph{only} the definition of apartness has been adapted, as indicated by comments.

\begin{algorithm}[h]%
    \begin{algorithmic}
       \Procedure{\ProcessCounterexample}{$\Hyp$, $\sigma \in I^*$}
       \State $q \gets \delta^{\Hyp}(q_0^{\Hyp}, \sigma)$
       \State $r \gets \delta^{\Obs}(q_0^{\Obs}, \sigma)$
       \If{$r \in S \cup F$}
       \State \Return
       \Else
       \State $\rho \gets$ unique prefix of $\sigma$ with $\delta^{\Obs}(q_0^{\Obs}, \rho) \in  F$
       \State $h \gets \lfloor \frac{| \rho | +| \sigma |}{2} \rfloor$
       \State $\sigma_1 \gets \sigma[1..h]$
       \State $\sigma_2 \gets \sigma[h+1 .. | \sigma |]$
       \State $q' \gets \delta^{\Hyp}(q_0^{\Hyp}, \sigma_1)$
       \State $r' \gets \delta^{\Obs}(q_0^{\Obs}, \sigma_1)$
       \State $\eta \gets$ witness for $q \;\#\; r$ \Comment{or $\apart_{\sound,\K}$, $\apart_{\sound\complete,\K}$}
       \State $\OutputQuery(\mathsf{access}(q') \; \sigma_2 \; \eta)$
       \If{$q' \;\#\; r'$} \Comment{or $\apart_{\sound,\K}$, $\apart_{\sound\complete,\K}$}
       \State \ProcessCounterexample($\Hyp$, $\sigma_1$)
       \Else
       \State \ProcessCounterexample($\Hyp$, $\mathsf{access}(q') \; \sigma_2$)
       \EndIf
       \EndIf
       \EndProcedure
    \end{algorithmic}
   \caption{Processing of $\sigma$
     that leads to a conflict.
   }
   \label{alg:counterexamplebs}
 \end{algorithm}%

The next three algorithms are variations of $\lsharp$ described in \Cref{sec:learning}. Specifically, $\erlsharp$ (\Cref{sec:lsharp_perm}) is listed in \Cref{alg:persistent_errors_lsharp}, $\eralsharp$ (\Cref{sec:adap_lsharp_perm}) is listed in \Cref{alg:persistent_errors_alsharp} and $\sclsharp$ is listed in (\Cref{sec:sound_compl_alg}).
Note that some additional pre-conditions for match separation are listed in \Cref{alg:persistent_errors_alsharp}, these pre-conditions ensure progress.

\begin{algorithm}[h]
	\begin{algorithmic}
		\Procedure{LSharp$_{\er}$}{}
		\DoIf{$q$ isolated, for some $q \in \erfrontier$}
      \Comment{promotion}
      \State $\erbasis \gets \erbasis \cup \{ q \}$
		\ElsDoIf{$\delta^{\Obs}(q,i){\uparrow}$, for some $q \in \basis, i \in I$}
      \Comment{extension}
      \State $\textsc{OQ}_\er(\mathsf{access}(q) \; i)$
		\ElsDoIf{$\neg(q\apart r)$, $\neg(q\apart r')$,$r\neq r'$ for some $q \in \erfrontier$ and $r,r'\in \erbasis$}
      \Comment{separation}
      \State $\sigma \gets \text{witness of \(r\apart r'\)}$
      \State $\textsc{OQ}_\er(\mathsf{access}(q) \; \sigma)$
		\ElsDoIf{$\erbasis$ is adequate}
      \Comment{equivalence}
		  \State $\Hyp \gets \BuildHypothesis_{\er}$ 
		  \State $(b, \sigma) \gets \CheckConsistency(\Hyp)$		
		  \If{$b = \texttt{yes}$}
         \State $(b, \rho) \gets \EquivalenceQuery(\Hyp)$
         \If{$b = \texttt{yes}$} 
            \State \Return $\Hyp$
         \Else
            \State $\textsc{OQ}_\er(\rho)$
            \State $\sigma \gets$ shortest prefix of $\rho$ such that
            $\delta^{\Hyp}(q_0^\Hyp, \sigma) \apart \delta^{\Obs}(q_0^\Obs, \sigma)$
         (in $\Obs$)
         \EndIf
        \EndIf
		  \State $\textsc{ProcCounterEx}_\er(\Hyp, \sigma)$ 
        \Comment{Uses $\textsc{OQ}_\er$}
    \EndDoIf
		\EndProcedure
	\end{algorithmic}
	\caption{$\lsharp$ with persistent $\er$ (see \Cref{sec:lsharp_perm}).}
	\label{alg:persistent_errors_lsharp}
\end{algorithm}

\begin{algorithm}[h]
	\begin{algorithmic}
		\Procedure{L$^{\#}_{\er,\K}$}{$\K$}
      \State $\R_\K \gets$ minimal DFA representation of $\K$
      \State $P, W \gets$ minimal state cover of $\R_\K$, separating family of $\R_\K$ 
   
      \DoIf{$(\delta^{\Obs}(q,i) \in \erfrontier \lor \delta^{\Obs}(q,i){\uparrow}), \neg(q' \apart \delta^{\Obs}(q,i)),$ $\access^{\Obs}(q)i, \access^{\Obs}(q') \in P$,\\
      \qquad\qquad$\sigma = \mathsf{sep}(W,\delta^{\R_\K}(\access^{\Obs}(q)i),\delta^{\R_\K}(\access^{\Obs}q'))$ and $\sigma$ guarantees progress\\
      \qquad\qquad for some $q, q' \in \erbasis, i \in I$}
      \Comment{rebuilding}
      \State $\textsc{OQ}_\er(\access^{\Obs}(q)i\sigma)$
      \State $\textsc{OQ}_\er(\access^{\Obs}(q')\sigma)$
      \ElsDoIf{$q$ isolated, for some $q \in \erfrontier$ with $\access^{\Obs}(q) \in P$}
      \Comment{prioritized promotion}
      \State $\erbasis \gets \erbasis \cup \{ q \}$
      \EndDoIf
		\DoIf{$q$ isolated, for some $q \in \erfrontier$}
      \Comment{promotion}
      \State $\erbasis \gets \erbasis \cup \{ q \}$
		\ElsDoIf{$\delta^{\Obs}(q,i){\uparrow}$, for some $q \in \basis, i \in I$}
      \Comment{extension}
      \State $\textsc{OQ}_\er(\mathsf{access}(q) \; i)$
		\ElsDoIf{$\neg(q\apart r)$, $\neg(q\apart r')$, $r\neq r'$ for some $q \in \erfrontier$ and $r,r'\in \erbasis$}
      \Comment{separation}
      \State $\sigma \gets \text{witness of \(r\apart r'\)}$
      \State $\textsc{OQ}_\er(\mathsf{access}(q) \; \sigma)$

      \ElsDoIf{$\delta^{\Obs}(q,i)=r, \delta^{\K}(p,i)=p', p \matches q,$\\
      \qquad\qquad$\neg(\exists \rho \in I^*. \mathsf{last}(\lambda^\Obs(q,\rho))=\er \leftrightarrow \delta^{\R_\K}(p,\rho) \in Q^{\R_\K}_F)$,\\
      \qquad\qquad and $\neg\exists s \in \erbasis$ with $s \matches p'$, $\neg(q' \apart r)$\\
      \qquad\qquad $\exists \sigma \in I^*$ s.t. $\last(\lambda^\Obs(q',\sigma))=\er \leftrightarrow \delta^{\R_\K}(p',\sigma)\in Q^{\R_\K}_F$\\
      \qquad\qquad $q, q' \in \erbasis, r \in \erfrontier, i \in I, p, p' \in Q^{\R_\K}$}
      \Comment{match separation}
      \State choose $\sigma$ s.t. $\last(\lambda^\Obs(q',\sigma))=\er \leftrightarrow \delta^{\R_\K}(p',\sigma)\in Q^{\R_\K}_F$
      \State $\textsc{OQ}_\er(\mathsf{access}(q) \; i\sigma)$

      \ElsDoIf{$p \matches q$ and $p' \matches q$ for some $q \in B$ and $p, p' \in Q^{\R_\K}$ with\\
      \qquad\qquad$\sigma = \mathsf{sep}(W,p,p')$ and $\delta^\Obs(q,\sigma){\uparrow}$}
      \Comment{match refinement}
      \State $\textsc{OQ}_\er(\mathsf{access}(q) \; \sigma)$

		\ElsDoIf{$\erbasis$ is adequate}
      \Comment{equivalence}
		  \State $\Hyp \gets \BuildHypothesis_{\er}$ 
		  \State $(b, \sigma) \gets \CheckConsistency(\Hyp)$		
		  \If{$b = \texttt{yes}$}
         \State $(b, \rho) \gets \EquivalenceQuery(\Hyp)$
         \If{$b = \texttt{yes}$} 
            \State \Return $\Hyp$
         \Else
            \State $\textsc{OQ}_\er(\rho)$
            \State $\sigma \gets$ shortest prefix of $\rho$ such that
            $\delta^{\Hyp}(q_0^\Hyp, \sigma) \apart \delta^{\Obs}(q_0^\Obs, \sigma)$
         (in $\Obs$)
         \EndIf
        \EndIf
		  \State $\textsc{ProcCounterEx}_\er(\Hyp, \sigma)$
    \EndDoIf
		\EndProcedure
	\end{algorithmic}
	\caption{$\alsharp$ with persistent $\er$ (see \Cref{sec:adap_lsharp_perm}).}
	\label{alg:persistent_errors_alsharp}
\end{algorithm}

\begin{algorithm}[h]
	\begin{algorithmic}
		\Procedure{L$^{\#}_{\er,\sound\complete}$}{$\K$}
      \State $P \gets$ minimal state cover of DFA representation of $\K$ 
      \State Execute $T_\K$ 
      \Comment{\Cref{eq:t}}
      \State $\erbasis \gets \{ \delta^\Obs(p) \mid p \in P \}$
      \DoIf{$\lambda^{\Obs}(\sigma)=\er, \delta^{\K}(\sigma) \in Q^{\K}_F$, for some $\sigma \in I^*$}
      \Comment{completeness violation}
      \State \Return \texttt{error}
		\ElsDoIf{$q$ isolated, for some $q \in \erfrontier$}
      \Comment{promotion}
      \State $\erbasis \gets \erbasis \cup \{ q \}$
		\ElsDoIf{$\delta^{\Obs}(q,i){\uparrow}, \access^{\Obs}(q)i\in \K$, for some $q \in \basis, i \in I$}
      \Comment{extension}
      \State $\textsc{OQ}_{\sound}(\mathsf{access}(q) \; i, \K)$
		\ElsDoIf{$\neg(q\apart_{\sound\complete,\K} r)$, $\neg(q\apart_{\sound\complete,\K} r')$, $r\neq r'$\\
      \qquad\qquad for some $q \in \erfrontier$ and $r,r'\in \erbasis$}
      \Comment{separation}
      \State $\sigma \gets \text{witness of \(r\apart_{\sound\complete,\K} r'\)}$
      \State $\textsc{OQ}_{\sound}(\mathsf{access}(q) \; \sigma,\K)$

		\ElsDoIf{$\erbasis$ is $\sound$-adequate}
      \Comment{equivalence}
		  \State $\Hyp \gets \BuildHypothesis_{\er}$ 
		  \State $(b, \sigma) \gets \CheckConsistency(\Hyp)$	
        \If{$b = \texttt{yes}$}
         \State $(b, \rho) \gets \CheckSoundness(\K,\Hyp)$	
		  \If{$b = \texttt{yes}$}
         \State $(b, \rho) \gets \EquivalenceQuery(\Hyp)$
         \Comment{equivalence on $\K$}
         \If{$b = \texttt{yes}$} 
            \State \Return $\Hyp$
         \EndIf
         \EndIf
            \State $\textsc{OQ}_{\sound}(\rho,\K)$
            \State $\sigma \gets$ shortest prefix of $\rho$ such that
            $\delta^{\Hyp}(q_0^\Hyp, \sigma) \apart_{\sound\complete,\K} \delta^{\Obs}(q_0^\Obs, \sigma)$
         (in $\Obs$)
         \EndIf
		  \State $\textsc{ProcCounterEx}_{\sound\complete}(\Hyp, \sigma)$
    \EndDoIf
		\EndProcedure
	\end{algorithmic}
	\caption{$\erlsharp$ for a sound and complete reference (see \Cref{sec:sound_compl_alg}).}
	\label{alg:sound_complete_lsharp}
\end{algorithm}

\clearpage 
\section{Proofs}
\subsection{Proofs of Section 3}
\subsubsection{Proof of \Cref{lem:error_test_suite}}
  \textit{(Proof by contradiction)} Assume $T_\er(\Hyp)$ is not $\er$-complete but $T$ is complete for some fault domain $\C$. 
  From the assumption that $T_\er(\Hyp)$ is not $\er$-complete, we derive that there exists a Mealy machine $\Sys$ such that
  \begin{enumerate}
      \item $\Hyp \nsim \Sys$ but $\Hyp \sim_{T_\er(\Hyp)} \Sys$ (from \Cref{def:compl}),
      \item $\Sys$ is an $\er$-persistent Mealy machine (\Cref{def:kaercompl}),
      \item $\Sys \in \C$ (\Cref{def:kaercompl}).
  \end{enumerate}
  From (1), (3) and the assumption $T$ is complete, we derive that there must be some $\sigma \in I^*$ such that (4) $\lambda^{\Sys}(\sigma) \neq \lambda^{\Hyp}(\sigma)$, (5) $\sigma \in T$ and (6) $\sigma \notin T_\er(\Hyp)$.

  From (5), (6) and \Cref{def:ertest}, it must be the case that $\er\er$ is a subword of $\lambda^{\Hyp}(\sigma)$. Let $i \in \mathbb{N}$ be such that $\firsterror{\Hyp}{\sigma} < \infty$, i.e., $\lambda^{\Hyp}(\sigma)_i \neq \er$ and $\lambda^{\Hyp}(\sigma)_{i+1} = \er$.
  Because $\sigma_{:i+1}$ is contained in $T_\er(\Hyp)$, if $\lambda^{\Hyp}(\sigma)_{:i+1} \neq \lambda^{\Sys}(\sigma)_{:i+1}$ then this would contradict the assumption that $T_\er(\Hyp)$ is not $\er$-complete. Thus, we can safely assume $\lambda^{\Hyp}(\sigma)_{:i+1} = \lambda^{\Sys}(\sigma)_{:i+1}$.

  We perform a case distinction based on $\lambda^{\Sys}(\sigma)_{i+1}$:
  \begin{itemize}
      \item Case $\lambda^{\Sys}(\sigma)_{i+1} = \er$. Because $\Hyp$ and $\Sys$ are $\er$-persistent, for all $i+1 < j < |\sigma|$, $\lambda^{\Sys}(\sigma)_j = \er = \lambda^{\Hyp}(\sigma)_j$. This contradicts with (4).
      
      \item Case $\lambda^{\Sys}(\sigma)_{i+1} \neq \er$. From \Cref{def:ertest}, it holds that $\sigma_{:i+1} \in T_\er(\Hyp)$. Because $\lambda^{\Sys}(\sigma)_{i+1} \neq \er$, $\lambda^{\Sys}(\sigma)_{i+1} \neq \er = \lambda^{\Hyp}(\sigma)_{i+1}$. Thus, there is a counterexample $\sigma_{:i+1} \in  T_\er(\Hyp)$ that shows $\Sys \nsim \Hyp$ which contradicts with the assumption that $T_\er(\Hyp)$ is not $\er$-complete.
  \end{itemize}
  Because all cases lead to a contradiction, $T_\er(\Hyp)$ must be $\er$-complete. \qed

\subsubsection{Proof of \Cref{lem:sound_test_suite}}~\newline
  \textit{(Proof by contradiction)} Assume $T_\sound(\K,\Hyp)$ is not $\er$-complete but $T$ is complete for some fault domain $\C$.
  From the assumption that $T_\sound(\K,\Hyp)$ is not $\er$-complete, we derive that there exists a Mealy machine $\Sys$ such that
  \begin{enumerate}
      \item $\Hyp \nsim \Sys$ but $\Hyp \sim_{T_\sound(\K,\Hyp)} \Sys$ (from \Cref{def:compl}),
      \item $\Sys$ is an $\er$-persistent Mealy machine (\Cref{def:kaercompl}),
      \item $\Sys \in \C$ (\Cref{def:kaercompl}).
  \end{enumerate}
  From (1), (3) and the assumption $T$ is complete, we derive that there must be some $\sigma \in I^*$ such that (4) $\lambda^{\Sys}(\sigma) \neq \lambda^{\Hyp}(\sigma)$, (5) $\sigma \in T$ and (6) $\sigma \notin T_\sound(\K,\Hyp)$.

  From (5), (6) and \Cref{def:ertest}, it must be the case that either $\er\er$ is a subword of $\lambda^{\Hyp}(\sigma)$ (which leads to the same case distinction as used in the proof of \Cref{lem:error_test_suite}) or there exists $i \in \mathbb{N}$ such that $\firsterror{\K}{\sigma} < \infty$, i.e., $\sigma_{:i} \in \K$ and $\sigma_{:i+1} \notin \K$. This latter case truncates $\sigma$ to $\sigma_{:i}$. 
  
  Because $\sigma_{:i}$ is contained in $T_\sound(\Hyp)$, if $\lambda^{\Hyp}(\sigma)_{:i} \neq \lambda^{\Sys}(\sigma)_{:i}$ then this would contradict the assumption that $T_\sound(\Hyp)$ is not $\er$-complete. Thus, we can safely assume $\lambda^{\Hyp}(\sigma)_{:i} = \lambda^{\Sys}(\sigma)_{:i}$.
  
  Because $\K$ is sound for both $\Hyp$ and $\Sys$, $\lambda^{\Sys}(\sigma)_{i+1} = \er = \lambda^{\Sys}(\sigma)_{i+1}$. Moreover, because $\Hyp$ and $\Sys$ are $\er$-persistent, for all $i+1 < j < |\sigma|$, $\lambda^{\Sys}(\sigma)_j = \er = \lambda^{\Hyp}(\sigma)_j$. This contradicts with (4).
  Because all cases lead to a contradiction, $T_\sound(\K,\Hyp)$ must be $\er$-complete. \qed

\subsubsection{Proof of \Cref{lem:mostly_test_suite}} 
From \Cref{lem:error_test_suite}, it follows that under these settings $T_\er(\Hyp)$ is complete w.r.t. $\C_\er$. Because $\moe(T_\er(\Hyp),$ $T_\sound(\Hyp,\K))$ only terminates when all test suites have been executed or a counterexample has been found: 
\[ \Hyp \sim_{T_\er(\Hyp)} \Sys \implies \Hyp \sim_{\moe(T_\er(\Hyp),T_\sound(\Hyp,\K))} \Sys.\] 
Thus, $\moe(T_\er(\Hyp),$ $T_\sound(\Hyp,\K))$ is complete w.r.t. $\C_\er$.
\qed 

\subsection{Proofs for Section 4.1 and 4.2}
Before proving \Cref{lem:error_learning} and \Cref{lem:mostly_learning}, we prove three lemmas related to hypothesis building and counterexample processing.

\begin{lemma} \label{lem:hyp_er_persistent}
  Given an observation tree $\Obs$ with basis $\erbasis$ and frontier $\erfrontier$, procedure $\textsc{BuildHypothesis}_{\er}$ constructs an $\er$-persistent hypothesis $\Hyp$.
\end{lemma}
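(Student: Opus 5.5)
We argue directly from the case definition of $\lambda'$ in \Cref{build_hyp}. By the definition of $\er$-persistence, we must show that for every $\sigma \in I^*$ and $i \in I$, $\last(\lambda^\Hyp(\sigma)) = \er$ implies $\last(\lambda^\Hyp(\sigma i)) = \er$. Write $q = \delta'(q_0,\sigma)$ and $q' = \delta'(q,\sigma_{|\sigma|-1}\ldots)$; more simply, we reason about the last transition taken. Suppose $\sigma = \rho a$ with $p = \delta'(q_0,\rho)$ and $\lambda'(p,a) = \er$. The goal is then to show that the target $\delta'(p,a)$ is a state from which every input outputs $\er$. Since $\lambda'(q_\er,\cdot) = \er$ and $\delta'(q_\er,\cdot) = q_\er$, it suffices to show $\delta'(p,a) = q_\er$, or to handle the footnote case.

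\textbf{Case analysis on which clause produced $\er$.} If the first clause applies (because $\lambda^\Obs(p,a) = \er$, $\delta^\Obs(p,a){\uparrow}$, or $p = q_\er$), then $\delta'(p,a) = q_\er$ by construction. The delicate situation is when the second or third clause yields $\lambda^\Obs(p,a) = \er$. However, the first clause is checked first and is triggered exactly by $\lambda^\Obs(p,a)=\er$, so any $\er$ output is routed to $q_\er$. This makes the cases effectively disjoint under first-match reading. We also note that frontier states are by definition reached by non-$\er$ transitions, which is consistent with this.

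\textbf{Footnote modification.} When $q_\er$ is merged into a basis state $b$ with $\delta(b,i) = b$ and $\lambda(b,i) = \er$ for all $i$, every transition into $q_\er$ becomes a transition into $b$. From $b$, every input outputs $\er$ and loops back to $b$, so $b$ behaves exactly like $q_\er$, and the argument carries over. We should check that $b$'s own outgoing transitions in $\Hyp$ are indeed the $\er$-self-loops. By the first clause they go to $q_\er$, which is redirected to $b$; the alternative is a direct self-loop via clause two. Either way the result is $b$ with output $\er$.

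\textbf{Concluding induction.} By induction on $|\sigma|$, we prove the invariant that once an $\er$ is output, the run stays in the sink state ($q_\er$ or $b$), and all further outputs are $\er$. This yields persistence.

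The main obstacle is the overlapping clauses of \Cref{build_hyp}, since clause two could fire with output $\er$ to a basis state. We resolve this by reading the clauses in order, so that clause one takes precedence.
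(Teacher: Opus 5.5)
Your argument is correct and is essentially the paper's: every $\er$-labelled transition of the hypothesis comes from the first clause of $\textsc{BuildHypothesis}_{\er}$, so it enters the sink $q_\er$, which loops on every input with output $\er$. The only difference is that you exclude $\er$-outputs from the second and third clauses by reading the clauses in order, whereas the paper shows those clauses never carry an $\er$-output, because frontier states (and, since only frontier states are promoted, basis states) are reached only by non-$\er$ transitions; your extra treatment of the footnote merge is harmless.
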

\begin{proof}
  By definition of frontier, if $r \in \erfrontier$ then there exists some $b \in \erbasis$ and $i \in I$ such that $\delta^{\Obs}(b,i)=r$ iff $\lambda^{\Obs}(b,i) \neq \er$. Additionally, only frontier states can be added to the basis according to the promotion rule. Therefore, for any transition $\lambda^\Obs(b, i) = \er$ with $b \in B$ and $i \in I$, must trigger the first case in \Cref{build_hyp} and lead to $\lambda^{\Hyp}(b,i)=\er$ and $\delta^{\Hyp}(b,i)=q_\er$. When $q_\er$ in the hypothesis is reached, all transitions self-loop with output $\er$, creating the sink state that characterizes $\er$-persistent systems. \qed
\end{proof}

\begin{lemma} \label{lem:proc_counter_er}
  Let $\Obs$ be an observation tree, $\Hyp$ an $\er$-persistent Mealy machine and $\sigma \in I^*$ such that $\delta^\Hyp(\sigma) \apart \delta^\Obs(\sigma)$. Then \textsc{ProcCounterEx}$_\er(\Hyp,\sigma)$ using $\textsc{OQ}_\er$ terminates and is correct, i.e., after termination $\Hyp$ is not a hypothesis for $\Obs$ anymore, and, in particular, a frontier state becomes isolated.
\end{lemma}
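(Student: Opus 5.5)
The plan is to mirror the correctness argument for \textsc{ProcCounterEx} in $\lsharp$~\cite{DBLP:conf/tacas/VaandragerGRW22}, by induction on $|\sigma|$ (more precisely on the length of the suffix of $\sigma$ lying outside $\erbasis \cup \erfrontier$). The invariant is that \textsc{ProcCounterEx}$_\er(\Hyp,\sigma)$ is only called with some $\sigma$ satisfying $\delta^\Hyp(\sigma) \apart \delta^\Obs(\sigma)$.

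Base case: $r = \delta^\Obs(\sigma) \in \erbasis \cup \erfrontier$. If $r \in \erbasis$, the hypothesis maps $\sigma$ to $r$ itself, so $r \apart r$, which is impossible. Hence $r \in \erfrontier$. Since $\delta^\Hyp(\sigma) = b$ is the unique basis state not apart from $r$ (by adequacy), apartness from $b$ makes $r$ isolated, and the promotion rule becomes applicable.

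One subtlety must be handled first. The path of $\sigma$ in $\Obs$ may pass through an $\er$-producing transition out of a basis state, reaching a state that lies in neither $\erbasis$ nor $\erfrontier$. I would argue that this cannot give a genuine counterexample. In $\Hyp$, such a transition goes to $q_\er$ with output $\er$ (\Cref{build_hyp}, \Cref{lem:hyp_er_persistent}). In $\Obs$, $\er$-persistence of the SUL forces all later outputs to be $\er$, and $\textsc{OQ}_\er$ stops anyway. Consequently $\delta^\Obs(\sigma)$ and $q_\er$ agree on every defined word, contradicting the assumed apartness. The unique prefix $\rho$ with $\delta^\Obs(\rho) \in \erfrontier$ is therefore well defined.

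Inductive step: let $r \notin \erbasis\cup\erfrontier$, and split at $h$ into $\sigma_1\sigma_2$. The query $\textsc{OQ}_\er(\access(q')\sigma_2\eta)$ adds the traces needed for the two branches.
\begin{itemize}
\item If $q' \apart r'$, recurse on the shorter $\sigma_1$; the invariant holds by that case distinction.
\item Otherwise, the standard argument needs $\delta^\Obs(\access(q')\sigma_2) \apart \delta^\Hyp(\sigma)$, so that recursion on $\access(q')\sigma_2$ preserves the invariant. This uses a witness transfer: $r'$ and $q'$ are not apart, and $\delta^\Obs(r',\sigma_2) \apart q$ via $\eta$.
\end{itemize}

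The main obstacle is precisely this transfer when $\textsc{OQ}_\er$ truncates $\access(q')\sigma_2\eta$ at the first $\er$. If $\er$ occurs, the word is not fully recorded. I would show that the recorded prefix still separates, by a case split on where the $\er$ appears:
\begin{itemize}
\item If $\er$ occurs at the same position where $r'$'s trace has non-$\er$ output, apartness is witnessed directly, contradicting $\neg(q'\apart r')$.
\item Otherwise, the outputs agree up to that $\er$ and, by persistence, both sides are constantly $\er$ afterwards. So the original witness $\eta$ either was also truncated on the $r'$ side or already yields a difference in the recorded part.
\end{itemize}

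Termination follows because the length $|\access(q')\sigma_2| < |\sigma|$ measured from the frontier strictly decreases, as $h$ lies strictly between $|\rho|$ and $|\sigma|$.
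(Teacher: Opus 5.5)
Your proposal takes essentially the same route as the paper's proof. It uses the same three-way split on $r = \delta^\Obs(\sigma)$ (reached by an $\er$-transition, in $\erbasis \cup \erfrontier$, or beyond the frontier) and the standard $\lsharp$ argument when $\textsc{OQ}_\er(\access(q')\sigma_2\eta)$ is fully recorded; when that query is truncated, it exhibits $q' \apart r'$ so that recursing on $\sigma_1$ is valid. Your second sub-bullet for the truncated case is vacuous, which the paper makes explicit: since $\eta$ witnesses $q \apart r$, $\lambda^\Obs(r,\eta)$ is defined, so all of $\sigma_1\sigma_2\eta$ is in $\Obs$, and by the cut-off of $\textsc{OQ}_\er$, $r'$ gives a non-$\er$ output at every strict prefix of $\sigma_2\eta$. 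Hence any truncation on the $q'$ side immediately yields $q' \apart r'$, with no appeal to persistence of the SUL or to a witness transfer.
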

\begin{proof}
This proof combines Lemma 3.11 and Theorem 3.14 from~\cite{DBLP:conf/tacas/VaandragerGRW22}. Note that in our case, the equivalence rule can only be applied if $\erbasis$ is adequate, i.e., if the basis is complete and all frontier states are identified. The only difference between \textsc{ProcCounterEx}$_\er$$(\Hyp,\sigma)$ compared to the listing in~\cite{DBLP:conf/tacas/VaandragerGRW22} is the use of $\textsc{OQ}_\er$ instead of $\textsc{OQ}$. Termination trivially follows the proof of Lemma 3.11 when adding the base case $r \in \{ \delta^\Obs(b,i) \mid b \in \erbasis, i \in I \}$ but $r \notin \erbasis ~\cup~ \erfrontier$. It remains to prove that  \textsc{ProcCounterEx}$_\er$$(\Hyp,\sigma)$ leads to an isolated frontier state.
Let $q := \delta^\Hyp(\sigma), r:= \delta^\Obs(\sigma)$ and $\eta \vdash q \apart r$ for some $\eta \in I^*$, i.e., $\lambda^\Obs(q,\eta) \neq \lambda^\Obs(r,\eta)$.

  \begin{itemize}
    \item Suppose $r \in \{ \delta^\Obs(b,i) \mid b \in \erbasis, i \in I \}$ but $r \notin \erbasis~\cup~\erfrontier$. It follows that $\delta^\Obs(b,i)=r$ and $\lambda^\Obs(b,i)=\er$ for some $b \in B, i \in I$. This case cannot occur as $\textsc{OQ}_\er$ ensures that any trace is cut-off after observing the first $\er$. In particular, there can exist no $\eta \in I^*$ such that $\eta \vdash q \apart r$.
    
    \item Suppose $r \in \erbasis \cup \erfrontier$. This case is equivalent to the proof in Lemma 3.11 as it does not make use of $\textsc{OQ}_\er$. Note that this results in $q \apart r$ with $r \in \erfrontier$ and $q \in \erbasis$ and because $r$ was identified when constructing the hypothesis, $r$ must now be isolated. 
    
    We prove $q \neq q_\er$, with $q_\er$ as in \textsc{BuildHypothesis}$_\er$, by contradiction. 
    Let $\sigma = \alpha i$ with $\alpha \in I^*$ and $i \in I$. 
    Because $r \in \erbasis \cup \erfrontier$, there exists some $r' \in \erbasis$ with $\delta^\Obs(\alpha i) = \delta^\Obs(r',i)=r$. 
    Let $q' = \delta^\Hyp(\alpha)$.
    Because $r' \in \erbasis$, and $\Obs$ is a tree with $\alpha = \access(r')$, it must hold that $q' = r'$ and thus $q' \in \erbasis$.
    For $q = q_\er$ to be true, $\lambda^\Obs(q',i) = \lambda^\Obs(r',i) = \er$ according to \textsc{BuildHypothesis}$_\er$. However, $\textsc{OQ}_\er$ ensures that any trace is cut-off after observing the first $\er$, which means that there can be no $\eta \in I^*$ such that $\eta \vdash q \apart r$.
    
    \item Suppose $r \notin \erbasis \cup \erfrontier$. We decompose  $\sigma$ into $\sigma_1\sigma_2$ such that $q' := \delta^\Hyp(\sigma_1) \in \erbasis$ and $r' := \delta^\Obs(\sigma_1)$ as in \Cref{alg:counterexamplebs}. We perform $\textsc{OQ}_\er(\access(q')\sigma_2\eta)$.
    Suppose afterwards $\delta^\Obs(q',\sigma_2\eta){\downarrow}$, in that case correctness and termination follow from Theorem 3.11.
    Suppose $\delta^\Obs(q',\sigma_2\eta){\uparrow}$ and let $\mu$ be the strict prefix of $\sigma_2\eta$ such that $\delta^\Obs(q',\mu){\downarrow}$. It follows from the definition of $\textsc{OQ}_\er$ that $\last(\lambda^\Obs(q',\mu))=\er$. 

    Because $\eta \vdash q \apart r$ and $\delta^\Obs(\sigma_1\sigma_2) = \delta^\Obs(\sigma) = r$, it must hold that $\lambda^\Obs(\sigma_1\sigma_2\eta){\downarrow}$.
    As $\mu$ is a strict prefix of $\sigma_2\eta$ and $\textsc{OQ}_\er$ ensures that any trace is cut-off after observing the first $\er$, it must hold that $\last(\lambda^\Obs(\sigma_1\mu)) = \last(\lambda^\Obs(r',\mu))\neq \er$.

    Thus, $\last(\lambda^\Obs(q',\mu))=\er \neq \last(\lambda^\Obs(r',\mu))$. In particular, $\delta^\Hyp(\sigma_1) \apart \delta^\Obs(\sigma_1)$ which makes $\sigma_1$ a valid parameter for \textsc{ProcCounterEx}$_\er$ and, by induction, $\Hyp$ is not a valid hypothesis for $\Obs$ anymore after the recursive call.\qed
  \end{itemize}
\end{proof}

\begin{lemma} \label{lem:cex_after_oq_er}
  Let $\Sys$ be $\er$-persistent Mealy machine, $\Obs$ an observation tree for $\Sys$, $\Hyp$ a hypothesis build using $\textsc{BuildHypothesis}_\er(\Obs)$, $\rho \in I^*$ such that $\lambda^\Hyp(\rho) \neq \lambda^\Sys(\rho)$. After executing $\textsc{OQ}_\er(\rho)$, there exists some $\mu \in I^*$ such that $\lambda^\Hyp(\mu) \neq \lambda^\Obs(\mu)$.
\end{lemma}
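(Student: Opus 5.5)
We argue directly via the first position where $\Hyp$ and $\Sys$ disagree on $\rho$. Let $j$ be the smallest index with $\lambda^\Hyp(\rho)_j \neq \lambda^\Sys(\rho)_j$, which exists by assumption. The aim is to show that the prefix $\rho_{:j}$ is fully recorded in $\Obs$ after executing $\textsc{OQ}_\er(\rho)$. Then $\mu := \rho_{:j}$ witnesses $\lambda^\Hyp(\mu) \neq \lambda^\Obs(\mu)$, because $\Obs$ is an observation tree for $\Sys$ and so $\lambda^\Obs(\mu) = \lambda^\Sys(\mu)$ wherever it is defined.

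The key step is showing that $\textsc{OQ}_\er(\rho)$ does not stop before reaching position $j$. By definition (\Cref{alg:oq_er}), the query executes inputs one by one and breaks only right after an output $\er$ is observed. Suppose it broke at some position $l < j$. Then $\lambda^\Sys(\rho)_l = \er$, and since $j$ is the first disagreement, also $\lambda^\Hyp(\rho)_l = \er$. Now use $\er$-persistence of both machines: for $\Hyp$ this is \Cref{lem:hyp_er_persistent}, since $\Hyp$ is built by $\textsc{BuildHypothesis}_\er$, and for $\Sys$ it is an assumption. Persistence gives $\lambda^\Hyp(\rho)_{l'} = \er = \lambda^\Sys(\rho)_{l'}$ for all $l' \geq l$, and in particular at $l' = j$. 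This contradicts the choice of $j$. Hence the query executes at least up to and including index $j$: it either breaks exactly at $j$ or later, and in both cases the transition at index $j$ is added to $\Obs$. Consequently $\lambda^\Obs(\rho_{:j})$ is defined and equals $\lambda^\Sys(\rho_{:j})$.

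It then remains to compare $\Obs$ with $\Hyp$ on $\mu = \rho_{:j}$. Their last outputs differ by the choice of $j$, so $\lambda^\Hyp(\mu) \neq \lambda^\Obs(\mu)$. Note that $\lambda^\Hyp$ is total because $\Hyp$ is complete, since every undefined case is routed to $q_\er$.

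The main obstacle is treating the indices of $\textsc{OQ}_\er$ carefully. The break happens \emph{after} the $\er$ output is appended, so the disagreeing symbol itself is recorded. This holds even when the disagreement is that $\Sys$ outputs $\er$ first, because the $\er$ at index $j$ is still stored before the break. One should also check that the query stores its results in $\Obs$ along the path of $\rho$, so that $\lambda^\Obs(\mu)$ is indeed defined.
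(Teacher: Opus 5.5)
Your proof is correct and rests on the same key fact as the paper's proof: both $\Sys$ and $\Hyp$ are $\er$-persistent (the latter by \Cref{lem:hyp_er_persistent}), so agreement up to the first $\er$ output of $\Sys$ forces agreement everywhere after it, and hence the disagreement lies inside the prefix that $\textsc{OQ}_\er(\rho)$ records in $\Obs$. The paper splits cases on the first error position of $\Sys$, whereas you index by the first disagreement; your version also covers, uniformly, the case where the very first output is already $\er$, for which the paper's index $i$ with $\lambda^\Sys(\rho)_i \neq \er$ does not exist.
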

\begin{proof}
  If $\er \notin \lambda^\Sys(\rho)$, $\lambda^{\Obs}(\rho){\downarrow}$ and the above statement holds trivially.
  Assume $\er \in \lambda^\Sys(\rho)$, then there must exist some $i \in \mathbb{N}$ such that $\lambda^\Sys(\rho)_i \neq \er$ and $\lambda^\Sys(\rho)_{i+1} = \er$. 
  
  Suppose $\lambda^\Hyp(\rho)_{:i+1} \neq \lambda^\Sys(\rho)_{:i+1}$. In that case, $\textsc{OQ}_\er(\rho)$ results in $\lambda^\Obs(\rho_{:i+1}){\downarrow}$ and $\lambda^\Hyp(\rho)_{:i+1} \neq \lambda^\Obs(\rho)_{:i+1}$, so we set $\mu = \rho_{:i+1}$. 

  Suppose $\lambda^\Hyp(\rho)_{:i+1} = \lambda^\Sys(\rho)_{:i+1}$. In that case, $\lambda^\Sys(\rho)_{i+1} = \er = \lambda^\Hyp(\rho)_{i+1}$. However, we assume $\Sys$ is $\er$-persistent and $\Hyp$ is $\er$-persistent by construction (\Cref{lem:hyp_er_persistent}). 
  Thus for any $j \in \mathbb{N}$ with $i < j \leq |\rho|$, $\lambda^\Sys(\rho)_{j} = \er = \lambda^\Hyp(\rho)_{j}$ which contradict the assumption that $\lambda^\Hyp(\rho) \neq \lambda^\Sys(\rho)$. \qed
\end{proof}

\subsubsection{Proof of \Cref{lem:error_learning}}
If SUL $\Sys$ is $\er$-persistent, then $\erlsharp$ learns $\Sys$ within $\bigO(kn^2 + n \log m)$ OQs and at most $n - 1$ EQs where $n = |\Sys|$, $k = |I|$ and $m$ is the length of the longest counterexample.
\begin{proof}
    We prove that $\erlsharp$ learns $\Sys$ within $\bigO(kn^2 + n \log m)$ OQs and at most $n - 1$ EQs by:
    \begin{enumerate}
      \item Proving every rule application in $\erlsharp$ increases the following norm $N_{\er}(\Obs)$.
      \begin{equation*}
      \resizebox{0.91\hsize}{!}{%
          $N_{\er}(\Obs) = \underbrace{\frac{|\erbasis|(|\erbasis| + 1)}{2}}_{N_Q(\Obs)} + |\underbrace{\vphantom{\frac{|\erbasis|}{2}}\{(q,i)\in \basis \times I\mid \delta^{\Obs}(q,i){\downarrow}\}}_{N_{\downarrow(\Obs)}}| + |\underbrace{\vphantom{\frac{|\erbasis|}{2}}\{(q,q') \in \erbasis \times \erfrontier |\ q \apart q'\}}_{N_{\apart}(\Obs)}|
          $}
      \end{equation*}
      \item Next, we prove that this norm is bounded by $|\Sys|$ and $|I|$ and derive the maximum number of rule applications.
      \item We combine all previous parts to derive the complexity.
    \end{enumerate}

    \myparagraph{(1) Every rule application in $\erlsharp$ increases norm $N_\er(\Obs)$.} Let $\erbasis,\erfrontier,\Obs$ indicate the values before and $\erbasis',\erfrontier',\Obs'$ denote the values after a rule application. 
    This proof is similar to proofs in~\cite{DBLP:conf/tacas/VaandragerGRW22,DBLP:conf/fm/KrugerJR24}  but is added for completeness.
    \begin{description}
    \item[Promotion] If $q \in \erfrontier$ is isolated, we move it from $\erfrontier$ to $\erbasis$, i.e.~$\erbasis' := \erbasis\cup\{q\}$, thus
      \begin{align*}
        N_Q(\Obs') &= \frac{|\erbasis'| \cdot (|\erbasis'|+1)}{2}
        = \frac{(|\erbasis|+1) \cdot (|\erbasis|+1+1)}{2}
        \\
                   &= \frac{(|\erbasis|+1)\cdot |\erbasis|}{2} + \frac{(|\erbasis|+1)\cdot 2}{2}
                     = N_Q(\Obs) + |\erbasis| + 1 
                     \\
        N_\downarrow(\Obs') &\supseteq N_\downarrow(\Obs)
        \\
        N_{\apart}(\Obs') &\supseteq N_{\apart}(\Obs) \setminus (\erbasis\times \{q\})
      \end{align*}
      From which it follows that $|N_{\apart}(\Obs')| ~~\ge~~ |N_{\apart}(\Obs)| - |\erbasis|$. In total, $N_\er(\Obs') \ge N_\er(\Obs)+1$.
  
    \item[Extension] If $\delta^{\Obs}(q,i){\uparrow}$ for some $q\in \basis$ and $i\in I$, we execute $\textsc{OQ}_\er(\mathsf{access}(q) \, i)$. By definition of the frontier and basis, $\er \notin \lambda^{\Obs}(\access(q))$ for all $q \in \erbasis$. Thus, this may result in either $\lambda^{\Obs}(q,i) = \er$ or $\lambda^{\Obs}(q,i) \neq \er$ but in both cases,
      \[
        N_Q(\Obs') = N_Q(\Obs)
        \qquad
        N_\downarrow(\Obs') = N_{\downarrow}(\Obs)\cup\{(q,i)\}
        \qquad
        N_{\apart}(\Obs') \subseteq N_{\apart}(\Obs)
      \]
      and thus $N_\er(\Obs') \ge N_\er(\Obs) + 1$.
    \item[Separation] Let $q\in \erfrontier$ and distinct $r,r'\in \erbasis$
      with $\neg(q\apart r)$ and $\neg(q\apart r')$. The algorithm performs 
      the query $\textsc{OQ}_\er(\mathsf{access}(q) \; \sigma)$. 
      We perform a case distinction based on whether $\lambda^{\Obs}(q,\sigma){\downarrow}$ after executing the $\textsc{OQ}_\er(\mathsf{access}(q) \; \sigma)$:
      \begin{itemize}
        \item $\lambda^{\Obs}(q,\sigma){\downarrow}$ which implies $r\apart q$ or $r' \apart q$.
        \[
        N_{\apart}(\Obs') \supseteq N_{\apart}(\Obs) \cup \{(r,q)\}
        \qquad\text{ or } \qquad 
        N_{\apart}(\Obs') \supseteq N_{\apart}(\Obs) \cup \{(r',q)\}
        \]
        and therefore $|N_{\apart}(\Obs')| \ge |N_{\apart}(\Obs)| + 1$.
        The other components of the norm stay unchanged, thus the norm rises.
        \item $\lambda^{\Obs}(q,\mu){\downarrow}$ for $\mu$ a strict prefix of $\sigma$ and $\last(\lambda^{\Obs}(q,\mu)) = \er$. As $\textsc{OQ}_\er$ prevents adding any word to $\Obs$ with more than one $\er$ output, we can safely assume that for any strict prefix $\rho$ of $\sigma$, $\last(\lambda^{\Obs}(r,\rho)) \neq \er$ and $\last(\lambda^{\Obs}(r',\rho)) \neq \er$. We derive $\last(\lambda^{\Obs}(r,\mu)) \neq \last(\lambda^{\Obs}(q,\mu)) \neq \last(\lambda^{\Obs}(r',\mu))$. Thus,
        \[
        N_{\apart}(\Obs') \supseteq N_{\apart}(\Obs) \cup \{(r,q), (r',q)\}
        \]
        and therefore $|N_{\apart}(\Obs')| \ge |N_{\apart}(\Obs)| + 2$.
        The other components of the norm stay unchanged, thus the norm rises.
      \end{itemize}
    \item[Equivalence] If the algorithm does not terminate after the equivalence rule, we show that the norm increases because a frontier state becomes isolated.
    By \textsc{EQ}, it holds that $\rho \in I^*$ such that $\lambda^\Hyp(\rho) \neq \lambda^\Sys(\rho)$. Contrary to $\lsharp$, we do not add $\rho$ to $\Obs$ but the result of $\textsc{OQ}_\er(\rho)$. 

    From \Cref{lem:cex_after_oq_er}, it follows that if the algorithm does not terminate after the equivalence query, then there exists some $\mu \in I^*$ such that $\lambda^\Hyp(\mu) \neq \lambda^\Obs(\mu)$. Thus, calling $\textsc{ProcCounterEx}_\er$ with $\sigma$ the shortest prefix of $\rho$ such that $\delta^\Hyp(\sigma) \apart \delta^\Obs(\sigma)$ results in an isolated frontier states (\Cref{lem:proc_counter_er}). In particular, this implies 
    \[
        (q,r) \in N_{\apart}(\Obs') \setminus N_{\apart}(\Obs).
        \tag*{\qed}
    \]
    As all other components of the norm remain the same, $N_\er(\Obs') \geq N_\er(\Obs) + 1$.
    \end{description}

    \myparagraph{(2) Norm $N_\er(\Obs)$ is bounded by $|\Sys|$ and $|I|$.}
    It must hold that $|\erbasis| \leq |\Sys|$. The frontier is bounded by $k \cdot n$, $|\{(q,i)\in \basis \times I\mid \delta^{\Obs}(q,i){\downarrow}\}| \leq kn$. The set $\erbasis \cup \erfrontier$ contains at most $kn + 1$ elements. Because each $r \in \erfrontier$ has to be proven apart from each $q \in \erbasis$, $|\{(q,q') \in \erbasis \times \erfrontier |\ q \apart q'\}| \leq (n-1)(kn + 1)$. When simplifying this all, $N_\er(\Obs) \leq \bigO(kn^2)$. 

    \myparagraph{(3) Combining everything.}
    The preconditions on the rules never block the algorithm and when the norm $N_\er(T)$ cannot be increased further, only the equivalence rule can be applied which is guaranteed to terminate the algorithm. Therefore, the correct Mealy machine is learned within $kn^2$ (non-equivalence) rule applications which each require at most one OQ, leading to $\bigO(kn^2)$ OQs. Every (non-terminating) application of the equivalence rule leads to a new basis state. As there are at most $n$ basis states and we start with $\erbasis = \{q_0\}$, there can be at most $n-1$ applications of the equivalence rule. Each call requires at most $\log m$ OQs (\Cref{lem:proc_counter_er}). Thus, $\erlsharp$ requires $\bigO(kn^2 + n \log m)$ OQs and at most $n - 1$ EQs.
\end{proof}

\subsubsection{Proof \Cref{lem:mostly_learning}}
\begin{proof}
  We prove that $\eralsharp$ learns $\Sys$ within $\bigO(kn^2 + no^2 + n \log m)$ OQs and at most $n - 1$ EQs by:
  \begin{enumerate}
      \item Proving every rule application in $\eralsharp$ increases the following norm $N_{\er,A}(\Obs)$.
      \begin{equation*}
      \scalebox{0.77}{
          $N_{\er,A}(\Obs) = \underbrace{|\erbasis|(|\erbasis| + 1)}_{N_Q(\Obs)} + |\underbrace{\{(q,i)\in \basis \times I\mid \delta^{\Obs}(q,i){\downarrow}\}}_{N_{\downarrow(\Obs)}}| + |\underbrace{\{(q,q') \in \erbasis \times \erfrontier |\ q \apart q'\}}_{N_{\apart}(\Obs)}|$ 
          }
      \end{equation*}
      \begin{equation*}
          \scalebox{0.77}{
          $\qquad+ |\underbrace{\{(q,r) \in \erbasis \times \erfrontier \mid \alpha \text{ with } \sigma = \mathsf{sep}(W,\delta^\R(\access^\Obs(q)),\delta^\R(\access^\Obs(r))), \rho, \rho' \text{ prefixes of } \sigma\}}_{N_{(\erbasis \times \erfrontier){\downarrow}}}|$        
          }
      \end{equation*}
      \begin{equation*}
          \scalebox{0.77}{
          $\qquad\qquad+ |\underbrace{\{(q,p) \in \basis \times Q^\R \times Q^\R \mid \delta^\Obs(q,\sigma){\downarrow} \lor \lambda^\Obs(q,\rho)=\er, \sigma = \mathsf{sep}(W,p,p'), \rho \text{ prefix of } \sigma \}}_{N_{(\basis \times Q^\R \times Q^\R){\downarrow}}(\Obs)}|$        
          }
      \end{equation*}
      \begin{equation*}
          \scalebox{0.77}{
          $\qquad\qquad+ |\underbrace{\{(q,p) \in (\erbasis \cup \erfrontier) \times Q^\R  \mid \exists \sigma \in I^*, \mathsf{last}(\lambda^\Obs(q,\sigma))=\er \leftrightarrow \delta^\R(p,\sigma) \in Q^\R_F \}}_{N_{(\erbasis \times Q^{\R}){\leftrightarrow}}(\Obs)}|$
          }
      \end{equation*}
      With 
      $\alpha = (\delta^\Obs(q,\sigma){\downarrow} \lor \lambda^\Obs(q,\rho)=\er) \land (\delta^\Obs(r,\sigma){\downarrow} \lor \lambda^\Obs(r,\rho')=\er)$.
      \item Next, we prove that this norm is bounded by $|\Sys|$, $|I|$ and $|\K|$ and derive the maximum number of rule applications.
      \item We combine all previous parts to derive the complexity.
  \end{enumerate}

  \myparagraph{(1) Every rule application in $\eralsharp$ increases norm $N_{\er,A}(\Obs)$.} Let $\erbasis,\erfrontier,\Obs$ indicate the values before and $\erbasis',\erfrontier',\Obs'$ denote the values after the a rule application.
  This proof is similar to the termination proof of~\cite{DBLP:conf/fm/KrugerJR24} of $\alsharp$.
  \begin{description}
  \item[Rebuilding] Let $q, q' \in \erbasis$, $i \in I$ with $(\delta^\Obs(q,i){\uparrow} \lor \delta^\Obs(q,i) \in \erfrontier)$, $\neg(q' \apart \delta^{\Obs}(q,i))$ and $\access^\Obs(q), \access^\Obs(q)i \in P$. 
  Moreover, let $\sigma$ be the unique separating sequence $\mathsf{sep}(W,\delta^\R(\access^\Obs(q)),\delta^\R(\access^\Obs(r)))$. Note that $\sigma$ exists because $\access^\Obs(q)$ and $\access^\Obs(q)i$ are both in the minimal state cover of $\R$.
  The output queries $\textsc{OQ}_\er(\access^{\Obs}(q)i\sigma)$ and $\textsc{OQ}_\er(\access^{\Obs}(q')\sigma)$ are executed. We perform a case distinction based on $(\delta^\Obs(q,i){\uparrow} \lor \delta^\Obs(q,i) \in \erfrontier)$.
  
  \begin{itemize}
    \item $\delta^\Obs(q,i){\uparrow}$, it follows trivially that
      \[ N_{\converges}(\Obs') \supseteq N_{\converges}(\Obs) \cup \{(q,i)\}. \]
      All other norms stay the same or increase. Thus, $N_{\er,A}(\Obs') \ge N_{\er,A}(\Obs) + 1$.
    \item $\delta^\Obs(q,i) \in \erfrontier$. 
    Because $\sigma$ guarantees progress, there exists $\rho,\rho'$ maximal prefixes of $i\sigma,\sigma$ such that $\delta^{\Obs}(q,\rho){\downarrow}$ and $\delta^{\Obs}(q',\rho'){\downarrow}$ with
    $(\delta^\Obs(q,i\sigma){\uparrow} \land \delta^{\Obs}(q,\rho)\neq\er) \lor (\delta^\Obs(q',\sigma){\uparrow} \land \delta^{\Obs}(q',\rho')\neq\er)$. Thus, one of the performed queries must extend $\Obs$.

    Suppose $\mu$ and $\mu'$ the longest prefix of $i\sigma$ and $\sigma$ (respectively) such that $\delta^{\Obs'}(q,\mu){\downarrow}$ and $\delta^{\Obs'}(q',\mu'){\downarrow}$, then $(\delta^{\Obs'}(q',\sigma){\downarrow} \lor \lambda^{\Obs'}(q',\mu)=\er) \land (\delta^{\Obs'}(q,i\sigma){\downarrow} \lor \lambda^{\Obs'}(q,\mu')=\er)$. From this it follows that,

        \[ N_{(\erbasis\times \erfrontier)\converges}(\Obs') \supseteq N_{(\erbasis\times\erfrontier)\converges}(\Obs) \cup \{(q',\delta^{\Obs}(q,i))\}. \]

     Afterwards, the rebuilding rule cannot be applied again with $q'$ and $\delta^{\Obs}(q,i)$.
     In some cases, we might find that $\delta^{\Obs}(q,i) \apart q'$ which indicates 
      \[ N_{\apart}(\Obs') \supseteq N_{\apart}(\Obs) \cup \{(q',\delta^{\Obs}(q,i))\} \] Otherwise $N_{\apart}(\Obs') \supseteq N_{\apart}(\Obs)$.
      All other norms stay the same or increase. Thus, $N_{\er,A}(\Obs') \ge N_{\er,A}(\Obs) + 1$.
  \end{itemize}
  In both cases, $N_{\er,A}(\Obs') \ge N_{\er,A}(\Obs) + 1$.
  \item[Prioritized promotion] 
  Let $r \in \erfrontier$. Suppose $r$ is isolated and $\accessT(r) \in P$. State $r$ is moved from $\frontier$ to $basis$, which implies $\erbasis' := \erbasis \cup \{r\}$, thus
  \begin{align*}
    N_Q(\Obs') &= |\erbasis'| \cdot (|\erbasis'|+1)
    = (|\erbasis|+1) \cdot (|\erbasis|+1+1)\\
                &= (|\erbasis|+1)\cdot |\erbasis| + 2(|\erbasis|+1)
                  = N_Q(\Obs) + 2|\erbasis| + 2
  \end{align*}
  
  Because we move elements from the frontier to the basis, we find
  \[ N_{\apart}(\Obs') ~~\supseteq~~ N_{\apart}(\Obs) \setminus (\erbasis\times \{r\}) \] 
  \[ N_{(\erbasis\times\erfrontier)\converges}(\Obs) ~~\supseteq~~ N_{(\erbasis\times\erfrontier)\converges}(\Obs) \setminus (\erbasis\times \{r\})
  \]
  and thus
  \[ |N_{\apart}(\Obs')| ~~\ge~~ |N_{\apart}(\Obs)| - |\erbasis| \] 
  \[ |N_{(\erbasis\times \erfrontier)\converges}(\Obs')| ~~\ge~~ |N_{(\erbasis\times\erfrontier)\converges}(\Obs)| - |\erbasis|
  \]
  All other norms stay the same or increase. The total norm increases because
    \[ N_{\er,A}(\Obs') \ge N_{\er,A}(\Obs)+ 2\mid \erbasis\mid +~2~- \mid \erbasis\mid - \mid \erbasis\mid ~\ge N_{\er,A}(\Obs)+ 2 \]  
  \item[Promotion] Analogous to the prioritized promotion rule.
  \item[Extension] Analogous to the extension rule from \Cref{lem:error_learning}. The new components of the norm stay unchanged or rise, thus the total norm rises.
  \item[Separation] Analogous to the separation rule from \Cref{lem:error_learning}. The new components of the norm stay unchanged or rise, thus the total norm rises.
  \item[Match separation]  Let $q \in \erbasis$, $p \in Q^{\R}, i \in I$, $\sigma \in I^*$, $\delta^{\Obs}(q,i)=r \in \erfrontier$ and $\delta^{\R}(p,i)=p'$. Suppose 
  \begin{enumerate}
    \item $q \matches p$, 
    \item There is no $\rho \in I^*$ such that $\mathsf{last}(\lambda^\Obs(q,\rho))=\er \leftrightarrow \delta^\R(p,\rho) \in Q^\R_F$,
    \item There is no $s \in \erbasis$ such that $p' \matches s$,
    \item There exists $q' \in \erbasis$ such that 
    $\neg(r \apart q')$
    \item There exists $\sigma$ such that $\last(\lambda^\Obs(q',\sigma))=\er \leftrightarrow \delta^\R(p',\sigma)\in Q^\R_F$.
  \end{enumerate}
  Choose $\sigma$ such that $\last(\lambda^\Obs(q',\sigma))=\er \leftrightarrow \delta^\R(p',\sigma)\in Q^\R_F$.
  We perform a case distinction on $\last(\lambda^\Obs(q',\sigma))=\er$. 
  \begin{itemize}
    \item $\last(\lambda^\Obs(q',\sigma))=\er$ and thus $\delta^\R(p',\sigma)\in Q^\R_F$. After \textsc{OQ}$_\er(\accessT(q)i\sigma)$ we find either
    \begin{itemize}
      \item $r \apart q'$ if after the output query $\delta^\Obs(q,i\sigma){\downarrow}$ and $\last(\lambda^\Obs(q,i\sigma)) \neq \er$. Thus, $N_{\apart}(\Obs') \supseteq N_{\apart}(\Obs) \cup \{(q',r)\}$.
      \item Otherwise, there exists a prefix $\rho$ of $\sigma$ such that $\last(\lambda^\Obs(q,i\rho))=\er \leftrightarrow \delta^\R(p,i\rho) \in Q^\R_F$. Thus,
      $N_{(\erbasis \times Q^{\R}){\leftrightarrow}}(\Obs') \supseteq N_{(\erbasis \times Q^{\R}){\leftrightarrow}}(\Obs) \cup \{(q,p)\}$.
    \end{itemize}
    \item $\last(\lambda^\Obs(q',\sigma))\neq\er$ and thus $\delta^\R(p',\sigma)\notin Q^\R_F$. After \textsc{OQ}$_\er(\accessT(q)i\sigma)$ we find either
    \begin{itemize}
      \item $\delta^\Obs(q,i\sigma){\uparrow}$ or $\delta^\Obs(q,i\sigma){\downarrow}$ with $\lambda^\Obs(q,i\sigma)=\er$. From both it follows that
      $N_{\apart}(\Obs') \supseteq N_{\apart}(\Obs) \cup \{(q',r)\}$.
      \item $\delta^\Obs(q,i\sigma){\downarrow}$ with $\lambda^\Obs(q,i\sigma)\neq\er$ which implies $\mathsf{last}(\lambda^\Obs(q,i\sigma))\neq\er \leftrightarrow \delta^\R(p,i\sigma) \notin Q^\R_F$.
    \end{itemize}
  \end{itemize}
  The other components of the norm stay unchanged or rise, thus the total norm rises.
  \item[Match refinement] Let $q \in B$, $p, p' \in Q^\R$ and $q \matches p$ and $q \matches p'$. After executing $\textsc{OQ}_\er(\access^\Obs(q)\sigma)$ with $\sigma$ the unique witness between $p$ and $p'$, we find that 
  \[ N_{(\basis \times Q^\R \times Q^\R){\downarrow}}(\Obs') \supseteq N_{(\basis \times Q^\R \times Q^\R){\downarrow}}(\Obs') \cup \{(q,p,p')\} \]
  In some cases, we might additionally find $q \notmatches p$ or $q \notmatches p'$ which does not directly increase a norm but might enable application of match seperation.
  The other components of the norm stay unchanged or increase, thus the norm rises.
  
  \item[Equivalence] Analogous to the equivalence rule from \Cref{lem:error_learning}. The new components of the norm stay unchanged or rise, thus the total norm rises.
  \end{description}

  \myparagraph{(2) Norm $N_{\er,A}(\Obs)$ is bounded by $|\Sys|$, $|I|$ and $|\K|$.}
  It must hold that $|\erbasis| \leq |\Sys|$. The rules that are also in $\erlsharp$ are bounded by $kn^2$ (see \Cref{lem:error_learning}). We introduced three new norm components: 
  \[ |N_{(\erbasis \times \erfrontier){\downarrow}}| \leq (n-1)(kn+1) \] 
  \[ |N_{(\erbasis \times Q^{\R}){\leftrightarrow}}| \leq no \]
  \[ |N_{(\basis \times Q^\R \times Q^\R){\downarrow}}| \leq no^2 \]
  When simplifying this all, $N_{\er,A}(\Obs) \leq \bigO(kn^2 + no^2)$.

  \myparagraph{(3) Combining everything.}
  The preconditions on the rules never block the algorithm and when the norm $N_{\er,A}(\Obs)$
  cannot be increased further, only the equivalence rule can be applied which is guaranteed to terminate the algorithm. Therefore, the correct Mealy machine is learned within $kn^2 + no^2$ (non-equivalence) rule applications which each require at most two OQs, leading to $\bigO(kn^2 + no^2)$ OQs. As there are at most $n$ basis states and we start with $\erbasis = \{q_0\}$, there can be at most $n-1$ applications of the equivalence rule. Each call requires at most $\log m$ OQs (see \Cref{lem:proc_counter_er}). Thus, $\eralsharp$ requires $\bigO(kn^2 + no^2 + n \log m)$ OQs and at most $n - 1$ EQs.
\end{proof}

\subsection{Proofs for Section 4.3 and 4.4}
Before we prove \Cref{thm:sound_learning}, we prove some lemmas related to counterexample processing. These are the sound variants of previous \Cref{lem:proc_counter_er} and \Cref{lem:cex_after_oq_er}.

\begin{lemma} \label{lem:proc_counter_sound}
  Let $\Obs$ be an observation tree, $\Hyp$ a Mealy machine constructed using $\textsc{BuildHypothesis}_\er(\Obs)$, $\K$ a $\er$-persistent reference for $\Sys$, and $\sigma \in I^*$ such that $\delta^\Hyp(\sigma) \apart_{\sound,\K} \delta^\Obs(\sigma)$. Then \textsc{ProcCounterEx}$_\sound(\Hyp,\sigma)$ using $\textsc{OQ}_\sound$ terminates and is correct, i.e., after termination $\Hyp$ is not a hypothesis for $\Obs$ anymore.
\end{lemma}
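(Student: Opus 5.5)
We follow the proof of \Cref{lem:proc_counter_er}, replacing $\apart$ by $\apart_{\sound,\K}$ and $\textsc{OQ}_\er$ by $\textsc{OQ}_\sound$. The argument is an induction on $|\sigma|$ minus the length of the unique prefix $\rho$ of $\sigma$ with $\delta^\Obs(\rho) \in \erfrontier$, mirroring Lemma 3.11 of~\cite{DBLP:conf/tacas/VaandragerGRW22}.

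\textbf{Termination.} Each recursive call in \Cref{alg:counterexamplebs} is made either on $\sigma_1$ or on $\access(q')\sigma_2$. In both cases the distance between the counterexample and its frontier prefix shrinks by roughly half, so there are $\bigO(\log |\sigma|)$ calls. This argument does not depend on which apartness notion is used. As in \Cref{lem:proc_counter_er}, we add the base case in which $r = \delta^\Obs(b,i)$ for some $b \in \erbasis$ with $r \notin \erbasis \cup \erfrontier$, i.e.\ $\lambda^\Obs(b,i)=\er$.

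\textbf{Correctness.} We distinguish cases on $r := \delta^\Obs(\sigma)$, with $q := \delta^\Hyp(\sigma)$ and $\eta$ a witness of $q \apart_{\sound,\K} r$.
\begin{enumerate}
\item If $r$ is reached by an $\er$-transition from the basis, $\textsc{OQ}_\sound$ has cut the trace there, so $r$ has no outgoing transitions in $\Obs$. An $\sound$-witness would need $\last(\lambda^\Obs(r,\eta)) \neq \er$, and an ordinary witness would need $\lambda^\Obs(r,\eta){\downarrow}$; neither exists. The other kind of $\sound$-witness would be $\last(\lambda^\Obs(q,\eta))\neq\er$ with $\access(r)\eta \notin \K$. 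For this case we argue that $\Hyp$ follows $\Obs$ into $q_\er$, so $q = q_\er$ and every output of $q$ is $\er$. Hence this case cannot occur.
\item If $r \in \erbasis \cup \erfrontier$, then $r \in \erfrontier$ is now $\sound$-apart from the basis state it was identified with. It therefore becomes isolated, and the hypothesis is refuted.
\item In the recursive case we query $\access(q')\sigma_2\eta$ with $\textsc{OQ}_\sound$. If the whole word is recorded in $\Obs$, the original argument yields $q' \apart_{\sound,\K} r'$ or a transferred apartness $\delta^\Hyp(\access(q')\sigma_2) \apart_{\sound,\K} \delta^\Obs(\access(q')\sigma_2)$, and we recurse. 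Otherwise the query stopped at a strict prefix $\mu$ of $\sigma_2\eta$, for one of two reasons:
\begin{enumerate}
\item An $\er$ was observed after $\mu$, so $\last(\lambda^\Obs(q',\mu))=\er$. As in \Cref{lem:proc_counter_er}, the branch from $r'$ along $\mu$ shows no $\er$, since $r'$'s path continues towards $\eta$. This gives $q' \apart r'$ directly.
\item The next input leads out of $\K$, i.e.\ $\access(q')\mu a \notin \K$ for the next symbol $a$. The path from $r'$ along $\mu a$ is defined in $\Obs$ with non-$\er$ output, because it is a strict prefix of the path to the witness $\eta$ from $r$, and such prefixes have non-$\er$ outputs. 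By \Cref{def:s_ap} with the roles of $p$ and $q$ swapped, this is exactly $q' \apart_{\sound,\K} r'$.
\end{enumerate}
In both subcases $\sigma_1$ is a valid argument, and we apply the induction hypothesis.
\end{enumerate}

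\textbf{Main obstacle.} The delicate point is justifying that outputs along strict prefixes of the path from $r'$ to the witness are non-$\er$. If the witness $\eta$ is itself an $\sound$-witness, i.e.\ it ends where $\access(r)\eta\notin\K$, that path may not be present in $\Obs$. We handle this by noting that an $\sound$-witness for $q \apart_{\sound,\K} r$ is defined on one side. If it is defined on the $q$ side, we use that $\access(q')\sigma_2\eta \notin \K$ exactly when $\K$ rejects along the hypothesis side. This in turn relies on $\K$ being $\er$-persistent and on \Cref{lem:hyp_er_persistent}. We would settle this subcase by a short separate argument on which side carries the defined trace.
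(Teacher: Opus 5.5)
\textbf{Verdict: there is a genuine gap.} It is exactly the subcase you flag as the main obstacle and leave open.

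In the recursive case, when $\textsc{OQ}_\sound(\access(q')\sigma_2\eta)$ stops at a strict prefix $\mu$ of $\sigma_2\eta$, you conclude in both (a) and (b) that $q' \apart_{\sound,\K} r'$ and recurse on $\sigma_1$. Both arguments need the trace from $r'$ along $\mu$ (resp.\ $\mu a$) to be recorded in $\Obs$ with non-$\er$ last output. That holds when the stop lies inside $\sigma_2$, or when $\lambda^\Obs(r,\eta){\downarrow}$.

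It fails when $\eta$ is an $\sound$-witness whose defined trace is on the hypothesis side, i.e.\ $\last(\lambda^\Obs(q,\eta)) \neq \er$, $\access(r)\eta = \sigma\eta \notin \K$ and $\lambda^\Obs(r,\eta){\uparrow}$, and the query runs through $\sigma_2$ and stops inside $\eta$ (by a $\K$-rejection or by an $\er$). In that situation:
\begin{itemize}
\item nothing along $\eta$ from $r$ need be in $\Obs$;
\item $q'$ and $r'$ may agree on everything recorded, so $q' \apart_{\sound,\K} r'$ can simply fail;
\item \textsc{ProcCounterEx}$_\sound$ then takes its other branch, and your proof never establishes the precondition for the call on $\access(q')\sigma_2$ in the stopped case.
\end{itemize}
So your closing claim that in both subcases $\sigma_1$ is a valid argument is false here. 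Your sketched repair names the fact $\access(q')\sigma_2\eta \notin \K$ but does not say what it is used for.

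\textbf{The missing idea} is to transfer the witness to the second half instead of to $r'$; this is what the paper does in its sub-case $\lambda^\Obs(r,\eta){\uparrow}$. Because the query executed all of $\sigma_2$, the state $s := \delta^\Obs(\access(q')\sigma_2)$ exists, and $\delta^\Hyp(\access(q')\sigma_2) = \delta^\Hyp(\sigma_1\sigma_2) = q$. There are two ways the query can have stopped:
\begin{itemize}
\item \emph{A $\K$-rejection.} Then $\er$-persistence of the language $\K$ gives $\access(q')\sigma_2\eta \notin \K$. Together with $\last(\lambda^\Obs(q,\eta)) \neq \er$, this is exactly $\eta \vdash q \apart_{\sound,\K} s$.
\item \emph{An $\er$ produced after $\sigma_2\eta'$, for a nonempty prefix $\eta'$ of $\eta$.} Then $\eta' \vdash q \apart s$. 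The recorded trace $\lambda^\Obs(q,\eta)$ has non-$\er$ last output and hence contains no $\er$ at all, whereas $\last(\lambda^\Obs(s,\eta')) = \er$.
\end{itemize}
Either way the recursive call on $\access(q')\sigma_2$ is justified, and no appeal to $\er$-persistence of $\Hyp$ is needed.

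With this case added, your proof covers what the paper's does. Your split by the reason the query stopped actually handles stops inside $\sigma_2$ more carefully than the paper's split on $\lambda^\Obs(r,\eta)$; your (a)/(b) arguments are correct there. The paper recurses on $\access(q')\sigma_2$ without checking that the query got that far.
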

\begin{proof}
This proof is similar to \Cref{lem:proc_counter_er}. We prove that using $\textsc{OQ}_\sound$ and $\sound$-apartness also results in $\Hyp$ not being a hypothesis for $\Obs$ anymore.
Let $q := \delta^\Hyp(\sigma), r:= \delta^\Obs(\sigma)$ and $\eta \vdash q \apart_{\sound,\K} r$ for some $\eta \in I^*$.

\begin{itemize}
  \item Suppose $r \in \{ \delta^\Obs(r',i) \mid r' \in \erbasis, i \in I \}$ but $r \notin \erbasis~\cup~\erfrontier$. It follows that $\lambda^\Hyp(\delta^{\Hyp}(\access(r')),i)=\er$ and $\delta^\Hyp(\delta^{\Hyp}(\access(r')),i)=q_\er=q$ (by construction). As the hypothesis and system are $\er$-persistent, there can exist no $\eta \in I^*$ such that $\eta \vdash q \apart_{\sound,\K} r$. 
  
  Because we assume $r$ exists in $\Obs$, the case that $\delta^\Obs(r',i){\uparrow}$ does not need to be considered here.
  \item Suppose $r \in \erbasis \cup \erfrontier$. 
  We decompose $\sigma = \alpha i$ into $\alpha \in I^*$, $i \in I$. Let $q' := \delta^\Hyp(\alpha), r := \delta^\Obs(\alpha)$. Since $\Obs$ is a tree, $\alpha = \access(r')$ and thus $q' = r'$. We have $q' \rightarrow^{i} q$ (in $\Hyp$) and $r' \rightarrow^{i} r$ (in $\Obs$).
  The basis must be $\sound$-adequate before applying the equivalence rule, thus it follows that $\neg(q \apart_{\sound,\K} r)$.
  However, we now have $q \apart_{\sound,\K} r$. Thus, $\Hyp$ is not a hypothesis for $\Obs$.
  
  \item Suppose $r \notin \erbasis \cup \erfrontier$. We decompose  $\sigma$ into $\sigma_1\sigma_2$ such that $q' := \delta^\Hyp(\sigma_1) \in \erbasis$ and $r' := \delta^\Obs(\sigma_1)$ as in \Cref{alg:counterexamplebs}. We perform $\textsc{OQ}_\sound(\access(q')\sigma_2\eta)$.
  
  Suppose afterwards $\delta^\Obs(q',\sigma_2\eta){\downarrow}$. We perform a case distinction based on whether $q' \apart_{\sound,\K} r'$ holds.
  \begin{itemize}
    \item If $q' \apart_{\sound,\K} r'$, then $\delta^\Hyp(q_0,\sigma_1) \apart_{\sound,\K} \delta^\Obs(q_0,\sigma_1)$. This makes $\sigma_1$ a valid parameter for \textsc{ProcCounterEx}$_\sound$ and, by induction, $\Hyp$ is not a valid hypothesis for $\Obs$ anymore after the recursive call.
    \item If $\neg(q' \apart_{\sound,\K} r')$, then either $\lambda^\Obs(r,\eta){\uparrow}$ or $\lambda^\Obs(r,\eta)=\er$. In the other casem $\lambda^\Obs(r,\eta)\neq\er$, assumption $\neg(q' \apart_{\sound,\K} r')$ would be contradicted. In both cases, it follows that $\lambda^\Obs(q',\sigma_2\eta)=\er$ and $\lambda^\Obs(q,\eta)\neq \er$. Thus, $\eta \vdash \delta^\Hyp(\access(q')\sigma_2) \apart_{\sound,\K} \delta^\Obs(\access(q')\sigma_2)$ holds and invoking \textsc{ProcCounterEx}$_\sound$ with $\access(q')\sigma_2$ ensures $\Hyp$ is not a valid hypothesis for $\Obs$ anymore after the recursive call.
  \end{itemize}
  
  Suppose $\delta^\Obs(q',\sigma_2\eta){\uparrow}$. We perform a case distinction based on $\lambda^\Obs(r,\eta)$.
  \begin{itemize}
    \item If $\lambda^\Obs(r,\eta)=\er$, then $\access(r)\eta = \sigma_1\sigma_2\eta \in \K$. 
    From $\eta \vdash q \apart_{\sound,\K} r$ and the definition of $\apart_{\sound,\K}$, it follows that $\eta \vdash q \apart r$. Because $\eta \vdash q \apart r$ and $\lambda^\Obs(r,\eta)=\er$, it follows that $\lambda^\Obs(q,\eta)\neq\er$. 
    In that case $\eta \vdash \delta^\Hyp(\access(q')\sigma_2) \apart_{\sound,\K} \delta^\Obs(\access(q')\sigma_2)$ and invoking \textsc{ProcCounterEx}$_\sound$ with $\access(q')\sigma_2$ ensures $\Hyp$ is not a valid hypothesis for $\Obs$ anymore after the recursive call.
    \item If $\lambda^\Obs(r,\eta)\neq\er$, then we have $\sigma_2\eta \vdash \delta^\Hyp(\sigma_1) \apart_{\sound,\K} \delta^\Obs(\sigma_1)$. This makes $\sigma_1$ a valid parameter for \textsc{ProcCounterEx}$_\sound$ and, by induction, $\Hyp$ is not a valid hypothesis for $\Obs$ anymore after the recursive call.
    \item If $\lambda^\Obs(r,\eta){\uparrow}$, then $\access(r)\eta = \sigma_1\sigma_2\eta \notin \K$. By $\eta \vdash q \apart_{\sound,\K} r$, it must hold that $\lambda^\Obs(q,\eta)\neq \er$. In that case $\eta \vdash \delta^\Hyp(\access(q')\sigma_2) \apart_{\sound,\K} \delta^\Obs(\access(q')\sigma_2)$ and invoking \textsc{ProcCounterEx}$_\sound$ with $\access(q')\sigma_2$ ensures $\Hyp$ is not a valid hypothesis for $\Obs$ anymore after the recursive call.
  \end{itemize}
\end{itemize}
\end{proof}

\begin{lemma} \label{lem:cex_after_oq_sound}
  Let $\Sys$ be $\er$-persistent Mealy machine, $\K$ a reference, $\Obs$ an observation tree for $\Sys$, $\Hyp$ constructed using $\textsc{BuildHypothesis}_\er(\Obs)$, $\rho \in \K$ such that $\lambda^\Hyp(\rho) \neq \lambda^\Sys(\rho)$. After executing $\textsc{OQ}_\sound(\rho,\K)$, there exists some $\mu \in I^*$ such that $\lambda^\Hyp(\mu) \apart_{\sound,\K} \lambda^\Obs(\mu)$.
\end{lemma}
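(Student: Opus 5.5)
We mirror the proof of \Cref{lem:cex_after_oq_er}, replacing $\textsc{OQ}_\er$ by $\textsc{OQ}_\sound$ and plain apartness by $\apart_{\sound,\K}$. We read the conclusion as: some $\mu$ satisfies $\delta^\Hyp(\mu) \apart_{\sound,\K} \delta^\Obs(\mu)$, i.e.\ a prefix of the traced word on which $\Hyp$ and the updated $\Obs$ are apart. It suffices to exhibit $\mu$ with $\lambda^\Hyp(\mu) \neq \lambda^\Obs(\mu)$ and $\lambda^\Obs(\mu){\downarrow}$. Then the root of $\Obs$ and $q_0^\Hyp$ are apart, witnessed by $\mu$, which is an instance of $\apart_{\sound,\K}$ (the first disjunct). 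The standard shortest-prefix argument then yields the required prefix.

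First we note what $\textsc{OQ}_\sound(\rho,\K)$ does when $\rho \in \K$. Because $\K$ is $\er$-persistent, every prefix of $\rho$ is in $\K$ as well. Hence the DFA check in \Cref{alg:oq_er_sound} never triggers a break. The query therefore behaves exactly like $\textsc{OQ}_\er(\rho)$: it feeds the symbols one by one and stops only after the first $\er$ output of $\Sys$.

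We then split into the same cases as in \Cref{lem:cex_after_oq_er}.
\begin{itemize}
\item If $\er \notin \lambda^\Sys(\rho)$, the whole of $\rho$ is added to $\Obs$. Then $\lambda^\Obs(\rho) = \lambda^\Sys(\rho) \neq \lambda^\Hyp(\rho)$, and we take $\mu = \rho$.
\item Otherwise, let $i$ be the position of the first $\er$ in $\lambda^\Sys(\rho)$; the query records exactly $\rho_{:i+1}$.
\begin{itemize}
\item If $\lambda^\Hyp(\rho)_{:i+1} \neq \lambda^\Sys(\rho)_{:i+1}$, we take $\mu = \rho_{:i+1}$.
\item Otherwise, $\lambda^\Hyp(\rho)_{i+1} = \er$. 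Since $\Hyp$ is $\er$-persistent by \Cref{lem:hyp_er_persistent} and $\Sys$ is $\er$-persistent, all later outputs of $\Hyp$ and $\Sys$ on $\rho$ are $\er$. This contradicts $\lambda^\Hyp(\rho) \neq \lambda^\Sys(\rho)$.
\end{itemize}
\end{itemize}

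The main point needing care is the first observation, that $\rho \in \K$ together with $\er$-persistence of $\K$ (prefix-closure) makes $\textsc{OQ}_\sound$ collapse to $\textsc{OQ}_\er$. Without it, the query could be truncated before the differing position, and no witness would be recorded.

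We do not need soundness of $\K$ for $\Sys$ here, since $\rho \in \K$. The remaining step is the interpretation of ``$\lambda^\Hyp(\mu) \apart_{\sound,\K} \lambda^\Obs(\mu)$'' as apartness of the reached states. From the recorded disagreement we obtain the shortest prefix $\mu'$ and a suffix witness, exactly as used in the equivalence rule of \Cref{alg:sound_lsharp}.
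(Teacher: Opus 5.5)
Your proof is correct and follows the paper's route. The paper's proof is the single line that, because $\rho \in \K$, the argument reduces to that of \Cref{lem:cex_after_oq_er}; your observation that $\er$-persistence (prefix-closure) of $\K$ makes $\textsc{OQ}_\sound(\rho,\K)$ coincide with $\textsc{OQ}_\er(\rho)$ is exactly the step that line leaves implicit. One small index slip: with your convention that $i$ is the position of the first $\er$ (and the paper's $\sigma_{:i} = \sigma_0\cdots\sigma_i$), the recorded prefix and the witness should be $\rho_{:i}$ rather than $\rho_{:i+1}$, which is harmless because agreement up to position $i$ already forces agreement afterwards by $\er$-persistence.
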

\begin{proof}
  Because $\rho \in \K$, this proof reduces to the proof of \Cref{lem:cex_after_oq_er}.
\qed
\end{proof}

\subsubsection{Proof of \Cref{thm:sound_learning}}
\begin{proof}
    We prove that $\soundlsharp$ learns a Mealy machine $\Hyp$ such that $\Hyp \sim_{\K} \Sys$ and $\lambda^{\Sys}(\sigma) = q_{\er}$ for all $\sigma \notin \K$. This $\Hyp$ is learned within $\bigO(kn^2 + n \log m)$ OQs and at most $n - 1$ EQs by:
    \begin{enumerate}
        \item Proving every rule application in $\soundlsharp$ increases the following norm $N_{\sound}(\Obs)$.
        \begin{equation*}
            \scalebox{0.77}{
                $N_{\sound}(\Obs) = \underbrace{|\erbasis|(|\erbasis| + 1)}_{N_Q(\Obs)} + |\underbrace{\{(q,i)\in \basis \times I\mid \delta^{\Obs}(q,i){\downarrow}\}}_{N_{\downarrow(\Obs)}}| + |\underbrace{\{(q,q') \in \erbasis \times \erfrontier |\ q \apart_{\sound,\K} q'\}}_{N_{\apart_{\sound,\K}}(\Obs)}|$ 
                }
            \end{equation*}
            \begin{equation*}
                \scalebox{0.77}{
                $\qquad+ |\underbrace{\{(q,r) \in \erbasis \times \erfrontier \mid \alpha \text{ with } \sigma = \mathsf{sep}(W,\delta^\R(\access^\Obs(q)),\delta^\R(\access^\Obs(r))), \rho, \rho' \text{ prefixes of } \sigma\}}_{N_{(\erbasis \times \erfrontier){\downarrow}}}|$        
                }
            \end{equation*}
            With 
            $\alpha = (\access^\Obs(q)\sigma \in \K \land (\delta^\Obs(q,\sigma){\downarrow} \lor \lambda^\Obs(q,\rho)=\er)) $ $\lor$\\ $(\access^\Obs(r)\sigma \in \K \land (\delta^\Obs(r,\sigma){\downarrow} \lor \lambda^\Obs(r,\rho')=\er))$.
          \item Next, we prove that this norm is bounded by $|\Sys|$ and $|I|$ and derive the maximum number of rule applications.
          \item We combine all previous parts to derive the complexity.
          \item We prove $\Hyp \sim_{\K} \Sys$ and $\lambda^{\Sys}(\sigma) = q_{\er}$ for all $\sigma \notin \K$
    \end{enumerate}

    \myparagraph{(1) Every rule application in $\soundlsharp$ increases norm $N_\sound(\Obs)$.} Let $\erbasis,\erfrontier,\Obs$ indicate the values before and $\erbasis',\erfrontier',\Obs'$ denote the values after the a rule application. 
    \begin{description}
        \item[Rebuilding] Let $q, q' \in \erbasis$, $i \in I$ with $(\delta^\Obs(q,i){\uparrow} \lor \delta^\Obs(q,i) \in \erfrontier)$, $\neg(q' \apart_{\sound,\K} \delta^{\Obs}(q,i))$ and $\access^\Obs(q), \access^\Obs(q)i \in P$. Moreover, 
        let $\sigma$ be the unique separating sequence $\mathsf{sep}(W,\delta^\R(\access^\Obs(q)),\delta^\R(\access^\Obs(r)))$. Suppose $\rho$ and $\rho'$ the longest prefix of $i\sigma$ and $\sigma$ (respectively) such that $\delta^\Obs(q,\rho){\downarrow}$ and $\delta^\Obs(q',\rho){\downarrow}$, then 
        ($\access^\Obs(q)i\sigma \in \K \land \delta^\Obs(q,i\sigma){\uparrow} \land \delta^{\Obs}(q,\rho)\neq\er)$ or $(\access^T(q')\sigma \in \K \land \delta^\Obs(q',\sigma){\uparrow} \land \delta^{\Obs}(q',\rho')\neq\er)$.
        
        The output queries $\textsc{OQ}_{\sound}(\access^{\Obs}(q)i\sigma)$ and $\textsc{OQ}_{\sound}(\access^{\Obs}(q')\sigma)$ are executed and are guaranteed to extend $\Obs$ using a similar reasoning as in \Cref{lem:mostly_learning}. Thus, 
        \[ N_{(\erbasis\times \erfrontier)\converges}(\Obs') \supseteq N_{(\erbasis\times\erfrontier)\converges}(\Obs) \cup \{(q',\delta^{\Obs}(q,i))\}. \]
        Additionally, an increasement $N_{\converges}(\Obs') \supseteq N_{\converges}(\Obs) \cup \{(q,i)\}$ is guaranteed if $\delta^\Obs(q,i){\uparrow}$ and $\access^\Obs(q)i \in \K$. 
        In some cases, we may find $N_{\apart_{\sound,\K}}$ increases because $q' \apart_{\sound,\K} \delta^\Obs(q,i)$.
        
        In all cases, $N_\sound(\Obs') \ge N_\sound(\Obs) + 1$.
        \item[Prioritized promotion] 
        Let $r \in \erfrontier$. Suppose $r$ is isolated and $\accessT(r) \in P$. State $r$ is moved from $\frontier$ to $basis$, which also implies $\erbasis' := \erbasis \cup \{r\}$, thus
        \begin{align*}
          N_Q(\Obs') &= |\erbasis'| \cdot (|\erbasis'|+1)
          = (|\erbasis|+1) \cdot (|\erbasis|+1+1)\\
                     &= (|\erbasis|+1)\cdot |\erbasis| + 2(|\erbasis|+1)
                       = N_Q(\Obs) + 2|\erbasis| + 2
        \end{align*}
        
        Because we move elements from the frontier to the basis, we find
        \[ N_{\apart_{\sound,\K}}(\Obs') ~~\supseteq~~ N_{\apart_{\sound,\K}}(\Obs) \setminus (\erbasis\times \{r\}) \] 
        \[ N_{(\erbasis\times\erfrontier)\converges}(\Obs) ~~\supseteq~~ N_{(\erbasis\times\erfrontier)\converges}(\Obs) \setminus (\erbasis\times \{r\})
        \]
        and thus
        \[ |N_{\apart_{\sound,\K}}(\Obs')| ~~\ge~~ |N_{\apart_{\sound,\K}}(\Obs)| - |\erbasis| \] 
        \[ |N_{(\erbasis\times \erfrontier)\converges}(\Obs')| ~~\ge~~ |N_{(\erbasis\times\erfrontier)\converges}(\Obs)| - |\erbasis|
        \]
        All other norms stay the same or increase. The total norm increases because
         \[ N_\sound(\Obs') \ge N_\sound(\Obs)+ 2\mid \erbasis\mid +~2~- \mid \erbasis\mid - \mid \erbasis\mid ~\ge N_\sound(\Obs)+ 2 \]  
        \item[Promotion] Analogous to the prioritized promotion rule.
        \item[Extension] If $\delta^{\Obs}(q,i){\uparrow}$ for some $q\in \basis$ and $i\in I$ and $\access^\Obs(q)i \in \K$, we execute $\textsc{OQ}_\sound(\mathsf{access}(q) \, i)$. This may result in either $\lambda^{\Obs}(q,i) = \er$ or $\lambda^{\Obs}(q,i) \neq \er$ as $\access^\Obs(q)i \in \K$ but in both cases,
          \[
            N_\downarrow(\Obs') = N_{\downarrow}(\Obs)\cup\{(q,i)\}.
          \]
          All other norms stay the same or increase. Thus $N_\sound(\Obs') \ge N_\sound(\Obs) + 1$.
        \item[Separation] Let $q\in \erfrontier$ and distinct $r,r'\in \erbasis$
          with $\neg(q\apart_{\sound,\K} r)$ and $\neg(q\apart_{\sound,\K} r')$. The algorithm performs 
          the query $\textsc{OQ}_{\sound}(\mathsf{access}(q)\sigma)$. 
          We perform a case distinction on $r \apart_{\sound,\K} r'$:
          \begin{itemize}
            \item $\sigma \vdash r \apart r'$. In particular this means $\delta^\Obs(r,\sigma){\downarrow}$ and $\delta^{\Obs}(r',\sigma){\downarrow}$. Moreover, there exists some prefix $\rho$ of $\sigma$ inequivalent to $\varepsilon$ such that $\access^\Obs(q)\rho \in \K$, otherwise $\neg(q\apart_{\sound,\K} r)$ or $\neg(q\apart_{\sound,\K} r')$. Thus, executing $\textsc{OQ}_{\sound}(\mathsf{access}(q)\sigma)$ extends the observation tree and leads to either $q \apart_{\sound,\K} r$ or $q \apart_{\sound,\K} r'$.
            \item $\last(\lambda^\Obs(r,\sigma)) \neq \er \land \access^\Obs(r')\sigma \notin \K$. We must have $\access^\Obs(q)\sigma \in \K$, otherwise $r \apart q$. Thus, $\textsc{OQ}_{\sound}(\mathsf{access}(q) \; \sigma)$ extends the observation tree and leads to either $q \apart r$ or $q \apart_{\sound,\K} r'$.
            \item $\last(\lambda^\Obs(r',\sigma)) \neq \er \land \access^\Obs(r)\sigma \notin \K$. Analogous.
          \end{itemize}
        In any case, $|N_{\apart_{\sound,\K}}(\Obs')| \ge |N_{\apart_{\sound,\K}}(\Obs)| + 1$. The other components of the norm stay unchanged or increase, thus the norm rises.
        \item[Equivalence] If the algorithm does not terminate after the equivalence rule, we show that the norm increases because a frontier state becomes isolated.
        Because we restrict the EQ to $\K$, it holds that $\rho \in \K$ such that $\lambda^\Hyp(\rho) \neq \lambda^\Sys(\rho)$. By \Cref{lem:cex_after_oq_sound}, after executing $\textsc{OQ}_\sound(\rho,\K)$, there exists some $\mu \in I^*$ such that $\lambda^\Hyp(\mu) \apart_{\sound,\K} \lambda^\Obs(\mu)$. Thus, calling $\textsc{ProcCounterEx}_\sound$ with $\sigma$ the shortest prefix of $\mu$ such that $\delta^\Hyp(\sigma) \apart_{\sound,\K} \delta^\Obs(\sigma)$ results in $\Hyp$ not longer being a hypothesis for $\Obs$ (\Cref{lem:proc_counter_sound}). $\Hyp$ can only no longer be a hypothesis for $\Obs$ if a frontier state becomes isolated. In particular, this implies 
        \[
            (q,r) \in N_{\apart_{\sound,\K}}(\Obs') \setminus N_{\apart_{\sound,\K}}(\Obs).
            \tag*{\qed}
        \]
        As all other components of the norm remain the same, $N_\sound(\Obs') \geq N_\sound(\Obs) + 1$.
        \end{description}

    \myparagraph{(2)+(3) Norm $N_\sound(\Obs)$ is bounded by $|\Sys|$ and $|I|$.}
    This part of the proof is evident from \Cref{lem:mostly_learning}, the complexity is $\bigO(kn^2 + n \log m)$ OQs and $n - 1$ EQs.

    \myparagraph{(4) $\Hyp \sim_{\K} \Sys$ and $\lambda^{\Sys}(\sigma) = q_{\er}$ for all $\sigma \notin \K$}
    From $\textsc{EQ}$ on $\K$, we know that after termination $\Hyp \sim_\K \Sys$ must hold. 
    Procedure $\CheckSoundness$ guarantees that $\K$ is sound for any $\Hyp$ used to pose an $\textsc{EQ}$ in the equivalence rule.
    From this, it follows that for any $\sigma \notin \K$, $\lambda^\Hyp(\sigma) = \er$.
    By construction of \textsc{BuildHypothesis}$_\er$, for any $\sigma \in I^*$ such that $\lambda^\Hyp(\sigma) = \er$, $\delta^\Hyp(\sigma) = q_\er$.
    Thus, $\Hyp \sim_{\K} \Sys$ and $\lambda^{\Sys}(\sigma) = q_{\er}$ holds for all $\sigma \notin \K$ after termination of $\soundlsharp$.
\qed
\end{proof}

\subsubsection{Proof of \Cref{lem:rebuilding}}~\newline
  Let $\Obs$ be the observation tree after executing $T_\K$. 
  Because $P$ is an \emph{accepting} state cover, it holds that $p \in \K$ for all $p \in P$. 
  By definition of \Cref{eq:t}, there exists a sequence $\sigma \in T_\K$ with prefix $p$ for all $p \in P$.
  Combining this with $p \in \K$ for all $p \in P$ and the fact that all sequences in $T_\K$ are executed with $\textsc{OQ}_{\sound}$, it must hold that $\delta^\Obs(p){\downarrow}$ for all $p \in P$. 

  For every distinct pair of prefixes $p, q \in P$, it holds that $\delta^\R(p) \apart \delta^\R(q)$ because $\R$ is minimal. 
  By definition of a separating family, there exists some $\sigma \in W_{\delta^\R(p)} \cap W_{\delta^\R(q)}$ with $\sigma \vdash \delta^\R(p) \apart \delta^\R(q)$. 
  When combining this with \Cref{eq:t}, it follows that there exists sequences $p\sigma, q\sigma \in T_\K$ with $\sigma \vdash \delta^\R(p) \apart \delta^\R(q)$. 
  It follows that $p\sigma \in \K$ and $q\sigma \notin \K$ or vice versa. 
  Assume $p\sigma \in \K$ and $q\sigma \notin \K$, the other case is analogous.
  From assumption $p\sigma \in \K$ and completeness of $\K$, it follows that $\last(\lambda^\Obs(p\sigma))  \neq \er$.
  By definition of $\sound$-apartness (\Cref{def:s_ap}), two states $p'$ and $q'$ are $\sound$-apart if $\sigma \vdash p' \apart q'$ or $\last(\lambda^{\Obs}(p',\sigma)) \neq \er$ and $\access^{\Obs}(q')\sigma \notin \K$. 
  The second case holds for our scenario as $\last(\lambda^\Obs(p\sigma)) = \last(\lambda^\Obs(\delta^\Obs(p),\sigma)) \neq \er$ and $\access^{\Obs}(q)\sigma = q\sigma \notin \K$. Thus, for every distinct pair of prefixes $p$ and $q$, we have $\delta^\Obs(p) \apart_{\sound,\K} \delta^\Obs(q)$.

  When combining this with the fact that $\delta^\Obs(p){\downarrow}$ for all $p \in P$, we find that all states reached by $p \in P$ are pairwise $\sound$-apart and thus pairwise $\sound\complete$-apart. \qed

\begin{lemma} \label{lem:proc_counter_complete}
    Let $\Obs$ be an observation tree, $\Hyp$ a Mealy machine constructed using \textsc{BuildHypothesis}\mbox{}$_\er(\Obs)$, $\K$ a reference and $\sigma \in I^*$ such that $\delta^\Hyp(\sigma) \apart_{\sound\complete,\K} \delta^\Obs(\sigma)$. Then \textsc{ProcCounterEx}\mbox{}$_{\sound\complete}(\Hyp,\sigma)$ using \textsc{OQ}\mbox{}$_\sound$ terminates and is correct, i.e., after termination $\Hyp$ is not a hypothesis for $\Obs$ anymore.
  \end{lemma}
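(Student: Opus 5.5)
The plan is to mirror the proof of \Cref{lem:proc_counter_sound}, replacing $\apart_{\sound,\K}$ by $\apart_{\sound\complete,\K}$ throughout. We treat the additional witness type of \Cref{def:sc_ap}, namely $\last(\lambda^\Obs(p,\eta)) = \er$ together with $\access^\Obs(q)\eta \in \K$, as a third kind of witness next to ordinary apartness and the $\sound$-witness. Set $q := \delta^\Hyp(\sigma)$ and $r := \delta^\Obs(\sigma)$, and fix $\eta \vdash q \apart_{\sound\complete,\K} r$. Termination goes as in Lemma~3.11 of~\cite{DBLP:conf/tacas/VaandragerGRW22}: each recursive call is made on a strictly shorter word, or on $\access(q')\sigma_2$ with $q'$ in the basis, which is closer to the frontier in the binary-search sense. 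So the real work is correctness. As in the earlier lemmas, we split into three cases according to where $r$ lies.

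\textbf{Case $r$ is an $\er$-successor of a basis state.} Here $r$ is reached by an $\er$-producing transition from the basis but is not in $\erbasis \cup \erfrontier$. Then $q = q_\er$ by \textsc{BuildHypothesis}$_\er$, and $\textsc{OQ}_\sound$ cut the trace at $r$. Hence $\lambda^\Obs(r,\eta)$ is undefined for nonempty $\eta$, and $q_\er$ only outputs $\er$. A $\complete$-witness would need $\last(\lambda(q_\er,\eta))=\er$ and $\access(r)\eta \in \K$. This is impossible if we assume, as holds for the observation tree under completeness and $\er$-persistence of $\K$, that $\access(r) \notin \K$ because $\lambda^\Obs(\access(r))$ ends in $\er$. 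I will state this as an invariant: no trace in $\Obs$ ending in $\er$ lies in $\K$. Otherwise the completeness-violation rule of \Cref{alg:sound_complete_lsharp} would already have fired.

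\textbf{Case $r \in \erbasis \cup \erfrontier$.} This is immediate. $\sound$-adequacy requires $r$ to be identified with $q$ under $\apart_{\sound\complete,\K}$, so the new apartness isolates $r$.

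\textbf{Case $r \notin \erbasis \cup \erfrontier$.} This is the main case. Decompose $\sigma = \sigma_1\sigma_2$ as in \Cref{alg:counterexamplebs}, pose $\textsc{OQ}_\sound(\access(q')\sigma_2\eta)$, and case on whether the query reaches the end and on the form of $\lambda^\Obs(r,\eta)$. The subcases already present in \Cref{lem:proc_counter_sound} are reused verbatim. The new subcase is when $\eta$ is a $\complete$-witness.
\begin{itemize}
\item Suppose $\last(\lambda^\Obs(r,\eta))=\er$ and $\access^\Obs(q)\eta\in\K$. Then $\access(q')\sigma_2\eta$ may or may not be in $\K$. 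If it is in $\K$, the query runs until either the end is reached or an $\er$ appears. If it ends with a non-$\er$ output, then $\sigma_2\eta \vdash q' \apart_{\sound\complete,\K} r'$ via the ordinary or $\sound$ witness, and we recurse on $\sigma_1$. If it ends with $\er$, then together with $\access(q)\eta \in \K$ we obtain a $\complete$-witness between $\delta^\Hyp(\access(q')\sigma_2)$ and $\delta^\Obs(\access(q')\sigma_2)$, and we recurse there. If it is not in $\K$, then since $\sigma_1\sigma_2\eta$ produced $\er$ in $\Obs$, the pair $q', r'$ is $\sound$-apart or $\complete$-apart using $\sigma_2\eta$. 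A careful check is needed here.
\item The symmetric orientation, where the $\er$ is on the hypothesis side, is handled the same way.
\end{itemize}

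The expected obstacle is exactly this bookkeeping. In each subcase we must confirm that some witness (ordinary, $\sound$ or $\complete$) transfers either to the pair $(\delta^\Hyp(\sigma_1),\delta^\Obs(\sigma_1))$ or to the pair $(\delta^\Hyp(\access(q')\sigma_2),\delta^\Obs(\access(q')\sigma_2))$. This requires combining the facts that $\Hyp$ agrees with $\Obs$ on basis traces, that $\access(q)$ and $\access(q')\sigma_2$ lead to the same hypothesis state (so their $\K$-membership may differ, and that difference is itself a witness), and that $\K$ is $\er$-persistent. Once every subcase yields a valid recursive argument, induction on the recursion, as in \Cref{lem:proc_counter_sound}, shows that a frontier state eventually becomes isolated under $\apart_{\sound\complete,\K}$. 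Hence $\Hyp$ is no longer a hypothesis for $\Obs$.
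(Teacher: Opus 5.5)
Your overall plan, rerunning the proof of \Cref{lem:proc_counter_sound} with $\apart_{\sound\complete,\K}$, is the paper's. However, you place the difficulty in the wrong spot and leave it unresolved. The paper's remark that ``some cases are not possible'' rests on an observation you are one step away from.

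$\textsc{OQ}_\sound$ never executes a word outside $\K$, so every trace in $\Obs$ lies in $\K$. Your own Case~1 invariant says no trace of $\Obs$ ending in $\er$ lies in $\K$; this is just completeness of $\K$ for $\Sys$, which the lemma leaves implicit but \Cref{thm:soundcomplete_learning} assumes. Together, these give that $\Obs$ contains no $\er$-output at all. The $\complete$-clause of \Cref{def:sc_ap} needs an observed $\last(\lambda^\Obs(p,\eta))=\er$, so it can never be witnessed. Hence $\apart_{\sound\complete,\K}$ coincides with $\apart_{\sound,\K}$ on $\Obs$. Your Case~1 and both orientations of your ``new subcase'' are vacuous, and the lemma is literally \Cref{lem:proc_counter_sound}. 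The paper uses the same fact in the extension and separation steps of the proof of \Cref{thm:soundcomplete_learning}.

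Without this observation your case analysis does not close, and the one concrete claim you make fails in general. Take the subcase $\last(\lambda^\Obs(r,\eta))=\er$, $\access(q)\eta\in\K$, $\access(q')\sigma_2\eta\notin\K$. Suppose this word leaves $\K$ exactly at its last symbol, and the truncated outputs of $q'$ agree with those of $r'$.
\begin{itemize}
\item Traces are cut after the first $\er$, so $r'$ shows $\er$ only at that last symbol. Then $\sigma_2\eta$ is not a $\sound$-witness for $q'$ versus $r'$, because $r'$ ends in $\er$. It is not a $\complete$-witness either, because $\access(q')\sigma_2\eta\notin\K$.
\item For the other branch, let $t=\delta^\Obs(q',\sigma_2)$. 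Its last output on $\eta$ is never observed. The only difference between $q$ and $t$ is then $\access(q)\eta\in\K$ versus $\access(t)\eta\notin\K$. This is not a witness in \Cref{def:sc_ap} unless some output of $q$ on $\eta$ happens to have been observed.
\end{itemize}
So neither recursive call is guaranteed a valid precondition. The fix is not more bookkeeping but the vacuity argument above.
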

  \begin{proof}
  This proof is similar to \Cref{lem:proc_counter_sound}, the only difference is that it uses $\sound\complete$-apartness which leads to some cases not being possible in the case distinctions.
  \end{proof}

\subsubsection{Proof of \Cref{thm:soundcomplete_learning}}
\begin{proof}
    We prove that $\sclsharp$ learns a Mealy machine $\Hyp$ such that $\Hyp \sim_{\K} \Sys$ and $\lambda^{\Sys}(\sigma) = q_{\er}$ for all $\sigma \notin \K$. First, we note that if $\K$ is complete, the violation rule will never be triggered. Thus, $\Hyp$ is learned within $\bigO(kn^2 + n \log m)$ OQs and at most $n - 1$ EQs if $\K$ is complete for $\Sys$:
    \begin{enumerate}
        \item Proving every rule application in $\sclsharp$ increases the following norm $N_{\sound}(\Obs)$.
        \begin{equation*}
            \scalebox{0.78}{
                $N_{\sound\complete}(\Obs) = \underbrace{|\erbasis|(|\erbasis| + 1)}_{N_Q(\Obs)} + |\underbrace{\{(q,i)\in \basis \times I\mid \delta^{\Obs}(q,i){\downarrow}\}}_{N_{\downarrow(\Obs)}}| + |\underbrace{\{(q,q') \in \erbasis \times \erfrontier |\ q \apart_{\sound\complete,\K} q'\}}_{N_{\apart_{\sound\complete,\K}}(\Obs)}|$ 
            }
            \end{equation*}
          \item Next, we prove that this norm is bounded by $|\Sys|$ and $|I|$ and derive the maximum number of rule applications.
          \item We combine all previous parts to derive the complexity using \Cref{lem:rebuilding}.
          \item We prove $\Hyp \sim_{\K} \Sys$ and $\lambda^{\Sys}(\sigma) = q_{\er}$ for all $\sigma \notin \K$.
    \end{enumerate}

    \myparagraph{(1) Every rule application in $\sclsharp$ increases norm $N_{\sound\complete}(\Obs)$.} Let $\erbasis,\erfrontier,\Obs$ indicate the values before and $\erbasis',\erfrontier',\Obs'$ denote the values after the a rule application. 
    \begin{description}
        \item[Promotion] 
        Let $r \in \erfrontier$ with $r$ is isolated. State $r$ is moved from $\frontier$ to $basis$, which also implies $\erbasis' := \erbasis \cup \{r\}$, thus
        \begin{align*}
          N_Q(\Obs') &= |\erbasis'| \cdot (|\erbasis'|+1)
          = (|\erbasis|+1) \cdot (|\erbasis|+1+1)\\
                     &= (|\erbasis|+1)\cdot |\erbasis| + 2(|\erbasis|+1)
                       = N_Q(\Obs) + 2|\erbasis| + 2
        \end{align*}
        
        Because we move elements from the frontier to the basis, we find
        \[ N_{\apart_{\sound\complete,\K}}(\Obs') ~~\supseteq~~ N_{\apart_{\sound\complete,\K}}(\Obs) \setminus (\erbasis\times \{r\}) \] 
        and thus
        \[ |N_{\apart_{\sound\complete,\K}}(\Obs')| ~~\ge~~ |N_{\apart_{\sound\complete,\K}}(\Obs)| - |\erbasis| \] 
        All other norms stay the same or increase. The total norm increases because
         \[ N_{\sound\complete}(\Obs') \ge N_{\sound\complete}(\Obs)+ 2\mid \erbasis\mid +~2~- \mid \erbasis\mid  ~\ge N_{\sound\complete}(\Obs)~+ \mid \erbasis\mid + ~2 \]  
        \item[Extension] If $\delta^{\Obs}(q,i){\uparrow}$ for some $q\in \basis$ and $i\in I$ and $\access^\Obs(q)i \in \K$, we execute $\textsc{OQ}_\sound(\mathsf{access}(q) \, i)$. This can only result in $\lambda^{\Obs}(q,i) \neq \er$ as $\access^\Obs(q)i \in \K$ and $\K$ is complete w.r.t. $\Sys$,
          \[
            N_\downarrow(\Obs') = N_{\downarrow}(\Obs)\cup\{(q,i)\}.
          \]
          All other norms stay the same or increase. Thus $N_{\sound\complete}(\Obs') \ge N_{\sound\complete}(\Obs) + 1$.
        \item[Separation] Let $q\in \erfrontier$ and distinct $r,r'\in \erbasis$
          with $\neg(q\apart_{\sound\complete,\K} r)$ and $\neg(q\apart_{\sound\complete,\K} r')$. The algorithm performs 
          the query $\textsc{OQ}_{\sound}(\mathsf{access}(q)\sigma)$. 
          We perform a case distinction on $r \apart_{\sound\complete,\K} r'$:
          \begin{itemize}
            \item $\sigma \vdash r \apart r'$. In particular this means $\delta^\Obs(r,\sigma){\downarrow}$ and $\delta^{\Obs}(r',\sigma){\downarrow}$. Moreover, there exists some prefix $\rho$ of $\sigma$ inequivalent to $\varepsilon$ such that $\access^\Obs(q)\rho \in \K$, otherwise $\neg(q\apart_{\sound\complete,\K} r)$ or $\neg(q\apart_{\sound\complete,\K} r')$. Thus, executing $\textsc{OQ}_{\sound}(\mathsf{access}(q) \sigma)$ extends the observation tree and leads to either $q \apart r$ or $q \apart r'$.
            \item $\last(\lambda^\Obs(r,\sigma)) \neq \er \land \access^\Obs(r')\sigma \notin \K$. We must have $\access^\Obs(q)\sigma \in \K$, otherwise $r \apart q$. Thus, $\textsc{OQ}_{\sound}(\mathsf{access}(q) \; \sigma)$ extends the observation tree and leads to either $q \apart r$ or $q \apart_{\sound,\K} r'$.
            \item $\last(\lambda^\Obs(r',\sigma)) \neq \er \land \access^\Obs(r)\sigma \notin \K$. Analogous.
            \item We do not consider $\last(\lambda^\Obs(r,\sigma)) = \er$ or $\last(\lambda^\Obs(r',\sigma)) = \er$ as these violate the completeness assumption.
          \end{itemize}
        In any case, $|N_{\apart_{\sound\complete,\K}}(\Obs')| \ge |N_{\apart_{\sound\complete,\K}}(\Obs)| + 1$. The other components of the norm stay unchanged or increase, thus the norm rises.
        \item[Equivalence] If the algorithm does not terminate after the equivalence rule, we show that the norm increases because a frontier state becomes isolated.
        Because we restrict the EQ to $\K$, it holds that $\rho \in \K$ such that $\lambda^\Hyp(\rho) \neq \lambda^\Sys(\rho)$. By \Cref{lem:cex_after_oq_sound}, after executing $\textsc{OQ}_\sound(\rho,\K)$, there exists some $\mu \in I^*$ such that $\lambda^\Hyp(\mu) \apart_{\sound,\K} \lambda^\Obs(\mu)$. Thus, calling $\textsc{ProcCounterEx}_\sound\complete$ with $\sigma$ the shortest prefix of $\mu$ such that $\delta^\Hyp(\sigma) \apart_{\sound\complete,\K} \delta^\Obs(\sigma)$ results in $\Hyp$ not longer being a hypothesis for $\Obs$ (\Cref{lem:proc_counter_complete}). $\Hyp$ can only no longer be a hypothesis for $\Obs$ if a frontier state becomes isolated. In particular, this implies 
        \[
            (q,r) \in N_{\apart_{\sound\complete,\K}}(\Obs') \setminus N_{\apart_{\sound\complete,\K}}(\Obs).
            \tag*{\qed}
        \]
        As all other components of the norm remain the same, $N_{\sound\complete}(\Obs') \geq N_{\sound\complete}(\Obs) + 1$.
        \end{description}

    \myparagraph{(2)+(3) Norm $N_{\sound\complete}(\Obs)$ is bounded by $|\Sys|$ and $|I|$.}
    This part of the proof is evident from \Cref{thm:sound_learning}, the output query complexity is $\bigO(kn^2 + n \log m)$ OQs and at most $n - 1$ EQs are required.
    
    \myparagraph{(4) $\Hyp \sim_{\K} \Sys$ and $\lambda^{\Sys}(\sigma) = q_{\er}$ for all $\sigma \notin \K$}
    From $\textsc{EQ}$ on $\K$, we know that after termination $\Hyp \sim_\K \Sys$ must hold. 
    Procedure $\CheckSoundness$ guarantees that $\K$ is sound for any $\Hyp$ used to pose an $\textsc{EQ}$ in the equivalence rule.
    From this, it follows that for any $\sigma \notin \K$, $\lambda^\Hyp(\sigma) = \er$.
    By construction of \textsc{BuildHypothesis}$_\er$, for any $\sigma \in I^*$ such that $\lambda^\Hyp(\sigma) = \er$, $\delta^\Hyp(\sigma) = q_\er$.
    Thus, $\Hyp \sim_{\K} \Sys$ and $\lambda^{\Sys}(\sigma) = q_{\er}$ holds for all $\sigma \notin \K$ after termination of $\sclsharp$.
\qed
\end{proof}

\subsubsection{Proof of \Cref{lem:nmino}}
\begin{proof}
  It remains to prove that at most $n - o$ EQs are required.
    From \Cref{lem:rebuilding}, we know that after executing a $T_\K$, minimal accepting state cover $P$ forms a basis. We perform a case distinction on whether $\R$ contains a sink state that is reached by only error-producing transitions.
    \begin{itemize}
      \item If $\R$ does not contain a sink state or there exists a transition that does not produce an error output with the sink state as destination, $P$ reaches all $o$ states. Because $\Sys$ contains at most $n$ states and the basis contains $o$ states after rebuilding, at most $n - o$ equivalence queries have to be posed.
      \item If $\R$ contains a sink state that is reached by only error-producing transitions, $P$ reaches $o - 1$ states. Because $\R$ contains a sink state, $\Sys$ must contain a sink state. This sink state is automatically added while constructing the hypothesis, thus at most $n - 1 - (o - 1) = n - o$ equivalence queries have to be posed.
    \end{itemize}
  \qed
\end{proof}
Before we prove that a $\Sys$ is guaranteed to be in the $k$-$A$-complete fault domain of a Mealy machine if $k$ is sufficiently large, $\K$ is sound and complete w.r.t. $\Sys$ and $A$ is a state cover, we formally define $k$-$A$-completeness~\cite{DBLP:conf/concur/VaandragerM25}.

\begin{definition} \label{def:k_compl}
  Let $k \in \mathbb{N}$ and $A \subseteq I^*$.
  A test suite $T$ is $k$-$A$-complete for $\Hyp$ if it is complete w.r.t. $\C^{k,A} = \{ \Sys \mid \text{there exists } \sigma, \mu \in A \text{ such that } \sigma \neq \mu$ $\text{and } \delta^\Sys(q_0,\sigma) \sim \delta^{\Sys}(q_0,\mu) \text{ or for all } q \in Q^{\Sys}, \text{ there exists } \sigma \in A 
  \text{ and } \rho \in I^*$ $\text{with}$ $|\rho| \leq k \text{ such that } \delta^{\Sys}(q_0, \sigma\rho) = q \}$.\looseness=-1
\end{definition}

\begin{lemma} \label{lem:k-A-proof}
    Let $\Sys$ be a $\er$-persistent Mealy machine, $\R$ a minimal DFA repesentation of a reference $\K$ and $A$ a minimal state cover of $\K$. If $\K$ is sound and complete w.r.t. $\Sys$ then $|\Sys| - |\R| \leq k$ implies $\Sys \in \C^{k,A}_{\er}$
\end{lemma}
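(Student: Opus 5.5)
The plan is to verify the second disjunct of the definition of $\C^{k,A}$: every state $q \in Q^\Sys$ is reached as $\delta^\Sys(q_0,\sigma\rho)$ for some $\sigma \in A$ and some $\rho$ with $|\rho| \leq k$. Membership in $\C_\er$ is immediate, since $\Sys$ is assumed $\er$-persistent.

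\emph{Step 1: the states reached by $A$ are pairwise distinct in $\Sys$.} Take distinct $\sigma,\mu \in A$. Because $\R$ is minimal, $\delta^\R(\sigma)$ and $\delta^\R(\mu)$ are apart, so there is a $w$ with (say) $\sigma w \in \K$ and $\mu w \notin \K$. Soundness of $\K$ gives $\last(\lambda^\Sys(\mu w)) = \er$. Completeness gives $\last(\lambda^\Sys(\sigma w)) \neq \er$. Hence $w$ separates $\delta^\Sys(\sigma)$ and $\delta^\Sys(\mu)$. This is the same argument as in \Cref{lem:rebuilding}, now carried out directly in $\Sys$ rather than in $\Obs$. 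Thus $D_0 = \{\delta^\Sys(\sigma) \mid \sigma \in A\}$ has $|A| = |\R|$ elements.

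\emph{Step 2: reach the remaining states by a layered BFS bound.} Let $D_j$ be the set of states reachable as $\delta^\Sys(\sigma\rho)$ with $\sigma \in A$ and $|\rho| \leq j$. Then $D_0 \subseteq D_1 \subseteq \cdots$. Since $q_0 \in D_0$ ($A$ is prefix-closed and contains $\varepsilon$) and $\Sys$ is reachable, the chain strictly grows until it equals $Q^\Sys$. Indeed, if $D_j = D_{j+1}$, then $D_j$ is closed under all transitions and therefore contains every reachable state. Consequently $D_j = Q^\Sys$ once $j \geq |\Sys| - |D_0| = |\Sys| - |\R|$. The hypothesis $|\Sys| - |\R| \leq k$ then gives $D_k = Q^\Sys$, which is exactly the required disjunct.

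The main obstacle is Step 1, and in particular the bookkeeping between $\K$ and $\R$. If $\R$ has a non-accepting sink, a minimal state cover of $\R$ includes a word reaching that sink. That word is mapped by $\Sys$ to an error state. This is harmless, since it remains distinct from the other cover states by the same separation argument: the sink rejects every suffix, while any accepting-reaching state accepts some suffix. I would still double-check this case explicitly.

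I would also confirm that $\Sys$ is taken to be minimal and reachable, as the paper assumes, so that the closure argument in Step 2 applies. Finally, I would note that the first disjunct of \Cref{def:k_compl} is never needed here.
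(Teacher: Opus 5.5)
Your Step 2 is fine; the layered closure argument is, if anything, more careful than the paper's ``longest prefix in $A$'' decomposition. Step 1, however, is false as stated, and so is your closing remark that the first disjunct of \Cref{def:k_compl} is never needed.

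The suffix $w$ supplied by minimality of $\R$ can be $\varepsilon$. In that case ``$\last(\lambda^\Sys(\sigma)) \neq \er$ and $\last(\lambda^\Sys(\mu)) = \er$'' only says something about the transitions \emph{entering} $\delta^\Sys(\sigma)$ and $\delta^\Sys(\mu)$. It does not give a word separating the two states. This happens exactly when $\R$ has an accepting state $d$ with residual $\{\varepsilon\}$, i.e.\ all of $d$'s transitions go to the sink (a non-error output after which everything errors). There, your ``any accepting-reaching state accepts some suffix'' only yields $\varepsilon$. The words of $A$ reaching $d$ and the sink both lead in $\Sys$ to states that output only $\er$, hence to the same state of the minimal $\Sys$.

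Concretely, take $I=\{a,b\}$ and $\Sys$ with $q_0 \xrightarrow{a/x} q_1 \xrightarrow{a/y} q_0$, $q_0,q_1 \xrightarrow{b/z} q_\er$, and $q_\er$ looping with output $\er$. Then:
\begin{itemize}
\item $\K = a^*\cup a^*b$, and $\R$ has three states;
\item $A=\{\varepsilon,b,ba\}$ and $\delta^\Sys(A)=\{q_0,q_\er\}$;
\item with $k=0=|\Sys|-|\R|$, the state $q_1$ is not of the form $\delta^\Sys(\sigma\rho)$ with $\sigma\in A$ and $|\rho|\le k$.
\end{itemize}
So the disjunct you set out to prove fails. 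The lemma survives only through the first disjunct, since $b\neq ba$ and $\delta^\Sys(b)=\delta^\Sys(ba)$. The paper's proof makes the same ``$A$ reaches $|\R|$ distinct states'' claim, so this gap is not specific to your write-up.

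The repair is a case split. Because $\K$ is sound and complete for the $\er$-persistent $\Sys$, it is closed under prefixes. Hence the sink is the only non-accepting state of $\R$.
\begin{itemize}
\item \textbf{Case (i): $\R$ has an accepting state $d$ all of whose transitions lead to the sink.} Then $A$ contains distinct words $u$ reaching $d$ and $v$ reaching the sink. Soundness and $\er$-persistence make both $\delta^\Sys(u)$ and $\delta^\Sys(v)$ output only $\er$ on every input word. So $\delta^\Sys(u)\sim\delta^\Sys(v)$, and the first disjunct holds.
\item \textbf{Case (ii): otherwise.} Any two distinct states of $\R$ are separated by a \emph{nonempty} suffix. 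Two accepting states are never separated by $\varepsilon$. An accepting state whose residual is not $\{\varepsilon\}$ accepts some nonempty word, which the sink rejects. For nonempty $w$, $\last(\lambda^\Sys(\sigma w))$ is the last output of $\lambda^\Sys(\delta^\Sys(\sigma),w)$. So your Step 1 argument now genuinely separates states, giving $|\delta^\Sys(A)|=|\R|$, and your Step 2 finishes the proof.
\end{itemize}
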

\begin{proof}
    Because we assume $\Sys$ is $\er$-persistent and minimal, the only property we have to prove to show that $\Sys \in \C^{k,A}_{\er}$ is: for all $q \in Q^{\Sys}$, there exists $\sigma \in A$ and $\rho \in I^*$ with $|\rho| \leq k$ s.t. $\delta^{\Sys}(q_0,\sigma\rho)=q$. We first prove that $A$ reached $|\R|$ distinct states in $\Sys$. 
    
    Because we assume $\K$ is sound and complete w.r.t. any $\Sys \in \C^{k,A}_{\er}$ and $\R$ is the minimal DFA representation of $\K$, it must hold that (1) $\delta^\R(\sigma) \in Q^\R_F$ iff $\last(\lambda^{\Sys}(q_0,\sigma)) \neq \er$ for all $\sigma \in I^*$. For any two states $q, r \in Q^{\R}$, there exists some $\sigma \in I^*$ such that $\delta^{\R}(q,\sigma) \in Q^{\R}_F$ iff $\delta^{\R}(r,\sigma) \notin Q^{\R}_F$ because $\R$ is minimal. 
    Because $A$ is a minimal state cover for $\R$, it holds that (2) for any distinct $\mu, \rho \in A$, there exists some $\sigma \in I^*$ such that $\delta^{\R}(\mu\sigma) \in Q^{\R}_F$ iff $\delta^{\R}(\rho\sigma) \notin Q^{\R}_F$.
    From (1) and (2), we derive that for any distinct $\mu, \rho \in A$, there exists some $\sigma \in I^*$ such that $\last(\lambda^{\Sys}(\mu\sigma)) \neq \er$ iff $\last(\lambda^{\Sys}(\rho\sigma)) = \er$. Thus, $A$ reaches precisely $|A| = |\R|$ states in $\Sys$.
    
    Next, we prove that there exists $\sigma \in A$ and $\rho \in I^*$ with $|\rho| \leq k$ s.t. $\delta^{\Sys}(q_0,\sigma\rho)=q$ for any $q \in Q^{\Sys}$. 
    Suppose $q \in Q^{\Sys}$ is reachable with some word $\psi \in I^*$ and let $\psi$ be minimal, i.e., there is no $\mu \in I^*$ with $|\mu| < |\psi|$ such that $\delta^{\Sys}(q_0,\mu)=q$. We decompose $\psi = \sigma\rho$ with $\sigma$ the longest prefix such that $\sigma \in A$. Because $\psi$ is minimal, the path from $\delta^{\Sys}(q_0,\sigma)$ to $\delta^{\Sys}(q_0,\sigma\rho)$ has no cycles. It follows that $|\rho| \leq k$ because $A$ reaches $|\R|$ states and $|\Sys| - |\R| \leq k$. If $\rho$ would be longer, then either there $\sigma$ is not the longest prefix in $A$ or $\psi$ is not minimal. As this construct works for any $q \in Q^{\Sys}$, we are done. 
\qed
\end{proof}

\section{Model and Reference Information} \label{app:models}
\subsubsection{Parameters.}
Randomized Wp-method generates test words on-the-fly by 
\begin{enumerate}
    \item Selecting a prefix $p$ from the state cover of $\Hyp$, 
    \item Performing $k$ steps where $k$ follows a geometric distribution, and 
    \item Appending a suffix to verify whether the expected state is reached.
\end{enumerate}
We adapt this implementation to truncate test words according to $T_\er(\Hyp)$ and $T_\sound(\Hyp,\K)$. 
We set the minimal steps to 1 and the expected length to 5. 
Mixture of Experts (\moe) also uses the randomized Wp-method, exploration parameter $\gamma = 0.2$, and starts sampling experts when $|\Hyp| \geq 5$. Besides stopping early when the hypothesis and target are equivalent, for the sound algorithms $\soundlsharp$ and $\sclsharp$ we stop when the hypothesis and target are equivalent on $\K$, the language of the reference model. Note that the total symbols sometimes exceeds $10^6$, this is can be because the budget was exceed during learning or testing. We only check whether the budget has been exceeded during learning after a hypothesis is constructed and if the budget was exceeded, we return the previous hypothesis. This can sometimes lead to total symbols exceeding $10^6$. During testing, we check after every test query whether the budget has been exceeded.

\subsubsection{Error outputs.}
For the TLS benchmark, we set $\er$ to be the set of outputs that contain substring \emph{Alert Fatal}, \emph{Decryption failed} or \emph{Connection Closed} which should all lead to closed connection state according to the specification.\footnote{\url{https://datatracker.ietf.org/doc/html/rfc5246\#}}
For the ASML benchmark, we set $\er$ to be the output \emph{error}. The monitor benchmark originally learns DFA of which some rejecting states are due to an error and other due to the risk being to high, we consider the input to produce an error if the horizon is exceeded or if the input is not enabled according to the HMM.

\subsubsection{HMM Monitors.}
To obtain the HMM monitors, we learn the model using the code of \cite{DBLP:conf/atva/MaasJ25} and a timeout of one hour. The learned monitor is not necessarily the uniquely correct monitor for the HMM because there is some slack between the safe and unsafe threshold.

\subsubsection{Obtaining sound and complete references.}
To obtain sound and complete references for TLS and ASML, we construct a DFA that accepts precisely the non-error producing words by iterating over the transitions. 
To obtain the references for the monitors, we use a mapper in the code of \cite{DBLP:conf/atva/MaasJ25} that returns \emph{False} if the input exceeds the horizon or cannot occur according to the HMM and \emph{True} otherwise.
Instead of learning the reference model, it could be extracted from the HMM.
These references are all sound and complete for a monitor that is correct for the HMM. However, they are not necessarily sound and complete for intermediate hypothesis when learning the monitor. Because a timeout of 1 hour is not enough to learn a full model, the obtained references are not always sound and complete w.r.t. the learned monitor that we consider the target model in this paper. All the airport models were learned completely, for these models we use the reference extracted from the HMM. For the other models, we extract a sound and complete model from the (known) SUL. 

\subsubsection{Obtaining Chat-GPT references.}
To obtain $P_{GPT}$ for Experiment 2, we used a fresh ChatGPT account and posed the following prompt:
\textit{
    Make a DFA (in .dot format and readable with graphviz) of the TLS 1.2 protocol client side according to the specifications that can be found in RFC 5246. Only transitions with the following inputs are allowed {"ApplicationData", "ApplicationDataEmpty", "CertificateRequest", 
    "ChangeCipherSpec", "EmptyCertificate", "Finished", 
    "HeartbeatRequest", "HeartbeatResponse", "ServerCertificate", 
    "ServerHelloDHE", "ServerHelloDone", "ServerHelloRSA"}. 
    Each state should have a transition for every input. If the input leads to an error according to the specification, the transition should go to a sink state. This sink state is the only state with shape 'circle', all other states should have shape 'double circle'.
}
Afterwards, small changes were made to ensure that the file is readable for AALpy. For 2/18 models, the chatGPT generated model was sound.

\subsubsection{Obtaining passive references.}
To obtain passive reference models for the TLS models, we use 10000 randomly generated traces of length between 20 and 30 and use the AALpy method \textsc{Run\_GSM} to create a Mealy machine. We then transform this Mealy machine into a reference that accepts precisely the non-error producing words. All non-defined transitions are considered non-error producing self-loops to ensure maximal soundness. For 5/18 models, the passively learned model was sound.

\begin{table}[htbp]
    \centering
    \caption{Information on the ASML benchmark and associated references.}

\end{table}

\begin{table}[htbp]
    \centering
    \caption{Information on the monitor benchmark and associated references.}
    %
\end{table}

\begin{table}[htbp]
    \centering
    \caption{Information on the TLS benchmark and references.}
    \label{tab:soundcompltable}
    \resizebox{0.9\textwidth}{!}{
%
}
\end{table}

\begin{table}[htbp]
    \centering
    \caption{(Continued) Information on the TLS benchmark and references.}
    \label{tab:soundcompltable2}
    \resizebox{0.85\textwidth}{!}{
%
}
\end{table}

\newpage
\section{Full Results}
\begin{table}[htbp]
    \caption{Results Experiment 1 for the ASML benchmark. The last two columns show the median and standard deviation for the total number of symbols.}
    \resizebox{.99\textwidth}{!}{
        \centering
        %
        }
    \end{table}

\begin{table}[htbp]
    \caption{Results Experiment 1 for the monitor benchmark. The last two columns show the median and standard deviation for the total number of symbols.}
    \resizebox{.99\textwidth}{!}{
        \centering
        %
}
\end{table}

\begin{table}[htbp]
    \caption{Results Experiment 2 for the TLS benchmark. The last two columns show the median and standard deviation for the total number of symbols.}
    \centering
    \resizebox{.85\textwidth}{!}{
        
        %
}
\end{table}

\begin{table}[htbp]
    \caption{Results Experiment 3 for the TLS benchmark. The last two columns show the median and standard deviation for the total number of symbols.}
    \centering
    \resizebox{.9\textwidth}{!}{
    %
}
\end{table}

\begin{table}[htbp]
    \caption{(Continuation) Results Experiment 3 for the TLS benchmark. The last two columns show the median and standard deviation for the total number of symbols.}
    \centering
    \resizebox{.9\textwidth}{!}{
    %
}
\end{table}}

\end{document}